\documentclass{article}
\usepackage[numbers]{natbib}

\usepackage{PRIMEarxiv}

\usepackage[utf8]{inputenc} 
\usepackage[T1]{fontenc}    
\usepackage{hyperref}       
\usepackage{url}            
\usepackage{booktabs}       
\usepackage{amsfonts}       
\usepackage{nicefrac}       
\usepackage{microtype}      
\usepackage{fancyhdr}       
\usepackage{graphicx}       
\usepackage{amsmath} 
\usepackage{amsthm}

\usepackage{IEEEtrantools}
\usepackage{bbm}
\usepackage{algorithm}
\usepackage{algpseudocode}
\usepackage{booktabs}
\usepackage{multirow}
\usepackage{xurl}
\usepackage{longtable}
\usepackage{booktabs}
\usepackage{placeins}

\newcommand*\diff{\mathop{}\!\mathrm{d}}

\DeclareMathOperator{\argmin}{argmin}

\newtheorem{definition}{Definition}
\newtheorem{theorem}{Theorem}

\newtheorem{assumption}{Assumption}
\newtheorem{proposition}{Proposition}
\newtheorem{remark}{Remark}

\newtheorem{problem}{Problem}

\title{Kernel Affine Hull Machines as Compute-Efficient Encoders for Frozen Semantic Spaces
}

\author{
Mohit Kumar \\
University of Rostock, Germany\\ Software Competence Center Hagenberg GmbH, Austria \\
  Hagenberg, Austria \\
 \texttt{mohit.kumar@uni-rostock.de} 
\And
  Somayeh Kargaran, Bernhard A. Moser, Manuela Geiß  \\
  Software Competence Center Hagenberg GmbH \\
  Hagenberg, Austria \\
  \texttt{\{somayeh.kargaran,bernhard.moser,manuela.geiss\}@scch.at} \\
}

\begin{document}
\maketitle

\begin{abstract}
{\bf Background:}
Transformer-based semantic encoders are effective, but when the teacher representation space and corpus index are fixed, online query encoding can become the recurring deployment bottleneck. This raises the methodological question whether repeated neural inference can be replaced by a lighter estimator with analytically explicit approximation behavior.

{\bf Objectives:}
We study Kernel Affine Hull Machines (KAHMs) as mathematically explicit, compute-efficient encoders for frozen teacher representation spaces. Retrieval is used only as a proof-of-principle instance of the broader fixed-space encoding problem.

{\bf Methods:}
We formulate semantic encoding as conditional-mean estimation. The teacher representation is modeled as a noisy prototype mixture with posterior cluster-probability weights. KAHM geometry estimates these weights from lexical features in an explicitly identified RKHS hypothesis space, giving non-asymptotic posterior-approximation control. Semantic prototypes are then refined by normalized least-mean-squares updates on noisy teacher embeddings, yielding an error decomposition into posterior-approximation, finite-sample/generalization, and teacher-noise terms.

{\bf Results:}
On a controlled Austrian-law benchmark in a frozen Mixedbread semantic corpus space, KAHM achieves the strongest teacher-space reconstruction and principal rank-sensitive retrieval performance among five evaluation-matched learned adapters. Relative to direct transformer encoding on the same frozen index, it reduces online per-query time from $800.663$\,ms to $93.834$\,ms, an $8.53\times$ CPU speedup.

{\bf Conclusions:}
KAHM provides an analytically explicit, lightweight estimator for fixed-teacher lexical-to-semantic adaptation with interpretable error terms. The retrieval experiment supports this methodological claim as a controlled proof of principle.
\end{abstract}


\section{Introduction}\label{sec_introduction}
Transformer-based retrieval has reshaped semantic search by enabling documents and queries to be mapped into shared vector spaces in which relevance can be approximated by vector similarity. Representative developments include general-purpose sentence encoders such as Sentence-BERT \cite{sentence_bert}, dual-encoder dense retrieval systems such as DPR \cite{karpukhin2020dpr}, and more expressive late-interaction or learned sparse retrieval models such as ColBERT \cite{khattab2020colbert} and SPLADE \cite{formal2021splade}. These developments build on the transformer architecture \cite{vaswani2017attention} and the bidirectional pretraining paradigm introduced by BERT \cite{devlin2019bert}. At the same time, the current text-embedding landscape has become broader and stronger: modern embedding families such as E5 \cite{wang2022e5}, INSTRUCTOR \cite{su2022instructor}, and M3-Embedding \cite{chen2024m3} demonstrate that a single-vector representation can often transfer well across retrieval, clustering, and related tasks, while benchmarks such as BEIR \cite{thakur2021beir} and MTEB \cite{muennighoff2023mteb} have made it possible to compare such models across heterogeneous domains and task types. A consistent lesson from this literature is that there is no universally dominant retrieval architecture: stronger semantic models often improve relevance, but the gains come with nontrivial trade-offs in latency, compute, memory, and deployment complexity \cite{thakur2021beir,muennighoff2023mteb}.

Recent efficient retrieval systems broaden that retrieval landscape further. Strong compact or generalizable dual encoders \cite{izacard2022unsupervised,ni2022large}, efficient late-interaction systems \cite{santhanam2022colbertv2}, and newer general-purpose embedding models \cite{lee2025nvembed} provide important points of reference for deployment-oriented retrieval. The present paper addresses a different but broadly relevant systems question: when a strong representation space has already been fixed by a teacher model, can repeated online neural query encoding be replaced by a substantially lighter adapter whose ingredients, approximation behavior, and error terms remain analytically explicit? That question matters beyond the particular legal benchmark studied here. In modern AI systems, strong neural representations are often computed offline and then reused under deployment constraints in which test-time latency, energy, memory footprint, hardware simplicity, auditability, or implementation stability become first-order considerations. This pattern appears not only in retrieval, but more generally whenever a frozen representation space is treated as infrastructure and the remaining problem is to map inexpensive runtime features into that space. From that perspective, the paper is about a modular AI design problem: how to preserve the decision-relevant benefits of a strong neural teacher while replacing repeated online inference by a lighter estimator whose behavior is mathematically inspectable.

A central systems issue in semantic retrieval is the asymmetry between offline corpus processing and online query processing. In many practical deployments, corpus representations can be precomputed and indexed offline, whereas every incoming query must still be encoded at serving time. This makes query-side encoding the latency-critical part of the pipeline. Existing efficiency-oriented alternatives improve this trade-off but generally do not eliminate online neural inference: late-interaction models still require transformer-based query encoding \cite{khattab2020colbert,santhanam2022colbertv2}, learned sparse retrievers still construct query representations through a neural encoder \cite{formal2021splade}, and distillation or compression methods replace a large teacher by a smaller student that must nevertheless be executed online \cite{hinton2015distilling,sanh2019distilbert,jiao2020tinybert}. The question studied here is whether repeated online neural query encoding can be replaced by a lightweight lexical-to-semantic adapter under a fixed teacher space and serving contract.

This paper studies precisely that problem. We assume that a strong teacher encoder has already produced a frozen semantic embedding space and that the corpus has already been indexed in that space. At deployment time, rather than rerunning the teacher for every query, we ask whether the query embedding can be estimated directly from inexpensive lexical features. The goal is therefore different from standard student-network compression: we seek a deployment-time encoder whose runtime path is lightweight, whose mathematical structure is explicit, and whose approximation behavior can be analyzed directly in terms of estimator design and error decomposition.

Our starting point is the theory of kernel methods in reproducing kernel Hilbert spaces (RKHS), which provides a mathematically explicit framework for representation, regularization, and approximation \cite{aronszajn1950theory,hofmann2008kernel,scholkopf2002learning}. Within this framework, Kernel Affine Hull Machines (KAHMs) were introduced as geometrically inspired kernel machines that learn bounded structures in affine hulls via RKHS constructions \cite{KAHM}. Subsequent work showed that KAHM-based models admit gradient-free posterior-estimation procedures together with explicit guarantees in collaborative and federated settings \cite{kumar2024geometricallyinspiredkernelmachines,kumar2025operatortheoreticframeworkgradientfreefederated}. In the present paper, those earlier results matter primarily as motivation for model selection: they make KAHMs plausible candidates for a deployment-time encoder whose posterior-weight estimation is gradient-free, whose overall training pipeline is backpropagation-free, and whose inference path is lightweight and analytically tractable. Here we test whether the same geometric machinery can serve as an effective query-side adapter into a frozen semantic space.

The scientific perspective of the paper is methodological. We formulate deployment-time semantic encoding from lexical query features into a fixed teacher space as a conditional-mean estimation problem. The target semantic vector admits a noisy prototype-mixture representation in which semantic cluster prototypes are weighted by posterior cluster probabilities \cite{bishop2006prml}. KAHM geometry is used to estimate these posterior weights from lexical features in an explicitly identified RKHS hypothesis space without backpropagation through a parameterised neural network, while the semantic prototypes themselves are refined from noisy teacher embeddings by an explicit normalized least-mean-squares update. This yields an analytically explicit encoder together with an end-to-end encoding-error decomposition into posterior-approximation, finite-sample/generalization, and teacher-noise terms. The resulting online path is therefore reduced to inexpensive lexical feature extraction, evaluation of a small number of geometric quantities, formation of a weighted prototype mixture, and retrieval in the frozen dense index, with no transformer forward pass and no learned neural gating network at serving time. 
\begin{remark}[Target Deployment Regime]
The paper targets a deployment regime that is narrower than the full retrieval landscape but broad enough to be methodologically important: a strong semantic teacher space is already
fixed, corpus representations can be computed offline, query-time latency is the principal serving bottleneck, and sufficient in-domain supervision is available to learn a lightweight query-side adapter. Under this regime, the scientific question is whether repeated online neural query encoding can be replaced by an analytically explicit lexical-to-semantic adapter while keeping the teacher space, the corpus index, and the serving contract fixed. This restriction is deliberate rather than incidental. It isolates the query-adaptation problem from confounding changes in teacher model quality, corpus representation, interaction mechanism, or index structure. As a result, performance differences within the evaluation-matched comparison class can be interpreted primarily as evidence about the quality of the query-side adapter itself.
\end{remark}
\begin{remark}[Central Claim and Scope]
The central claim of this paper is that, when a strong semantic teacher space is already fixed and corpus representations can be computed offline, deployment-time semantic encoding can be treated as an explicit estimation problem rather than as repeated neural inference. In this setting, KAHM geometry provides a lightweight posterior-weight estimator for semantic prototype mixtures, admits an interpretable approximation analysis, and yields an end-to-end encoding-error decomposition when combined with NLMS prototype refinement. The Austrian-law retrieval benchmark is therefore used as a proof-of-principle instantiation of this broader fixed-representation encoding problem: it tests whether the analytically explicit estimator preserves decision-relevant behavior under a controlled serving contract.
\end{remark}

The contributions of the paper are fourfold: 1) We formulate deployment-time lexical-to-semantic encoding into a fixed teacher space as a conditional-mean estimation problem. This recasts a practical serving bottleneck in semantic retrieval as a precise estimation problem and makes explicit the broader AI setting in which a frozen neural representation space is reused under runtime constraints. 2) We derive a KAHM-based estimator for semantic-cluster posterior weights from lexical features. This turns query adaptation into a mathematically analyzable posterior-estimation problem in an explicitly identified RKHS hypothesis space and yields non-asymptotic control of the posterior-approximation step. 3) We combine this posterior estimator with an NLMS-based refinement of semantic prototypes learned from noisy teacher embeddings and obtain an explicit end-to-end decomposition of the total encoding error into posterior-approximation, finite-sample/generalization, and teacher-noise terms. The resulting method is therefore not only deployment-oriented but also analytically interpretable. 4) We validate the resulting encoder on a controlled Austrian-law retrieval benchmark under a deliberately matched deployment contract. Within a comparison class that holds fixed the lexical front end, the teacher space, the frozen corpus index, the supervision granularity, and the serving rule, the proposed KAHM encoder achieves the strongest teacher-space reconstruction and the strongest principal rank-sensitive retrieval performance among the learned adapters considered here while substantially reducing online query time relative to direct transformer query encoding.

The remainder of the paper is organized as follows. Section~\ref{sec_related_work} reviews related work. Section~\ref{sec_background} introduces the notation, problem formulation, and KAHM definitions required in the sequel. Section~\ref{sec_encoding_solution} develops the encoding solution, including the hypothesis-space construction, the KAHM-based posterior estimator, the adaptive prototype estimation procedure, and the theoretical analysis. Section~\ref{sec_experiments} describes the controlled retrieval instantiation and reports reconstruction, retrieval, and runtime results. Section~\ref{sec_conclusion} concludes with the scope of the empirical claim and directions for broader validation.

\section{Related Work and Positioning}\label{sec_related_work}
This section positions the paper relative to efficient retrieval, neural compression, kernel conditional-mean methods, prior KAHM work, legal retrieval, and reasoning-oriented retrieval. The goal is not to provide a complete survey of retrieval architectures, but to isolate the fixed-teacher query-adaptation setting addressed in this paper.
\subsection{Efficient Retrieval and the Query-Encoding Bottleneck}
Dense and neural retrieval methods map queries and documents into vector spaces in which relevance can be estimated by vector similarity or learned interaction functions. Dual-encoder retrieval models such as DPR~\cite{karpukhin2020dpr}, sentence encoders such as Sentence-BERT~\cite{sentence_bert}, and modern embedding families such as E5~\cite{wang2022e5}, INSTRUCTOR~\cite{su2022instructor}, and M3-Embedding~\cite{chen2024m3} have shown that single-vector representations can support effective retrieval and transfer across tasks. Benchmark suites such as BEIR~\cite{thakur2021beir} and MTEB~\cite{muennighoff2023mteb} have further emphasized that retrieval quality must be interpreted across heterogeneous domains, tasks, and deployment constraints. A complementary line of work improves retrieval efficiency by modifying the representation or interaction mechanism. Late-interaction systems such as ColBERT~\cite{khattab2020colbert} and ColBERTv2~\cite{santhanam2022colbertv2} reduce some of the cost of full cross-attention while retaining expressive query--document interactions. Learned sparse retrieval models such as SPLADE~\cite{formal2021splade} construct sparse lexical-semantic representations through neural encoders. Compact dense retrievers and generalizable embedding models further improve the quality--efficiency trade-off~\cite{izacard2022unsupervised,ni2022large,lee2025nvembed}. These approaches are important points of reference, but they usually retain an online neural query-encoding step. The present paper studies a narrower deployment question: when the semantic teacher space and corpus index are already fixed, can repeated online neural query encoding be replaced by a substantially lighter estimator whose structure and approximation terms remain explicit?
\subsection{Distillation, Compression, and Fixed-Teacher Adaptation}
Knowledge distillation and model compression replace a large model by a smaller learned model, often with substantial improvements in latency and deployability~\cite{hinton2015distilling,sanh2019distilbert,jiao2020tinybert}. In retrieval, distillation can transfer ranking behavior, embedding geometry, or supervision signals from a stronger model to a smaller student. However, the deployed student is still typically a parameterized neural model executed at query time.
The present paper differs from standard compression in the object being learned and in the form of the deployed estimator. We do not train a smaller transformer or neural student to imitate the teacher. Instead, we keep the teacher representation space fixed, keep the corpus index fixed, and learn a lexical-to-semantic adapter whose online path consists of inexpensive lexical feature extraction, KAHM-induced posterior-weight estimation, and a semantic prototype mixture. Thus, the comparison of interest is not against the full universe of efficient retrieval architectures, but against methods that operate under the same fixed-space serving contract.
\subsection{Kernel and Conditional-Mean Perspectives}
The mathematical formulation of this paper is closest to conditional-mean estimation. Given an inexpensive lexical representation $x$, the desired semantic encoding is treated as an estimator of the conditional mean of the teacher representation $z$. Kernel methods and RKHS are a natural choice for such problems because of the available theory on function spaces, regularization, approximation, and generalization properties~\cite{aronszajn1950theory,scholkopf2002learning,hofmann2008kernel}. Related RKHS perspectives also appear in kernel embeddings of conditional distributions and nonparametric inference with positive definite kernels~\cite{song2013kernelembeddings,fukumizu2013kernelbayes}. The present work uses this perspective in a deployment-specific way, where the conditional mean is the semantic teacher representation that would otherwise be computed by a neural encoder at query time. The prototype-mixture formulation then separates the problem into two interpretable parts: estimating semantic-cluster posterior weights from lexical features and estimating semantic prototypes in the frozen teacher space. This decomposition is what allows the final error analysis to distinguish posterior-approximation, finite-sample/generalization, and teacher-noise terms.
\subsection{Prior KAHM Work and Novelty of the Present Paper}
KAHMs were introduced as geometrically inspired kernel machines based on learning data points in RKHS~\cite{KAHM}. Subsequent work studied related KAHM-based learning procedures, including gradient-free posterior-estimation and collaborative or federated learning settings~\cite{kumar2024geometricallyinspiredkernelmachines,kumar2025operatortheoreticframeworkgradientfreefederated}. These earlier results motivate KAHMs as candidates for lightweight geometric estimation, but they do not by themselves solve the fixed-teacher semantic encoding problem studied here. The novelty of the present paper is therefore not the KAHM definition alone. The new contribution is the use of KAHM geometry as a semantic encoder for frozen neural representation spaces. Specifically, we formulate fixed-teacher lexical-to-semantic encoding as conditional-mean estimation, derive KAHM-induced posterior weights for semantic prototype mixtures, refine the prototypes by an explicit NLMS procedure from noisy teacher embeddings, and obtain an end-to-end encoding-error decomposition. The Austrian-law retrieval task is then used as a controlled proof-of-principle instance of this broader methodological claim.

\subsection{Relation to Legal and Reasoning-Oriented Retrieval}
The empirical study uses Austrian-law retrieval because it provides a controlled setting in which documents can be embedded offline, query-time encoding is the latency-critical component, and evaluation can be performed under a fixed teacher space and serving rule. Recent work on legal retrieval and legal question answering studies complementary goals, including stronger legal ranking models, legal reasoning behavior, and domain-specific retrieval pipelines~\cite{feng2024legal,deng2024learning,gao2024enhancinglegal,zheng2025reasoning}. The present paper is not primarily a contribution to legal retrieval as a domain application. Rather, legal retrieval is used as a structured testbed for the fixed-representation encoding problem: whether an analytically explicit KAHM adapter can preserve decision-relevant behavior in a frozen semantic space while avoiding repeated online transformer inference. 
\begin{table}[!htbp]
\centering
\small
\renewcommand{\arraystretch}{1.12}
\setlength{\tabcolsep}{4pt}
\caption{Positioning of the present paper relative to prior related work.}
\label{tab:positioning-novelty}
\begin{tabular}{@{}p{0.22\linewidth}p{0.23\linewidth}p{0.23\linewidth}p{0.28\linewidth}@{}}
\toprule
Reference class & What it established & Remaining gap for this paper & Contribution of the present paper \\
\midrule
Prior KAHM-based learning~\cite{KAHM}
&
Introduced the geometric kernel-machine construction based on learning in RKHS.
&
Does not formulate semantic encoding into a frozen neural teacher space.
&
Uses KAHM geometry as the posterior-weighting mechanism in a lexical-to-semantic encoder.
\\
\addlinespace[2pt]

Prior KAHM posterior estimation and gradient-free learning~\cite{kumar2024geometricallyinspiredkernelmachines,kumar2025operatortheoreticframeworkgradientfreefederated}
&
Showed that KAHM-induced geometric quantities can support posterior-like estimation without backpropagation through a neural model.
&
Does not address prototype-mixture reconstruction of teacher embeddings or query-side semantic adaptation.
&
Derives KAHM-induced semantic-cluster posterior weights for a fixed-teacher prototype-mixture estimator.
\\
\addlinespace[2pt]

Prior collaborative and federated KAHM settings~\cite{kumar2024geometricallyinspiredkernelmachines,kumar2025operatortheoreticframeworkgradientfreefederated}
&
Studied KAHM learning in distributed or collaborative regimes.
&
Does not analyze deployment-time replacement of repeated neural query encoding under a frozen corpus index.
&
Studies a fixed-space serving contract in which the corpus index and teacher representation space are held fixed.
\\
\addlinespace[2pt]

Classical kernel and RKHS methodology~\cite{aronszajn1950theory,scholkopf2002learning,hofmann2008kernel}
&
Provides general tools for explicit function spaces, regularization, and approximation analysis.
&
Does not specify a compute-efficient semantic encoder or an operational retrieval-serving path.
&
Connects RKHS-based posterior approximation to semantic prototype mixtures and query-side encoding.
\\
\addlinespace[2pt]

Conditional-mean and kernel-embedding methodology~\cite{song2013kernelembeddings,fukumizu2013kernelbayes}
&
Motivates prediction as conditional-mean or conditional-distribution estimation.
&
Does not by itself yield a KAHM-based deployment-time adapter into frozen neural embeddings.
&
Formulates fixed-teacher semantic encoding as conditional-mean estimation and decomposes the error into posterior-approximation, finite-sample/generalization, and teacher-noise terms.
\\
\addlinespace[2pt]

Efficient retrieval and neural compression
&
Improves retrieval quality--efficiency trade-offs through compact encoders, sparse representations, late interaction, or distillation.
&
Usually retains an online neural query encoder or changes the retrieval architecture being served.
&
Keeps the teacher space, corpus index, lexical front end, supervision granularity, and serving rule fixed to isolate the query-adaptation problem.
\\
\bottomrule
\end{tabular}
\end{table}
\FloatBarrier

\section{Mathematical Formulations}\label{sec_background}
This section fixes the notation needed for the theoretical analysis. Readers primarily interested in the retrieval application may read $x$ as the inexpensive lexical query representation, $z$ as the frozen teacher embedding, $y$ as the semantic-cluster indicator, and $q$ as the law/domain index. The measurable-space formulation is included only to state the later approximation and generalization results precisely.
\subsection{Notations}
We use boldface font to denote the matrices. The following notations are introduced:
\begin{itemize}
\item Let $p,n,N,c,C,q,Q,k,K \in \mathbb{Z}_{+}$ be the positive integers. 
\item For a scalar $a\in \mathbb{R}$, $|a|$ denotes its absolute value. For a set $A$, $|A|$ denotes its cardinality. For a real matrix $\mathbf{X}$, $\mathbf{X}^T$ is the transpose of $\mathbf{X}$. 
\item For a vector $y \in \mathbb{R}^C$, $\| y\|$ denotes the Euclidean norm and $y_j$ (and also $(y)_j$) denotes the $j^{th}$ element. For a matrix $\mathbf{X}\in \mathbb{R}^{N \times n}$, $\|\mathbf{X}\|_2$ denotes the spectral norm, $\| \mathbf{X} \|_F$ denotes the Frobenius norm, $(\mathbf{X})_{i,:}$ denotes the $i^{th}$ row, $(\mathbf{X})_{:,j}$ denotes the $j^{th}$ column, and $(\mathbf{X})_{i,j}$ denotes the $(i,j)^{th}$ element.  
\item The square brackets are used to represent the construction of a matrix from columns e.g. $\left[\begin{IEEEeqnarraybox*}[][c]{,c/c/c,} x^1 & \cdots & x^N \end{IEEEeqnarraybox*} \right]$ is a matrix with vectors $x^1,\cdots,x^N$ as the columns.
\item For a set $\{x^1,\cdots,x^N \} \subset  \mathbb{R}^n$, its affine hull is denoted as $\mathrm{aff}\left(\{x^1,\cdots,x^N \}\right)$. 
\item Let $\mathrm{e}^1,\cdots,\mathrm{e}^C \in \{0,1 \}^C$ be the canonical basis vectors, where 
\begin{equation}
(\mathrm{e}^c)_i  = 
\begin{cases}
1 & \text{if } i  = c,
\\
0 & \text{if } i \neq c.
\end{cases},\; \forall i \in \{1,\cdots,C\}.
\end{equation}
\item Let $\otimes$ denote the product $\sigma-$algebra.
\item The abbreviation \emph{a.s.} stands for \emph{almost surely}, i.e., a property is said to hold a.s. if it holds with probability one.
\end{itemize}
\subsection{Definitions}
\begin{definition}[Document Space and Domain]
Let $(\Omega,\mathcal{F},\mu)$ be a probability space, let $(\mathcal{D},\mathcal{G})$ be a measurable space, and let
\[
D:(\Omega,\mathcal{F}) \to (\mathcal{D},\mathcal{G})
\]
be a measurable random document. Let $\mathbb{P}_{D}:\mathcal{G} \rightarrow [0,1]$ be the distribution of $D$ defined by
\[
\mathbb{P}_{D}:=\mu\circ D^{-1}.
\]
Let
\begin{IEEEeqnarray}{rCl}
\label{eq_100320262058}
f_{D \mapsto q} : (\mathcal{D},\mathcal{G})  & \rightarrow &  (\{1,\cdots,Q\},2^{\{1,\cdots,Q\}})
\end{IEEEeqnarray}
be a measurable domain assignment map. For each $q \in \{1,\dots,Q \}$, the $q$-th domain is defined by
\begin{IEEEeqnarray}{rCl}
\mathcal{D}_q  & := & f_{D \mapsto q}^{-1}(\{q\}) \in  \mathcal{G}.
\end{IEEEeqnarray}
\end{definition}

\begin{definition}[Semantic Space and Clusters]\label{def_120320260618}
We represent documents by a measurable semantic feature map
\begin{IEEEeqnarray}{rCl}
\label{eq_100320261405}
f_{D \mapsto z} : (\mathcal{D},\mathcal{G}) & \rightarrow &  (\mathbb{R}^p,\mathcal{B}(\mathbb{R}^{p})).
\end{IEEEeqnarray}
where $p$ is the dimension of the semantic feature vector and \(\mathcal{B}(\mathbb{R}^{p})\) denotes the Borel \(\sigma\)-algebra on \(\mathbb{R}^{p}\). The associated feature vector
\[
z := f_{D \mapsto z} \circ D
\]
is therefore an \(\mathbb{R}^p\)-valued random vector with distribution
\[
\mathbb{P}_z := \mu \circ z^{-1},
\]
where \(\mathbb{P}_{z}: \mathcal{B}(\mathbb{R}^{p}) \rightarrow [0,1]\). Let
\begin{IEEEeqnarray}{rCl}
\label{eq_100320261904}
f_{z \mapsto c} : (\mathbb{R}^p,\mathcal{B}(\mathbb{R}^{p})) & \rightarrow &  (\{1,\cdots,C\},2^{\{1,\cdots,C\}})
\end{IEEEeqnarray}
be a measurable semantic clustering map. Two documents $d_1,d_2 \in \mathcal{D}$ are said to belong to the same semantic cluster if and only if
\begin{IEEEeqnarray}{rCl}
(f_{z \mapsto c} \circ f_{D \mapsto z})(d_1)& = &  (f_{z \mapsto c} \circ f_{D \mapsto z})(d_2).
\end{IEEEeqnarray}
Let $f_{c \mapsto y}: \{1,\cdots,C\} \rightarrow \{\mathrm{e}^1,\cdots,\mathrm{e}^C \}$ be the one-hot map: 
\begin{IEEEeqnarray}{rCl}
f_{c \mapsto y}(c) & := & \mathrm{e}^c.
\end{IEEEeqnarray}
\end{definition}
\begin{definition}[Lexical Feature Space]\label{def_120320260915}
Let
\begin{IEEEeqnarray}{rCl}
f_{D \mapsto x} : (\mathcal{D},\mathcal{G}) & \rightarrow & (\mathbb{R}^n,\mathcal{B}(\mathbb{R}^n))
\end{IEEEeqnarray}
be a measurable lexical feature map. Define random variables 
\begin{IEEEeqnarray}{rCl}
x & := & f_{D \mapsto x}\circ D,\\
y & := & f_{c \mapsto y} \circ f_{z \mapsto c}\circ f_{D \mapsto z}\circ D,\\
q & := & f_{D \mapsto q}\circ D.
\end{IEEEeqnarray}
Then \((x,y,q)\) is a random vector taking values in
$\mathbb{R}^n \times \{\mathrm{e}^1,\cdots,\mathrm{e}^C\} \times \{1,\cdots,Q\}$ with a distribution, $\mathbb{P}_{x,y,q} :
\mathcal{B}(\mathbb{R}^n)\otimes 2^{\{\mathrm{e}^1,\cdots,\mathrm{e}^C\}}\otimes 2^{\{1,\cdots,Q\}}
\to [0,1]$, given as
\[
\mathbb{P}_{x,y,q} := \mu \circ (x,y,q)^{-1}.
\]
\end{definition}
In our experiments, this abstraction is instantiated in a concrete way. The semantic variable $z$ will be a frozen teacher embedding of a legal retrieval unit, the lexical variable $x$ will be a cheap query-side representation derived from IDF--SVD features, and the domain variable $q$ will index the law-wise local encoder to which the sample belongs. This interpretation is not needed for the mathematical validity of the definitions, but it helps connect the formal setup directly to the retrieval problem studied later in the paper.
\subsection{Data Samples}\label{sec_130420262100}
Let $\mathcal{S}$ be a set consisting of $N$ number of samples drawn i.i.d. according to the distribution $\mathbb{P}_{x,y,q}$:
      \begin{IEEEeqnarray}{rCCCl}
\label{eq_180320261304}   \mathcal{S} & := &  \{(x^i,y^i,q^i) \in \mathbb{R}^n\times \{\mathrm{e}^1,\cdots,\mathrm{e}^C\} \times \{1,\cdots,Q\}  \; \mid \;i \in \{1,2,\cdots,N\} \}  & \sim & (\mathbb{P}_{x,y,q})^N.
      \end{IEEEeqnarray}
Let $\mathcal{I}^{c}$ be the set of indices of those samples in the sequence $\left(\left(x^i,y^i,q^i\right) \in \mathcal{S}\right)_{i=1}^N$ which are $c^{th}$ semantic-cluster labelled, i.e.,
    \begin{IEEEeqnarray}{rCl}
   \mathcal{I}^{c} & : = & \left \{ i \in \{1,2,\cdots,N\} \; \mid \; y^i = \mathrm{e}^c  \right \}. 
      \end{IEEEeqnarray} 
Let $N_c$ be the number of $c^{th}$ semantic-cluster labelled samples, i.e.,
     \begin{IEEEeqnarray}{rCl}
N_c &  = & |\mathcal{I}^{c}|.
      \end{IEEEeqnarray} 
Let $\mathrm{I}^{c} = (\mathrm{I}_1^{c},\cdots,\mathrm{I}_{N_c}^{c})$ be the sequence of elements of $\mathcal{I}^{c}$ in ascending order, i.e., 
\begin{IEEEeqnarray}{rCl}
\mathrm{I}_1^{c} & = & \min(\mathcal{I}^{c}) \\
\mathrm{I}_i^{c} & = & \min(\mathcal{I}^{c} \setminus \{\mathrm{I}_1^{c}, \cdots, \mathrm{I}_{i-1}^{c}\} ),\; \forall i \in \{2,\cdots,N_c\}.
\end{IEEEeqnarray}
Let $\mathbf{X}^c  \in \mathbb{R}^{N_c \times n}$ be the matrix storing $c^{th}$ semantic-cluster labelled samples as its rows, i.e., 
     \begin{IEEEeqnarray}{rCl}
\mathbf{X}^c &  = & \left[x^{\mathrm{I}_1^{c}} \cdots x^{\mathrm{I}_{N_c}^{c}}\right]^T.
      \end{IEEEeqnarray} 
We consider the multi-domain setting where total data samples are distributed among $Q$ different domains. Let $\mathcal{Q}^q$ be the set of semantic-cluster labels covered by $q^{th}$ domain, i.e.,  
\begin{IEEEeqnarray}{rCl}
\mathcal{Q}^q & := & \left \{ c \in \{1,2,\cdots,C\} \; \mid \; (y^i)_c = 1,\;    q^i = q,\; i \in \{1,2,\cdots,N\}  \right \}.
\end{IEEEeqnarray}
Let $\mathcal{J}^q$ be the set of sample indices covered by $q^{th}$ domain, i.e.,   
\begin{IEEEeqnarray}{rCl}
\mathcal{J}^q & := & \left \{ i \in \{1,2,\cdots,N\} \; \mid \;     q^i = q  \right \}.
\end{IEEEeqnarray}
Finally, let  $\mathcal{J}^{q,c}$ (with $c \in \mathcal{Q}^q$) be the set of $c^{th}$ semantic-cluster labelled sample indices covered by $q^{th}$ domain, i.e.,  
\begin{IEEEeqnarray}{rCl}
\mathcal{J}^{q,c} & := & \left \{ i \in \{1,2,\cdots,N\} \; \mid \;     q^i = q,\; (y^i)_c = 1,\; c \in  \mathcal{Q}^q  \right \}.
\end{IEEEeqnarray}
\begin{remark}[Data Heterogeneity across Domains]\label{rem_111320251253}
We assume that data are statistically heterogeneous across domains, i.e., for arbitrary domains $q^i$ and $q^j$ with $i \neq j$,
 \begin{IEEEeqnarray}{rCl}
 \mathbb{P}_{x,y| q}(\cdot,\cdot | q = q^i) & \neq &  \mathbb{P}_{x,y| q}(\cdot,\cdot | q = q^j) ,\\
 \mathbb{P}_{y| x, q}(\cdot | x, q = q^i) & \neq &  \mathbb{P}_{y| x, q}(\cdot | x, q = q^j).
 \end{IEEEeqnarray} 
 \end{remark}
\subsection{Kernel Affine Hull Machine (KAHM)}  
The KAHMs, originally defined in \cite{KAHM}, have been considered for automated machine learning in \cite{kumar2024geometricallyinspiredkernelmachines} and for gradient-free federated learning in \cite{kumar2025operatortheoreticframeworkgradientfreefederated}. Given a finite number of samples: $\mathbf{X} = \left[\begin{IEEEeqnarraybox*}[][c]{,c/c/c,} x^1 & \cdots & x^N \end{IEEEeqnarraybox*} \right]^T$ with $x^1,\cdots,x^N \in \mathbb{R}^n$, a KAHM $\mathcal{A}_{\mathbf{X}}: \mathbb{R}^n \rightarrow \mathrm{aff}(\{x^1,\cdots,x^N \})$ is defined as
  \begin{IEEEeqnarray}{rCl}
\label{eq_220420251843}\mathcal{A}_{\mathbf{X}}(x) & := & \frac{h_{\mathbf{X}}^1(\mathbf{P}_{\mathbf{X}}x)}{\sum_{i=1}^Nh_{\mathbf{X}}^i(\mathbf{P}_{\mathbf{X}}x)}x^1 + \cdots + \frac{h_{\mathbf{X}}^N(\mathbf{P}_{\mathbf{X}}x)}{\sum_{i=1}^Nh_{\mathbf{X}}^i(\mathbf{P}_{\mathbf{X}}x)}x^N.
   \end{IEEEeqnarray}  
A comprehensive description of the variables and functions associated with KAHM expression (\ref{eq_220420251843}) has been provided in Appendix A.

\subsection{The Semantic Feature Vector Estimation Problem}
We consider the problem of estimating the semantic feature vector of a document from its lexical feature vector. For this, the solution space is defined as
\begin{IEEEeqnarray}{rCl}
\mathbf{F}_{x \mapsto z} & := & \left \{ f : \mathbb{R}^n \rightarrow \mathbb{R}^p \; \mid \; f \mbox{ is } \mathcal{B}(\mathbb{R}^{n})-\mathcal{B}(\mathbb{R}^{p})-\mbox{measurable and}  \mathop{\mathbb{E}}_{x \sim \mathbb{P}_{x }} [ \|f(x) \|^2  ] < \infty   \right \}. 
\end{IEEEeqnarray}
Let $f_{x \mapsto z}: \mathbb{R}^n \rightarrow \mathbb{R}^p$ be a Borel measurable function satisfying
\begin{IEEEeqnarray}{rCl}
\label{eq_170320261347} f_{x \mapsto z}(x) & = & \mathop{\mathbb{E}}_{z \sim \mathbb{P}_{z \mid x}}[z \mid x]  \qquad\text{a.s.}
\end{IEEEeqnarray}
Under the assumption
\begin{IEEEeqnarray}{rCl}
 \mathop{\mathbb{E}}_{z \sim \mathbb{P}_{z}} [ \| z\|^2  ] & < & \infty,
\end{IEEEeqnarray}
it is well known and also shown in Appendix B for the sake of completeness that $f_{x \mapsto z}$ solves the least-squares estimation problem, i.e., 
\begin{IEEEeqnarray}{rCl}
\label{eq_170320261338} f_{x \mapsto z} & \in & \mathop{\argmin}_{f \in \mathbf{F}_{x \mapsto z}} \mathop{\mathbb{E}}_{(x,z) \sim \mathbb{P}_{x,z} } [\| z-f(x) \|^2  ].
\end{IEEEeqnarray}
Since $y$ is a discrete random vector taking values in $\{\mathrm{e}^1,\dots,\mathrm{e}^C\}$, the law of total expectation yields
\begin{IEEEeqnarray}{rCCCl}
\label{eq_190320261350} f_{x \mapsto z}(x) & = & \mathop{\mathbb{E}}_{z \sim \mathbb{P}_{z \mid x}}[z \mid x] & = & \sum_{c=1}^C \mathbb{P}_{ y\mid x}(y_c = 1 \mid x) \mathop{\mathbb{E}}_{z \sim \mathbb{P}_{z \mid  x, y}}[z \mid x, y_c=1] \qquad \text{a.s.}
\end{IEEEeqnarray}
\begin{definition}[Semantic Cluster Prototypes and Within-Cluster Residuals]
\label{def:prototypes_residuals}
For each semantic cluster $c \in \{1,\dots,C\}$, define the \emph{cluster prototype} as the cluster mean,
\begin{IEEEeqnarray}{rCl}
w^c & := & \mathop{\mathbb{E}}_{z \sim \mathbb{P}_{z \mid y}}\!\left[z \mid
            y_c = 1\right] \;\in\; \mathbb{R}^p,
\end{IEEEeqnarray}
the \emph{within-cluster residual} as the deviation from the prototype,
\begin{IEEEeqnarray}{rCl}
\Delta w^c(x) & := & \mathop{\mathbb{E}}_{z \sim \mathbb{P}_{z \mid x,y}}
              \!\left[z \mid x,\, y_c=1\right] - w^c,
\end{IEEEeqnarray}
and the \emph{aggregate within-cluster residual},
\begin{IEEEeqnarray}{rCl}
\label{eq_agg_residual}
\xi_{\mathrm{res}}(x) & := & \sum_{c=1}^C \mathbb{P}_{y \mid x}(y_c=1 \mid x)\, \Delta w^c(x)
               \;\in\; \mathbb{R}^p.
\end{IEEEeqnarray}
\end{definition}

\begin{remark}[Structural properties of the residual]
By definition of $w^c$,
\begin{IEEEeqnarray}{rCl}
\mathop{\mathbb{E}}_{x \sim \mathbb{P}_{x}} \!\left[\Delta w^c(x)\right] & = & \mathbf{0},
\end{IEEEeqnarray}
where $\mathbf{0}$ is the zero-vector of suitable dimension. 
\end{remark}
\begin{remark}[Exact decomposition of the conditional mean]
\label{rem:exact_decomposition}
Using Definition~\ref{def:prototypes_residuals} and the law of total expectation (\ref{eq_190320261350}), the conditional mean of $z$ given $x$ decomposes exactly as
\begin{IEEEeqnarray}{rCl}
\label{eq_230320261824}
f_{x \mapsto z}(x) & = & \underbrace{\sum_{c=1}^C \mathbb{P}_{y \mid x}(y_c=1 \mid x)\,w^c}_{%
\text{prototype-mixture term}} +\; \xi_{\mathrm{res}}(x)  \quad \text{a.s.},
\end{IEEEeqnarray}
where the quantity $\mathop{\mathbb{E}}_{x \sim \mathbb{P}_{x}} \!\left[ \left \|\xi_{\mathrm{res}}(x) \right \|^2\right] $ measures the within-cluster variance that the prototype-mixture representation leaves unexplained.
\end{remark}
We are now in a position to state the central problem of this study:
\begin{problem}[The Encoding Problem]\label{problem_encoding}
Let $\{(x^i,y^i)\}_{i=1}^N$ be the observed samples. Using the decomposition~(\ref{eq_230320261824}) in Remark~\ref{rem:exact_decomposition}, the goal is to construct an estimator $\widehat{f}_{x \mapsto z}: \mathbb{R}^n \rightarrow \mathbb{R}^p$ of the form
\begin{IEEEeqnarray}{rCl}
\widehat{f}_{x \mapsto z}(x) & : = & \sum_{c=1}^C \Phi_c(x) \widehat{w}^c,
\end{IEEEeqnarray}
where $\widehat{w}^c \in \mathbb{R}^p$ and
\begin{IEEEeqnarray}{rCl}
\Phi_c(x) & \in & [0,1], \quad \sum_{c=1}^C\Phi_c(x) = 1.
\end{IEEEeqnarray}
Considering $\widehat{f}_{x \mapsto z}$ as an approximation of $f_{x\mapsto z}(x)=\mathop{\mathbb{E}}_{z \sim \mathbb{P}_{z \mid x}}[z \mid x] $, specifically,  
\begin{enumerate}
\item derive an estimator $\Phi_c$ of posterior probability $\mathbb{P}_{y \mid x}((y)_c=1 \mid x)$ for each $c \in \{1,\cdots,C \}$, and establish non-asymptotic bounds on its estimation error: $\mathop{\mathbb{E}}_{x \sim \mathbb{P}_x}\left[ \left| \Phi_c(x) - \mathbb{P}_{y \mid x}(y_c = 1 | x) \right|^2\right]$; 
\item derive an estimator $\widehat{w}^c$ of prototype $w^c$ for each $c \in \{1,\cdots,C \}$ from the samples $\{(\Phi_c(x^i),v^i) \}_{i=1}^N$ generated by a ``noisy teacher'' such that
\begin{IEEEeqnarray}{rCl}
v^i & = & \sum_{c=1}^C \mathbb{P}_{y \mid x}(y_c=1 \mid x^i)\,w^c     + \xi(x^i),   \qquad i \in \{1,\cdots,N \},
\label{eq_240320261527}
\end{IEEEeqnarray}
where the noise function $\xi:\mathbb{R}^n\to\mathbb{R}^p$ is defined as
\begin{IEEEeqnarray}{rCl}
\label{eq_noise_decomp}
\xi(x) & := & \xi_{\mathrm{res}}(x) + \xi_{\mathrm{obs}}(x),
\end{IEEEeqnarray}
with $\xi_{\mathrm{res}}(x) $ the aggregate within-cluster residual from Definition~\ref{def:prototypes_residuals} and $\xi_{\mathrm{obs}}(x)$ the residual perturbation in the observed teacher embedding relative to $f_{x \mapsto z}(x)$.
\end{enumerate}
Here, $\xi(x^i)$ subsumes two sources of noise. The term $\xi_{\mathrm{res}}(x^i)$ is the systematic within-cluster variation: the part of the conditional mean $f_{x \mapsto z}(x^i)$ that the prototype-mixture representation cannot recover. The term $\xi_{\mathrm{obs}}(x^i)$ captures perturbations in the observed teacher embedding around the true conditional mean. 
\end{problem}
\section{The Encoding Solution}\label{sec_encoding_solution}
This section develops the proposed encoder in four steps. First, we formulate deployment-time semantic encoding as the problem of predicting a frozen teacher embedding from inexpensive lexical query features. Second, we represent that target embedding as a noisy prototype mixture weighted by semantic-cluster posterior probabilities. 
Third, we construct a KAHM-based estimator for these posterior probabilities whose weighting step is gradient-free in the sense that it requires no backpropagation through a parameterised neural network and no iterative gradient computation over a composite neural objective: the KAHM geometry supplies cluster-wise space-folding scores, these scores are normalized into posterior-like coefficients, and the resulting coefficients are then used as the mixture weights. Fourth, once these posterior-like coefficients have been obtained, we estimate the semantic prototypes by explicit NLMS updates learned from noisy teacher embeddings and derive an end-to-end error decomposition into posterior-approximation, finite-sample/generalization, and teacher-noise terms. Operationally, this means that the online path contains no transformer forward pass and no learned neural gating network: it consists only of lexical feature extraction, evaluation of the KAHM-induced posterior scores, formation of the weighted prototype mixture, and retrieval in the frozen dense index.
\subsection{Analytically Identified Hypothesis Space for Learning Semantic-Cluster Posterior Probability}
The next step uses a hypothesis-space identification result from \cite{kumar2025operatortheoreticframeworkgradientfreefederated} only as a technical device for the present analysis. We use the identified data-dependent function class to make the posterior-surrogate estimation problem analytically explicit in the fixed-teacher setting studied in this paper. Following that result of \cite{kumar2025operatortheoreticframeworkgradientfreefederated}, we work with the hypothesis space
\begin{IEEEeqnarray}{rCl}
\label{eq_150220251829} \mathcal{M}_{c} & := & \left \{ h_{x \mapsto y_c} =   \frac{\Phi_c(\cdot)}{N_c} \sum_{i=1}^{N_c} \Phi_c(x^{\mathrm{I}_i^{c}})  \; \mid \; \Phi_c: \mathbb{R}^n \rightarrow [0,1],\; \| \Phi_c \|_{L^2(\mathbb{R}^n,\mathbb{P}_{x})}^2 = \frac{N_c}{N}         \right \}, 
   \end{IEEEeqnarray}
where $\Phi_c$ is referred to as feature-map and $L^2(\mathbb{R}^n,\mathbb{P}_{x})$ is the space of all complex-valued measurable functions on $\mathbb{R}^n$ such that for a $f \in L^2(\mathbb{R}^n,\mathbb{P}_{x})$,
   \begin{IEEEeqnarray}{rCCCl}
\| f \|_{L^2(\mathbb{R}^n,\mathbb{P}_{x})}^2 & : = & \int_{\mathbb{R}^n} |f(x)|^2 \diff{\mathbb{P}_{x}}(x) & < & \infty.
        \end{IEEEeqnarray}
The factor $N_c/\sum_{i=1}^{N_c}\Phi_c(x^{\mathrm{I}_i^{c}})$ rescales the geometric score function $\Phi_c$ into a hypothesis that is comparable to a posterior-probability surrogate on the $c$-labelled region. Intuitively, $\Phi_c$ first measures how strongly the KAHM geometry associates a query with semantic cluster $c$, and the normalization then calibrates that score by the empirical mass that $\Phi_c$ assigns to the observed $c$-labelled samples. This data-dependent rescaling is important because it converts a cluster-specific geometric compatibility score into a function with the correct problem-dependent scale for the posterior-estimation space $\mathcal{M}_c$, rather than treating $\Phi_c$ as an arbitrary uncalibrated score.
\begin{theorem}[Approximation Error Bound for Hypothesis Space \cite{kumar2025operatortheoreticframeworkgradientfreefederated}]\label{theorem_120220251923}
Given a data set $\{(x^i,y^i)  \; \mid \;i \in \{1,2,\cdots,N\} \}   \sim  (\mathbb{P}_{x,y})^N$, for any $h_{x \mapsto y_c} \in \mathcal{M}_{c}$, we have with probability at least $1-\delta$ for any $\delta \in (0,1)$,
\begin{IEEEeqnarray}{rCl}
\nonumber \mathop{\mathbb{E}}_{x \sim \mathbb{P}_x}\left[ \left| h_{x \mapsto y_c}(x) - \mathbb{P}_{y | x}(y_c = 1 | x) \right|^2\right] & \leq & \min\left( \frac{1}{N} \sum_{i=1}^N |y_c^i - h_{x \mapsto y_c}(x^i) |^2 +  \frac{4}{\sqrt{N}} +  \sqrt{\frac{\log(1/\delta)}{2N}}, \right. \\
\label{eq_130220250858}&& \left. \frac{1}{\left(N_c/N \right)^2} \left( \frac{3}{\sqrt{N}} + \sqrt{\frac{8  \log(1/\delta)}{N}} \right)  \right).
\end{IEEEeqnarray}
\end{theorem}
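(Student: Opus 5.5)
The bound is a minimum of two terms, so I will prove each term separately, each with probability at least $1-\delta$. Write $\eta_c(x):=\mathbb{P}_{y\mid x}(y_c=1\mid x)$, so that $\eta_c(x)=\mathbb{E}[y_c\mid x]$, and fix $h=h_{x\mapsto y_c}\in\mathcal{M}_c$. Both terms rest on the same two facts. First, $h$ is the product of the feature map $\Phi_c$ and the scalar $s:=\frac{1}{N_c}\sum_{i}\Phi_c(x^{\mathrm{I}_i^c})$. Second, since $\Phi_c\in[0,1]$, we have $s\in[0,1]$ and $h\in[0,1]$.

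\emph{First term.} I use the bias--variance identity. Because $\eta_c$ is the conditional mean of $y_c$ given $x$,
\begin{IEEEeqnarray*}{rCl}
\mathbb{E}_{x}\big[|h(x)-\eta_c(x)|^2\big] & = & \mathbb{E}_{x,y}\big[|y_c-h(x)|^2\big]-\mathbb{E}_{x,y}\big[|y_c-\eta_c(x)|^2\big] \;\le\; \mathbb{E}_{x,y}\big[|y_c-h(x)|^2\big].
\end{IEEEeqnarray*}
It therefore suffices to bound the true squared loss of $h$ by its empirical squared loss plus $4/\sqrt{N}+\sqrt{\log(1/\delta)/(2N)}$. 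The loss $(x,y)\mapsto|y_c-h(x)|^2$ takes values in $[0,1]$, so I apply the standard uniform Rademacher bound:
\begin{IEEEeqnarray*}{rCl}
\mathbb{E}[\ell]\le \widehat{\mathbb{E}}[\ell]+2\mathfrak{R}_N(\ell\circ\mathcal{M}_c)+\sqrt{\log(1/\delta)/(2N)}.
\end{IEEEeqnarray*}
The squared loss is $2$-Lipschitz on $[0,1]$, so Talagrand's contraction lemma gives $\mathfrak{R}_N(\ell\circ\mathcal{M}_c)\le 2\mathfrak{R}_N(\mathcal{M}_c)$. It remains to show $\mathfrak{R}_N(\mathcal{M}_c)\le 1/\sqrt{N}$. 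For this I would rely on the structure identified in \cite{kumar2025operatortheoreticframeworkgradientfreefederated}: the feature maps $\Phi_c$ admissible there are bounded elements of an RKHS ball whose kernel satisfies $k(x,x)\le 1$, and the standard kernel-class bound $\sqrt{\sum_i k(x^i,x^i)}/N\le 1/\sqrt{N}$ then applies. Multiplying by the scalar $s\in[0,1]$ does not increase this complexity.

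\emph{Second term.} Here I exploit the normalization $\|\Phi_c\|^2_{L^2(\mathbb{P}_x)}=N_c/N$ together with the calibration built into $s$. The idea is to compare $h$ with $\eta_c$ through their inner products against $\Phi_c$. The population quantity $\mathbb{E}[y_c\Phi_c(x)]=\langle\eta_c,\Phi_c\rangle$ has the empirical counterpart $\frac1N\sum_i y^i_c\Phi_c(x^i)=\frac{N_c}{N}s$. Likewise, $\langle h,\Phi_c\rangle=s\|\Phi_c\|^2=s N_c/N$, so the empirical average $\frac{N_c}{N}s$ is exactly the expression we need. I would expand
\begin{IEEEeqnarray*}{rCl}
\|h-\eta_c\|^2=\|h\|^2-2\langle h,\eta_c\rangle+\|\eta_c\|^2,
\end{IEEEeqnarray*}
and bound each piece using $h,\eta_c\in[0,1]$ and $\mathbb{E}[\eta_c]=\mathbb{P}(y_c=1)$. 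The random quantities involved are deviations of bounded empirical averages ($\frac1N\sum_i y_c^i\Phi_c(x^i)$ and $N_c/N$) from their means. Uniformity over $\Phi_c$ is again controlled by the Rademacher term, giving $3/\sqrt N$, and Hoeffding gives $\sqrt{8\log(1/\delta)/N}$. Dividing by $(N_c/N)^2$ comes from solving for $s$, or for the ratio $h/\Phi_c$, where $N_c/N$ appears in a denominator twice: once from the norm constraint and once from the definition of $s$.

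\emph{Combining.} Finally, I take the minimum of the two bounds. Each holds with probability $1-\delta$; if the two events are not the same, a union bound costs only a constant adjustment of $\delta$, which I expect the original argument absorbs.

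\textbf{Main obstacle.} The second term is the hard step. Producing the exact constants $3$ and $\sqrt{8}$ and the $(N_c/N)^{-2}$ factor requires the precise operator-theoretic identity from \cite{kumar2025operatortheoreticframeworkgradientfreefederated} linking $\langle\eta_c,\Phi_c\rangle$ to the optimal $\Phi_c$. My first attempt would be to write $\eta_c$ as a scalar multiple of $\Phi_c$ plus an orthogonal component, and then bound the orthogonal part via the concentration of $s$.
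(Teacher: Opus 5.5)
Your treatment of the second branch is a plan rather than a proof: no step actually produces $3/\sqrt{N}$, $\sqrt{8\log(1/\delta)/N}$ or the factor $(N_c/N)^{-2}$. More seriously, it cannot be completed from the ingredients you list.

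Carry out the decomposition you suggest at the end. Write $\eta_c=a\Phi_c+\eta_c^{\perp}$ with $\langle \eta_c^{\perp},\Phi_c\rangle=0$ in $L^2(\mathbb{R}^n,\mathbb{P}_x)$. Since $h=s\Phi_c$ and $\|\Phi_c\|^2=N_c/N$, you get exactly
\[
\|h-\eta_c\|^2=\frac{N}{N_c}\Bigl(\frac{1}{N}\sum_{i=1}^N y_c^i\Phi_c(x^i)-\mathbb{E}[y_c\Phi_c(x)]\Bigr)^2+\|\eta_c^{\perp}\|^2 .
\]
Your inner-product and concentration arguments control only the first term. The second term is a pure approximation error: neither more data nor the calibration of $s$ reduces it, and the constraints $\Phi_c\in[0,1]$, $\|\Phi_c\|^2=N_c/N$ say nothing about it.

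Concretely, $\Phi_c\equiv\sqrt{N_c/N}$ is admissible for every sample and yields $h\equiv N_c/N$. The left-hand side is then at least $\mathrm{Var}(\eta_c(x))$. Writing $p_c:=\mathbb{P}(y_c=1)$, this equals $p_c(1-p_c)$ when the label is a deterministic function of $x$. Meanwhile the second branch tends to $0$ as $N\to\infty$. So the second branch needs an additional hypothesis tying $\Phi_c$ to $\eta_c$, something forcing $\eta_c^{\perp}$ to vanish or be small. The present paper imports the theorem from the cited work without proof and without stating such a hypothesis, so you have to identify it and use it explicitly.

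The first branch follows the natural route: the excess-risk identity, the uniform bound $\mathbb{E}\ell\le\widehat{\mathbb{E}}\ell+2\mathfrak{R}_N+\sqrt{\log(1/\delta)/(2N)}$, and contraction with constant $2$. It correctly explains the constant $4=2\cdot2\cdot1$. However, the step $\mathfrak{R}_N(\mathcal{M}_c)\le 1/\sqrt{N}$ does not hold for $\mathcal{M}_c$ as defined here, where $\Phi_c$ is an arbitrary measurable $[0,1]$-valued map.

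To see this, let $\mathbb{P}_x$ be nonatomic. Set $\Phi_c(x^i)=y_c^i$ on the ($\mathbb{P}_x$-null) sample points and $\Phi_c\equiv\sqrt{N_c/N}$ elsewhere. This is a member of $\mathcal{M}_c$ with $s=1$ and zero empirical loss. Yet $h=\sqrt{N_c/N}$ almost everywhere, so its population error is still at least $\mathrm{Var}(\eta_c(x))$. Hence the first branch also fails uniformly over the class as written, and the RKHS-ball property you attribute to the cited work is an extra assumption you must state.

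What your argument does prove is the version with $\Phi_c$ fixed independently of the sample; the norm constraint plays no role there. Then $h$ depends on the data only through $s\in[0,1]$, and $\mathfrak{R}_N(\{s\Phi_c: s\in[0,1]\})\le \frac{1}{N}\mathbb{E}\bigl(\sum_{i}\Phi_c(x^i)^2\bigr)^{1/2}\le 1/\sqrt{N}$, after which the rest of your argument is correct.

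Finally, the two branches rest on different concentration events, so taking the minimum costs a union bound. With $\log(1/\delta)$ kept in both terms this is not absorbed for free: you would have to run each branch at level $\delta/2$, unless you derive both on a common event.
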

\subsection{KAHM-Based Learning Solution}
Our approach is to choose from the space $\mathcal{M}_c$ a suitable hypothesis that can be implemented efficiently. The previous study \cite{kumar2025operatortheoreticframeworkgradientfreefederated} has exploited the so-called ``{\em space folding}'' property of the KAHMs to implement a solution, however, under certain assumptions. Based on the observation that a KAHM (associated to a given set of data samples) folds any arbitrary point in the data space around the data samples, the study in \cite{kumar2025operatortheoreticframeworkgradientfreefederated} has introduced a ``{\em space folding measure}'':          
\begin{definition}[Space Folding Measure~\cite{kumar2025operatortheoreticframeworkgradientfreefederated}]\label{definition_150220251240}
To evaluate the amount of folding required for an arbitrary point $x$ to map it (by the KAHM $\mathcal{A}_{\mathbf{X}}$) to a point closer to data samples represented by a matrix $\mathbf{X}$, the space folding measure, $\mathcal{T}_{ \mathbf{X}}: \mathbb{R}^n \rightarrow [0,1]$, associated to data samples $\mathbf{X}$, is defined as
 \begin{IEEEeqnarray}{rCl}
\label{eq_170220251438} \mathcal{T}_{ \mathbf{X}}(x) & := & 
 \sqrt{\frac{1}{2} \left(\left|\mathcal{T}_{ \mathbf{X}}^{\text{Euc}}(x) \right|^2 + \left| \mathcal{T}_{ \mathbf{X}}^{\text{Cos}}(x) \right|^2\right)}, \quad \text{where}\\
\label{eq_090420261838} \mathcal{T}_{ \mathbf{X}}^{\text{Euc}}(x) &:= & 1 - \exp\left( - \|x - \mathcal{A}_{\mathbf{X}}(x) \|  \right) \\
\label{eq_090420261839} \mathcal{T}_{ \mathbf{X}}^{\text{Cos}}(x) & := &  \frac{1}{\pi} \arccos{\left(\frac{ \left(\mathcal{A}_{\mathbf{X}}(x)\right)^T x}{ \|\mathcal{A}_{\mathbf{X}}(x) \| \| x\| }\right)}. 
    \end{IEEEeqnarray}  
The two components in~(\ref{eq_090420261838})--(\ref{eq_090420261839}) capture complementary aspects of geometric mismatch. The Euclidean term $\mathcal{T}_{\mathbf{X}}^{\text{Euc}}(x)$ measures how far the point must be moved in magnitude to reach its KAHM reconstruction, whereas the cosine term $\mathcal{T}_{\mathbf{X}}^{\text{Cos}}(x)$ measures how strongly the direction of that reconstruction differs from the direction of the original point. Thus, the combined score does not depend only on distance, as in a purely Euclidean nearest-neighbor view, but also on orientation relative to the affine structure induced by the KAHM model.
\end{definition}
The quantity $\mathcal{T}_{\mathbf{X}}(x)$ measures how much geometric correction the KAHM reconstruction must apply in order to fold $x$ toward the reference set $\mathbf{X}$. If this value is small, then $x$ already lies in a region that is geometrically compatible with the cluster represented by $\mathbf{X}$, so only a small correction is needed; if it is large, then the point is poorly aligned with that cluster and requires a stronger fold. In the later law-wise serving rule, this is precisely why smaller folding scores are treated as stronger evidence for the corresponding local model. Our intent is to utilise the space-folding property of KAHMs to define a hypothesis, possibly within $\mathcal{M}_c$, so that Theorem~\ref{theorem_120220251923} can be directly applied to provide theoretical guarantees on the performance. However, unlike \cite{kumar2025operatortheoreticframeworkgradientfreefederated}, the present construction is not restricted to binary outputs (i.e. $\{0,1\}$). The broader semantic-encoding setting therefore requires an explicit structural assumption on the concentration of the induced feature map, which we state next and later interpret operationally through empirical diagnostics.
\begin{definition}[Ordered $K$-Minimum Semantic Cluster Sequence]
Let $\left\{\mathcal{T}_{\mathbf{X}^c}:\mathbb{R}^n\to[0,1] \right\}_{c=1}^C$ be a family of functions, where $\mathcal{T}_{\mathbf{X}^c}$ is the space folding measure associated to $c^{th}$ semantic cluster. For a given $x$, let $\pi_x$ denote the permutation of $\{1,\dots,C\}$ such that
\[
\mathcal{T}_{ \mathbf{X}^{\pi_x(1)}}(x)\le \cdots \le \mathcal{T}_{ \mathbf{X}^{\pi_x(C)}}(x).
\]
Then, for $K\in\{1,\dots,C\}$, the ordered $K$-minimum semantic cluster sequence at $x$ is defined by
\[
\mathrm{J}_K(x):=\bigl(\pi_x(1),\dots,\pi_x(K)\bigr).
\]
The entries of $\mathrm{J}_K(x)$ are precisely the indices of the $K$ smallest values among $\mathcal{T}_{\mathbf{X}^1}(x),\cdots,\mathcal{T}_{\mathbf{X}^C}(x)$ listed in non-decreasing order.
\end{definition}
\begin{definition}[KAHM-Induced Feature-Map] \label{def_030420261238}
The KAHM-induced feature-map function, $\Phi_c: \mathbb{R}^n \rightarrow [0,1]$, for a given $\omega > 1$ and $K \in \{1,\dots,C\}$, is defined as
\begin{IEEEeqnarray}{rCl}
\Phi_c(x) & := &  \begin{cases} \displaystyle 
\frac{ \left(1 - \mathcal{T}_{\mathbf{X}^c}(x) \right)^{\omega}}{ \sum_{ k \in \mathrm{J}_K(x)} \left(1 - \mathcal{T}_{\mathbf{X}^k}(x) \right)^{\omega}} &  c \in \mathrm{J}_K(x), \\
0 &  c \notin \mathrm{J}_K(x)
\end{cases}, \quad  c \in \{1,\cdots,C\}.
\end{IEEEeqnarray}
\end{definition}
Here, $\omega>1$ is a temperature-like hyperparameter that controls the sharpness of the cluster assignment induced by the KAHM folding scores. Larger values of $\omega$ concentrate more mass on the clusters with the smallest folding scores, whereas smaller admissible values produce softer assignments across competing clusters. In the limiting regime $\omega \to \infty$, the feature map approaches the hard-assignment rule used in \cite{kumar2025operatortheoreticframeworkgradientfreefederated}.
\begin{assumption}[Cluster-Wise $L^2$-Mass Condition for the Feature-Map]\label{ass:feature_map_concentartion}
For each semantic cluster $c \in \{1,\cdots,C\}$, the KAHM-induced feature-map $\Phi_c$ satisfies
\begin{IEEEeqnarray}{rCl}
\label{eq_310320262030}\|\Phi_c\|_{L^2(\mathbb{R}^n,\mathbb{P}_x)}^2
&\geq&
\frac{N_c}{N}.
\end{IEEEeqnarray}
\end{assumption}
\begin{remark}[Geometric Motivation for Assumption~\ref{ass:feature_map_concentartion}]
Assumption~\ref{ass:feature_map_concentartion} is motivated by the geometric interpretation of the space folding measure. For a training sample $x^{\mathrm{I}_i^{c}}$ labelled by semantic cluster $c$, one expects the space folding measure induced by $\mathbf{X}^c$ to be relatively small compared with the corresponding folding measures induced by competing semantic clusters. Consequently, the feature-map weight assigned to cluster $c$ is expected to be large on $c$-labelled samples and, under sufficiently pronounced separation of the competing folding scores,
can become close to $1$ for sufficiently large $\omega$. Hence, for $c$-labelled training samples, one expects
\[
\Phi_c(x^{\mathrm{I}_i^{c}}) \approx 1,
\qquad i=1,\dots,N_c.
\]
If, in addition, the sample size $N$ is sufficiently large, then the empirical
average approximates the population $L^2$-norm, so that
\[
\|\Phi_c\|_{L^2(\mathbb{R}^n,\mathbb{P}_x)}^2
\approx
\frac{1}{N}\sum_{i=1}^N |\Phi_c(x^i)|^2
\ge
\frac{1}{N}\sum_{i=1}^{N_c} |\Phi_c(x^{\mathrm{I}_i^{c}})|^2
\approx
\frac{N_c}{N}.
\]
This provides the intuition behind Assumption~\ref{ass:feature_map_concentartion}, namely that the KAHM-induced feature map $\Phi_c$ carries sufficient $L^2$-mass on the $c$-labelled region.
\end{remark}
An empirical study will be performed to evaluate a sample analogue of Assumption~\ref{ass:feature_map_concentartion} on the realized law-wise KAHM-induced feature-maps in order to check whether the required cluster-wise $L^2$-mass condition is approximately satisfied in the trained deployment regime.
\begin{theorem}[Approximation Error Bound for KAHM-Induced Feature-Map]\label{theorem_300320262124}
Given a data set $\{(x^i,y^i)  \; \mid \;i \in \{1,2,\cdots,N\} \}   \sim  (\mathbb{P}_{x,y})^N$, under the assumption of inequality (\ref{eq_310320262030}), we have with probability at least $1-\delta$ for any $\delta \in (0,1)$ and $\eta > 0$,
\begin{IEEEeqnarray}{rCl}
\nonumber \lefteqn{\mathop{\mathbb{E}}_{x \sim \mathbb{P}_x}\left[ \left| \Phi_c(x) - \mathbb{P}_{y | x}(y_c = 1 | x) \right|^2\right] } \\ \nonumber & \leq & (1+\eta) \left | 1- \frac{\sum_{i=1}^{N_c} \Phi_c(x^{\mathrm{I}_i^{c}})}{N \| \Phi_c \|_{L^2(\mathbb{R}^n,\mathbb{P}_{x})}^2}  \right |^2 \| \Phi_c \|_{L^2(\mathbb{R}^n,\mathbb{P}_{x})}^2  \\
&& {+}\: (1+\eta^{-1}) \min\left( \frac{1}{N} \sum_{i=1}^N |y_c^i - \widetilde{h}_{x \mapsto y_c}(x^i) |^2 +  \frac{4}{\sqrt{N}} +  \sqrt{\frac{\log(1/\delta)}{2N}},  \frac{1}{\left(N_c/N \right)^2} \left( \frac{3}{\sqrt{N}} + \sqrt{\frac{8  \log(1/\delta)}{N}} \right)  \right),
\end{IEEEeqnarray}
where \begin{IEEEeqnarray}{rCl}
\label{eq_020420261031}\widetilde{h}_{x \mapsto y_c} & := &   \frac{\widetilde{\Phi}_c(\cdot)}{N_c} \sum_{i=1}^{N_c} \widetilde{\Phi}_c(x^{\mathrm{I}_i^{c}}), \\
\widetilde{\Phi}_c  &  : = & \frac{\sqrt{N_c}}{\sqrt{N} \| \Phi_c \|_{L^2(\mathbb{R}^n,\mathbb{P}_{x})}} \Phi_c.
\end{IEEEeqnarray}
\end{theorem}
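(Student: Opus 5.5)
The plan is to insert the rescaled hypothesis $\widetilde{h}_{x \mapsto y_c}$ between $\Phi_c$ and the posterior and split the error by a weighted triangle inequality. For any $a,b\in\mathbb{R}$ and $\eta>0$ we have $|a+b|^2\le(1+\eta)|a|^2+(1+\eta^{-1})|b|^2$, since $2|ab|\le\eta|a|^2+\eta^{-1}|b|^2$. Apply this pointwise with $a=\Phi_c(x)-\widetilde{h}_{x\mapsto y_c}(x)$ and $b=\widetilde{h}_{x\mapsto y_c}(x)-\mathbb{P}_{y|x}(y_c=1|x)$, and take expectations over $\mathbb{P}_x$. The two resulting terms are then handled separately:
\begin{itemize}
\item the first term becomes the deterministic calibration term;
\item the second term is controlled by Theorem~\ref{theorem_120220251923}.
\end{itemize}

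\textbf{Second term.} The task is to verify that $\widetilde{h}_{x\mapsto y_c}\in\mathcal{M}_c$, i.e.\ that $\widetilde{\Phi}_c$ is an admissible feature map.
\begin{itemize}
\item \emph{Normalization.} By construction, $\|\widetilde{\Phi}_c\|^2_{L^2(\mathbb{R}^n,\mathbb{P}_x)}=\frac{N_c}{N\|\Phi_c\|^2}\|\Phi_c\|^2=N_c/N$, as $\mathcal{M}_c$ requires.
\item \emph{Range in $[0,1]$.} Here Assumption~\ref{ass:feature_map_concentartion} is used. It gives $\|\Phi_c\|^2\ge N_c/N$, so the scaling factor $\sqrt{N_c}/(\sqrt{N}\|\Phi_c\|)$ lies in $(0,1]$. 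Since $\Phi_c$ takes values in $[0,1]$ (Definition~\ref{def_030420261238}), so does $\widetilde{\Phi}_c$.
\item \emph{Nondegeneracy.} We also need $\|\Phi_c\|>0$, which again follows from the assumption when $N_c\ge1$.
\end{itemize}
With $\widetilde{h}_{x\mapsto y_c}\in\mathcal{M}_c$ established, Theorem~\ref{theorem_120220251923} applies directly with probability at least $1-\delta$ and yields exactly the $\min(\cdot,\cdot)$ expression multiplied by $(1+\eta^{-1})$.

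\textbf{First term.} This is a direct computation. Write $s:=\frac{1}{N_c}\sum_{i=1}^{N_c}\widetilde{\Phi}_c(x^{\mathrm{I}^c_i})$, so that $\widetilde{h}_{x\mapsto y_c}=s\,\widetilde{\Phi}_c$. Substituting the definition of $\widetilde{\Phi}_c$ gives
\[
\widetilde{h}_{x\mapsto y_c}=\frac{N_c}{N\|\Phi_c\|^2}\cdot\frac{1}{N_c}\sum_{i=1}^{N_c}\Phi_c(x^{\mathrm{I}^c_i})\,\Phi_c=\frac{\sum_{i=1}^{N_c}\Phi_c(x^{\mathrm{I}^c_i})}{N\|\Phi_c\|^2}\,\Phi_c .
\]
Hence $\Phi_c-\widetilde{h}_{x\mapsto y_c}=\bigl(1-\frac{\sum_i\Phi_c(x^{\mathrm{I}^c_i})}{N\|\Phi_c\|^2}\bigr)\Phi_c$. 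The coefficient is a scalar depending only on the sample, so taking the $\mathbb{P}_x$-expectation of the square gives $\bigl|1-\cdots\bigr|^2\|\Phi_c\|^2_{L^2}$, which is the first term of the bound with factor $(1+\eta)$.

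\textbf{Main obstacle.} The one delicate point is that Theorem~\ref{theorem_120220251923} must apply to the data-dependent hypothesis $\widetilde{h}_{x\mapsto y_c}$. Two facts make this legitimate:
\begin{itemize}
\item Theorem~\ref{theorem_120220251923} is stated uniformly, ``for any $h\in\mathcal{M}_c$'', so a hypothesis chosen from the data is allowed.
\item $\Phi_c$ is built from the same samples $\mathbf{X}^c$. Its membership in $\mathcal{M}_c$ needs only the range and norm properties checked above, and no independence from the sample.
\end{itemize}
I would state explicitly that the uniform guarantee is what is being invoked. Combining the two bounds then completes the proof.
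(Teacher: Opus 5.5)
Your proposal is correct and follows the paper's proof essentially step for step. The paper also checks $\widetilde{h}_{x\mapsto y_c}\in\mathcal{M}_c$ via Assumption~\ref{ass:feature_map_concentartion} and $\Phi_c\in[0,1]$, computes $\|\Phi_c-\widetilde{h}_{x\mapsto y_c}\|_{L^2(\mathbb{R}^n,\mathbb{P}_x)}$ exactly as you do, and combines via $(a+b)^2\le(1+\eta)a^2+(1+\eta^{-1})b^2$; the only cosmetic difference is that it applies the triangle inequality to $L^2$ norms before squaring, rather than working pointwise and integrating. Your remarks on $\|\Phi_c\|>0$ and on applying Theorem~\ref{theorem_120220251923} to a data-dependent hypothesis are points the paper leaves implicit, and the latter needs that theorem to hold simultaneously over $\mathcal{M}_c$ (its literal wording places ``for any $h$'' before ``with probability at least $1-\delta$''), a reading the paper's proof tacitly adopts as well.
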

\begin{proof}
Consider
\begin{IEEEeqnarray}{rCl}
\label{eq_310320261821}\| \Phi_c - \widetilde{h}_{x \mapsto y_c} \|_{L^2(\mathbb{R}^n,\mathbb{P}_{x})} & = & \left | 1- \frac{\sum_{i=1}^{N_c} \Phi_c(x^{\mathrm{I}_i^{c}})}{N \| \Phi_c \|_{L^2(\mathbb{R}^n,\mathbb{P}_{x})}^2}  \right | \| \Phi_c \|_{L^2(\mathbb{R}^n,\mathbb{P}_{x})}.
\end{IEEEeqnarray}
Due to the inequality~(\ref{eq_310320262030}) and $\Phi_c(x) \in [0,1]$, we have $\widetilde{\Phi}_c(x) \in [0,1]$, and further 
\begin{IEEEeqnarray}{rCl}
 \| \widetilde{\Phi}_c \|_{L^2(\mathbb{R}^n,\mathbb{P}_{x})}^2 & = & \frac{N_c}{N}.
\end{IEEEeqnarray}
Therefore, $\widetilde{h}_{x \mapsto y_c} \in \mathcal{M}_c$, and Theorem~\ref{theorem_120220251923} yields, with probability at least $1-\delta$,
\begin{IEEEeqnarray}{rCl}
\nonumber \lefteqn{\mathop{\mathbb{E}}_{x \sim \mathbb{P}_x}\left[ \left| \widetilde{h}_{x \mapsto y_c}(x) - \mathbb{P}_{y | x}(y_c = 1 | x) \right|^2\right]} \\
\label{eq_310320261824} & \leq & \min\left( \frac{1}{N} \sum_{i=1}^N |y_c^i - \widetilde{h}_{x \mapsto y_c}(x^i) |^2 +  \frac{4}{\sqrt{N}} +  \sqrt{\frac{\log(1/\delta)}{2N}},  \frac{1}{\left(N_c/N \right)^2} \left( \frac{3}{\sqrt{N}} + \sqrt{\frac{8  \log(1/\delta)}{N}} \right)  \right).
\end{IEEEeqnarray}
By the triangular inequality in $L^2(\mathbb{R}^n,\mathbb{P}_{x})$, we have
\begin{IEEEeqnarray}{rCl}
\|\Phi_c - \mathbb{P}_{y | x}(y_c = 1 | \cdot) \|_{L^2(\mathbb{R}^n,\mathbb{P}_{x})} & \leq & \| \Phi_c - \widetilde{h}_{x \mapsto y_c} \|_{L^2(\mathbb{R}^n,\mathbb{P}_{x})} + \|\widetilde{h}_{x \mapsto y_c} - \mathbb{P}_{y | x}(y_c = 1 | \cdot) \|_{L^2(\mathbb{R}^n,\mathbb{P}_{x})}.
\end{IEEEeqnarray}
Therefore, using $(a+b)^2 \leq (1+\eta)a^2 + (1+\eta^{-1})b^2$, it follows that
\begin{IEEEeqnarray}{rCl}
\nonumber \lefteqn{\|\Phi_c - \mathbb{P}_{y | x}(y_c = 1 | \cdot) \|_{L^2(\mathbb{R}^n,\mathbb{P}_{x})}^2} \\
\label{eq_310320261830} & \leq & (1+\eta) \| \Phi_c - \widetilde{h}_{x \mapsto y_c} \|_{L^2(\mathbb{R}^n,\mathbb{P}_{x})}^2 + (1+\eta^{-1})\|\widetilde{h}_{x \mapsto y_c} - \mathbb{P}_{y | x}(y_c = 1 | \cdot) \|_{L^2(\mathbb{R}^n,\mathbb{P}_{x})}^2.
\end{IEEEeqnarray}
Using (\ref{eq_310320261821}) and (\ref{eq_310320261824}) in (\ref{eq_310320261830}), the result is obtained.
\end{proof}
Crucially, Theorem~\ref{theorem_300320262124} shows that the finite-sample contribution to the approximation error of the KAHM-induced feature map decays at the standard $\mathcal{O}(N^{-1/2})$ rate, up to either the empirical approximation term or the class-balance-controlled alternative term of the bound, depending on which term in the minimum is smaller. Thus, provided that the relevant branch remains controlled, the lightweight KAHM-based posterior estimator becomes increasingly accurate as the number of document samples grows. Here, $\eta>0$ is an auxiliary constant that serves as a free balancing parameter, in particular, it may be chosen to optimize the trade-off between the two terms in the bound and thus to minimize the resulting upper estimate.
\subsection{Adaptive Estimation of Semantic-Cluster Prototypes}
The semantic-cluster prototypes must be estimated from the available data samples $\{(\Phi_c(x^i),v^i)\}_{i=1}^N$ while addressing the challenge that semantic feature vectors, that have been generated using a noisy teacher model, are uncertain. Specifically, the noise $\xi(x^i)$ in (\ref{eq_240320261527}) depends on the lexical features $x^i$, so noise samples are not independent of the regressor $\Phi_c(x^i)$. Therefore, a standard mathematical analysis cannot be performed assuming that noise samples are independent from regressor vector $\Phi_c(x^i)$. Since no prior knowledge about noise statistics is available, our approach is to employ an adaptive filtering algorithm for estimating semantic-cluster prototypes that provides robustness against the noise. In this study, we estimate the cluster prototypes using the Normalized Least-Mean Squares (NLMS) algorithm which is known to be $H^{\infty}$ optimal~\cite{Hassibi1996HinfLMS} and thus is robust in the sense that it minimizes the maximum possible value of the gain in energy from the noise to the filtering error. Beyond robustness, this choice is also computationally attractive: the NLMS refinement updates the prototype coefficients recursively with cost linear in the number of semantic clusters $C$ for each output coordinate, rather than requiring the repeated solution of a global least-squares problem over all prototypes. This makes the prototype-refinement step consistent with the overall compute-efficient design of the encoder. It follows from (\ref{eq_240320261527}) that for every $j \in \{1,\cdots,p \}$,
\begin{IEEEeqnarray}{rCl}
(v^i)_j & = & \sum_{c=1}^C \mathbb{P}_{ y\mid x}(y_c = 1 \mid x) (w^c)_j + (\xi(x^i))_j.
\end{IEEEeqnarray}
Define $\alpha^j \in \mathbb{R}^C$, $\widehat{\alpha}^j \in \mathbb{R}^C$, $G: \mathbb{R}^n \rightarrow [0,1]^C$, and $\Delta: \mathbb{R}^n \rightarrow [-1,1]^C$ as
\begin{IEEEeqnarray}{rCl}
\alpha^j &  = & \left[\begin{IEEEeqnarraybox*}[][c]{,c/c/c,} (w^1)_j & \cdots & (w^C)_j \end{IEEEeqnarraybox*} \right]^T \\
\widehat{\alpha}^j &  = & \left[\begin{IEEEeqnarraybox*}[][c]{,c/c/c,} (\widehat{w}^1)_j & \cdots & (\widehat{w}^C)_j \end{IEEEeqnarraybox*} \right]^T \\
G(x) &  = & \left[\begin{IEEEeqnarraybox*}[][c]{,c/c/c,} \Phi_1(x) & \cdots & \Phi_C(x) \end{IEEEeqnarraybox*} \right] \\
\Delta(x) &  = & \left[\begin{IEEEeqnarraybox*}[][c]{,c/c/c,} \mathbb{P}_{ y\mid x}(y_1 = 1 \mid x) - \Phi_1(x) & \cdots & \mathbb{P}_{ y\mid x}(y_C = 1 \mid x) - \Phi_C(x) \end{IEEEeqnarraybox*} \right], 
\end{IEEEeqnarray}
to express
\begin{IEEEeqnarray}{rCl}
\label{eq_260320261248}(v^i)_j & = & G(x^i) \alpha^j + \Delta(x^i) \alpha^j   + (\xi(x^i))_j. 
\end{IEEEeqnarray}
Here, $\Delta(x)$ represents the posterior-estimation error introduced by the KAHM step: it measures, component-wise, the discrepancy between the true semantic-cluster posterior probabilities $\mathbb{P}_{y\mid x}(y_c=1\mid x)$ and their KAHM-induced approximations $\Phi_c(x)$. By contrast, $\xi(x)$ is the teacher-noise term, i.e.\ the residual part of the semantic target that is not explained by the cluster-prototype representation, including perturbations in the observed teacher embedding and residual within-cluster variation. This distinction makes the subsequent error decomposition explicit: $\Delta(x)$ captures posterior mismatch, whereas $\xi(x)$ captures noise inherited from the teacher signal.

The vector $\alpha^j$ is estimated recursively from data samples $\{(G(x^i),(v^i)_j)\}_{i=1}^N$ using the NLMS algorithm. The $i^{th}$ recursion of the NLMS algorithm~\cite{Hassibi1996HinfLMS} is given as
\begin{IEEEeqnarray}{rCl}
\label{eq_260320261120}\left. \widehat{\alpha}^j \right |_{i} & = & \left. \widehat{\alpha}^j \right |_{i-1} + \beta \frac{(v^i)_j - G(x^i) \left. \widehat{\alpha}^j \right |_{i-1} }{1 + \beta \| G(x^i)\|^2} (G(x^i))^T,\quad i \in \{1,\cdots,N\}, 
\end{IEEEeqnarray}
where $\left. \widehat{\alpha}^j \right |_{0}$ is an initial guess, $0 < \beta < 1$ is the step-size, and the estimated parameters are given as
\begin{IEEEeqnarray}{rCl}
\label{eq_270320260656}\widehat{\alpha}^j & := & \left. \widehat{\alpha}^j  \right |_{N}.
\end{IEEEeqnarray}
In adaptive-filter terms, the step-size parameter $\beta$ acts as a regularization parameter that balances adaptation speed against sensitivity to teacher noise: larger values permit more aggressive prototype updates, whereas smaller admissible values yield more conservative and noise-robust adaptation. In the present setting, this normalization is also especially stable because $G(x)$ is a posterior-like probability vector, so all of its entries lie in $[0,1]$ and sum to unity, which implies $\|G(x^i)\|_2^2 \le 1$. Consequently, the NLMS denominator remains uniformly well controlled, and the practical role of $\beta$ is primarily to tune the adaptation--noise trade-off rather than to compensate for large fluctuations in the regressor norm.
\begin{remark}[Sense in Which the Encoder Is Gradient-Free]
\label{rem:gradient-free}
Throughout this paper, the term \emph{gradient-free} refers specifically to the KAHM-based posterior-weight estimation step, which requires no backpropagation through a parameterised neural network and no iterative gradient computation over a composite neural loss. The NLMS prototype-refinement step in~(\ref{eq_260320261120}) is, mathematically, a normalised stochastic-gradient update on an instantaneous squared-error objective. Accordingly, the full pipeline is not gradient-free in the strongest optimisation-theoretic sense. Its practical distinction is instead that it is \emph{neural-network-free} and \emph{backpropagation-free}: posterior weights are obtained by explicit KAHM geometry, prototype refinement is performed by an explicit adaptive update, and deployment requires no transformer forward pass and no learned neural gating network.
\end{remark}
\subsection{Performance Analysis}
\begin{assumption}[Steady-State of the Estimation Algorithm Recursions]
\label{ass:steady_state}
The number of samples $N$ is sufficiently large and the step size $\beta$ is sufficiently small so that for each $j \in \{1,\cdots,p\}$, the recursion in (\ref{eq_260320261120}) after $N$ updates has reached a steady-state regime. Hence, for an additional $(N+1)^{th}$ hypothetical update driven by an arbitrary sample  $(G(x),v)$, the change in the estimation error norm is negligible, i.e.
\begin{IEEEeqnarray}{rCl}
 \left \| \alpha^j - \left. \widehat{\alpha}^j  \right |_{N+1}  \right \|^2     & \approx & \left \| \alpha^j - \left. \widehat{\alpha}^j  \right |_{N}  \right \|^2.
\end{IEEEeqnarray}
\end{assumption}
\begin{proposition}[Prediction Error]\label{proposition_prediction_error}
For an arbitrary input $x$, the prediction error by the estimated parameters (\ref{eq_270320260656}), under Assumption~\ref{ass:steady_state}, admits the following steady-state approximation:
\begin{IEEEeqnarray}{rCl}
\label{eq_270320260707}G(x)\alpha^j - G(x)\widehat{\alpha}^j & \approx & \frac{\beta \| G(x)\|^2}{2+\beta \| G(x)\|^2}\left( \Delta(x) \alpha^j   + (\xi(x))_j  \right).
\end{IEEEeqnarray}
\end{proposition}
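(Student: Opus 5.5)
We will carry out one hypothetical $(N+1)$-th NLMS recursion (\ref{eq_260320261120}) driven by an arbitrary sample $(G(x),v)$, where $v_j = G(x)\alpha^j + \Delta(x)\alpha^j + (\xi(x))_j$ as in (\ref{eq_260320261248}). We will then impose the steady-state condition of Assumption~\ref{ass:steady_state}. This condition turns the norm-invariance into a scalar quadratic equation for the a priori error, whose nontrivial root is the claimed expression.

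\textbf{Step 1: error recursion.} Write $\widetilde{\alpha}_N := \alpha^j - \widehat{\alpha}^j|_N$. Introduce the a priori error $e_a := G(x)\widetilde{\alpha}_N$, which is exactly the left-hand side of (\ref{eq_270320260707}), and the disturbance $d := \Delta(x)\alpha^j + (\xi(x))_j$. Then the innovation is $v_j - G(x)\widehat{\alpha}^j|_N = e_a + d$. Subtracting the update from $\alpha^j$ gives
\[
\widetilde{\alpha}_{N+1} = \widetilde{\alpha}_N - \mu\,(e_a+d)\,G(x)^T, \qquad \mu := \frac{\beta}{1+\beta\|G(x)\|^2}.
\]

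\textbf{Step 2: energy relation.} Take squared norms:
\[
\|\widetilde{\alpha}_{N+1}\|^2 = \|\widetilde{\alpha}_N\|^2 - 2\mu(e_a+d)e_a + \mu^2(e_a+d)^2\|G(x)\|^2 .
\]
By Assumption~\ref{ass:steady_state} the left-hand side approximately equals $\|\widetilde{\alpha}_N\|^2$. Hence $(e_a+d)\bigl[\mu\|G(x)\|^2(e_a+d) - 2e_a\bigr]\approx 0$.

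\textbf{Step 3: solve and select the root.} The equation has two roots. The first is $e_a+d = 0$. That root means the innovation vanishes; it corresponds to a degenerate non-update, so we discard it. We keep the second branch, $e_a(2-\mu\|G(x)\|^2) \approx \mu\|G(x)\|^2 d$. Substituting $\mu$ gives
\[
\mu\|G(x)\|^2 = \frac{\beta\|G(x)\|^2}{1+\beta\|G(x)\|^2}, \qquad 2-\mu\|G(x)\|^2 = \frac{2+\beta\|G(x)\|^2}{1+\beta\|G(x)\|^2}.
\]
The common factor $1+\beta\|G(x)\|^2$ cancels, and we obtain $e_a \approx \frac{\beta\|G(x)\|^2}{2+\beta\|G(x)\|^2}\,d$, which is (\ref{eq_270320260707}).

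\textbf{Expected obstacle.} The algebra is routine. The delicate point is justifying the choice of root in Step 3, since the proposition's "$\approx$" hides it. We will argue as follows. The steady-state condition is imposed for an \emph{arbitrary} sample, so it must hold generically in $d$. The branch $e_a=-d$ would force the a priori error to equal the negative disturbance for every $d$. That contradicts $e_a$ being determined by $\widetilde{\alpha}_N$ independently of the new sample's noise. We also note $\|G(x)\|^2\le 1$, so $2-\mu\|G(x)\|^2>1$ and the division in Step 3 is safe.
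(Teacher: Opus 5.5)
Your derivation is correct and is the paper's proof: the same error recursion, the same one-step energy identity (\ref{eq_260320261512_2}) for the hypothetical $(N+1)$-th update, and the same steady-state step. The only difference in route is minor: the paper reaches (\ref{eq_260320261512_2}) through the a priori/a posteriori relation (\ref{eq_260320261354}) and then substitutes the model for $v$, while you expand the squared norm directly. In the final step the paper divides by the innovation $e_a+d$ without comment.

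One correction: drop the ``for every $d$'' argument in your last paragraph, because it excludes the branch you keep just as well. The retained relation $e_a\approx\frac{\beta\|G(x)\|^2}{2+\beta\|G(x)\|^2}\,d$ also ties $e_a$ to $d$. At fixed $x$ the identity is a quadratic in $d$ with leading coefficient $\mu\|G(x)\|^2>0$, so it holds for at most two values of $d$. Moreover, in the model $d=\Delta(x)\alpha^j+(\xi(x))_j$ is a deterministic function of $x$, not noise independent of $e_a$.

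The sound justification is the one behind your Step~3 and the paper's division. The steady-state condition yields either a vanishing innovation, for which the hypothetical update is a non-update and the condition carries no information, or the claimed relation. So the proposition is asserted at inputs where $e_a+d$ does not vanish.
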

\begin{proof}
For an analysis of the estimation algorithm, define the following error:
\begin{IEEEeqnarray}{rCl}
\left. \widetilde{\alpha}^j  \right |_i & : = & \alpha^j  - \left. \widehat{\alpha}^j \right |_{i}.
\end{IEEEeqnarray}
Using (\ref{eq_260320261120}),
\begin{IEEEeqnarray}{rCl}
\label{eq_2603201261157} \left. \widetilde{\alpha}^j  \right |_i & = & \left. \widetilde{\alpha}^j  \right |_{i-1} -  \beta \frac{(v^i)_j - G(x^i) \left. \widehat{\alpha}^j \right |_{i-1} }{1 + \beta \| G(x^i)\|^2} (G(x^i))^T \quad \text{i.e.}\\
\label{eq_2603201261158} G(x^i) \left. \widetilde{\alpha}^j  \right |_i & = & G(x^i)\left. \widetilde{\alpha}^j  \right |_{i-1} -  \beta \frac{(v^i)_j - G(x^i) \left. \widehat{\alpha}^j \right |_{i-1} }{1 + \beta \| G(x^i)\|^2}  \| G(x^i)\|^2.
\end{IEEEeqnarray}
Combining (\ref{eq_2603201261157}) and (\ref{eq_2603201261158}), we have
\begin{IEEEeqnarray}{rCl}
\left. \widetilde{\alpha}^j  \right |_i & = & \left. \widetilde{\alpha}^j  \right |_{i-1} - \frac{G(x^i)\left. \widetilde{\alpha}^j  \right |_{i-1}  - G(x^i) \left. \widetilde{\alpha}^j  \right |_i}{\| G(x^i)\|^2} (G(x^i))^T.
\end{IEEEeqnarray}
By evaluating the squared norm, we obtain the following.
\begin{IEEEeqnarray}{rCl}
\label{eq_260320261354}\left \| \left. \widetilde{\alpha}^j  \right |_i  \right \|^2 & = & \left \| \left. \widetilde{\alpha}^j  \right |_{i-1}  \right \|^2 - \frac{\left( G(x^i)\left. \widetilde{\alpha}^j  \right |_{i-1} \right)^2}{\| G(x^i)\|^2} + \frac{\left( G(x^i)\left. \widetilde{\alpha}^j  \right |_{i} \right)^2}{\| G(x^i)\|^2}.
\end{IEEEeqnarray}
Using (\ref{eq_260320261248}) in (\ref{eq_2603201261158}),
\begin{IEEEeqnarray}{rCl}
\label{eq_260320261352} G(x^i) \left. \widetilde{\alpha}^j  \right |_i & = & G(x^i)\left. \widetilde{\alpha}^j  \right |_{i-1} -  \beta \frac{ G(x^i)\left. \widetilde{\alpha}^j  \right |_{i-1} + \Delta(x^i) \alpha^j   + (\xi(x^i))_j }{1 + \beta \| G(x^i)\|^2}  \| G(x^i)\|^2.
\end{IEEEeqnarray}
Substituting the value of $G(x^i) \left. \widetilde{\alpha}^j  \right |_i$ from (\ref{eq_260320261352}) in (\ref{eq_260320261354}), we get
\begin{IEEEeqnarray}{rCl}
\nonumber \left \| \left. \widetilde{\alpha}^j  \right |_i  \right \|^2 & = & \left \| \left. \widetilde{\alpha}^j  \right |_{i-1}  \right \|^2 + \beta^2 \frac{\left( G(x^i)\left. \widetilde{\alpha}^j  \right |_{i-1} + \Delta (x^i) \alpha^j   + (\xi(x^i))_j \right)^2}{\left( 1 + \beta \| G(x^i)\|^2\right)^2}\| G(x^i)\|^2 \\
\label{eq_260320261512}&& {-}\: 2 \beta  \frac{ G(x^i)\left. \widetilde{\alpha}^j  \right |_{i-1} + \Delta (x^i) \alpha^j   + (\xi(x^i))_j }{1 + \beta \| G(x^i)\|^2} G(x^i)\left. \widetilde{\alpha}^j  \right |_{i-1}.
\end{IEEEeqnarray}
The algorithm (\ref{eq_260320261120}) uses all of $N$ available samples to run from $i=1$ to $i=N$. Our goal is to evaluate the prediction error by the estimated parameters, i.e., we seek to evaluate $G(x)\left. \widetilde{\alpha}^j  \right |_{N}$ for an arbitrary $x$, where $\left. \widetilde{\alpha}^j  \right |_{N}$ is obviously the error in estimated parameters. For this, consider $(N+1)^{th}$ sample to be an arbitrary data pair $(G(x),v)$, on which the prediction error is governed by (\ref{eq_260320261512}) for $i = N+1$:          
\begin{IEEEeqnarray}{rCl}
\nonumber \left \| \left. \widetilde{\alpha}^j  \right |_{N+1}  \right \|^2 & = & \left \| \left. \widetilde{\alpha}^j  \right |_{N}  \right \|^2 + \beta^2 \frac{\left( G(x)\left. \widetilde{\alpha}^j  \right |_{N} + \Delta (x) \alpha^j   + (\xi(x))_j \right)^2}{\left( 1 + \beta \| G(x)\|^2\right)^2}\| G(x)\|^2 \\
\label{eq_260320261512_2}&& {-}\: 2 \beta  \frac{ G(x)\left. \widetilde{\alpha}^j  \right |_{N} + \Delta (x) \alpha^j   + (\xi(x))_j }{1 + \beta \| G(x)\|^2} G(x)\left. \widetilde{\alpha}^j  \right |_{N}.
\end{IEEEeqnarray}
Under Assumption \ref{ass:steady_state}, (\ref{eq_260320261512_2}) becomes
\begin{IEEEeqnarray}{rCl}
 2 G(x)\left. \widetilde{\alpha}^j  \right |_{N} & = & \beta \frac{ G(x)\left. \widetilde{\alpha}^j  \right |_{N} + \Delta (x) \alpha^j   + (\xi(x))_j }{ 1 + \beta \| G(x)\|^2}\| G(x)\|^2, \quad \text{i.e.} \\
G(x)\left. \widetilde{\alpha}^j  \right |_{N} & = & \frac{\beta \| G(x)\|^2}{2+\beta \| G(x)\|^2}\left( \Delta (x) \alpha^j   + (\xi(x))_j  \right).
\end{IEEEeqnarray}
Thus, (\ref{eq_270320260707}) follows.
\end{proof}
\begin{remark}[Uncertainty-Aware Error Weighting]
A noteworthy feature in (\ref{eq_270320260707}) is of the factor
\[
\frac{\beta \|G(x)\|_2^2}{2+\beta \|G(x)\|_2^2},
\]
that may also be interpreted as an uncertainty-aware error weighting: when the KAHM posterior $G(x)$ is diffuse, $\|G(x)\|_2^2$ is smaller and the corresponding noise-sensitive contribution is suppressed, whereas sharper assignments yield larger weights. Because $G(x)$ is a probability vector, $\|G(x)\|_2^2 \le 1$, so this modulation remains uniformly stable.
\end{remark}
\begin{proposition}[Encoding Error]\label{proposition_encoding_error}
The encoding error associated to the estimated parameters (\ref{eq_270320260656}), under Assumption~\ref{ass:steady_state}, for any $\rho > 0$, is upper bounded as
\begin{IEEEeqnarray}{rCl}
\nonumber  \mathop{\mathbb{E}}_{x \sim \mathbb{P}_x} \left [\left \| f_{x \mapsto z}(x) - \widehat{f}_{x \mapsto z}(x) \right \|^2 \right] & \leq &  (1+\rho) \left(1 +\frac{\beta }{2+\beta } \right)^2  \left(\sum_{c=1}^C\left \|w^c \right \|^2\right) \sum_{c = 1}^C \mathop{\mathbb{E}}_{x \sim \mathbb{P}_x} \left[ \left|  \Phi_c(x) - \mathbb{P}_{ y\mid x}(y_c = 1 \mid x)\right|^2 \right] \\
\label{eq_020420260755} && {+} \:  (1+\rho^{-1}) \left( \frac{\beta }{2+\beta } \right)^2 \mathop{\mathbb{E}}_{x \sim \mathbb{P}_x} \left[ \|\xi(x)\|^2 \right ]. \IEEEeqnarraynumspace
\end{IEEEeqnarray}
\end{proposition}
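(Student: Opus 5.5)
The plan is to work coordinatewise, using the decomposition~(\ref{eq_260320261248}) together with Proposition~\ref{proposition_prediction_error}, and then to reassemble the $p$ coordinates and take expectations. Fix $j\in\{1,\dots,p\}$. The $j$-th coordinate of $f_{x\mapsto z}(x)$ is the prototype-mixture term $\sum_c \mathbb{P}_{y\mid x}(y_c=1\mid x)(w^c)_j = G(x)\alpha^j+\Delta(x)\alpha^j$. The within-cluster residual $\xi_{\mathrm{res}}$ is treated as part of the noise $\xi$, exactly as in the noisy-teacher model~(\ref{eq_240320261527}). The $j$-th coordinate of $\widehat f_{x\mapsto z}(x)$ is $G(x)\widehat{\alpha}^j$. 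Hence the $j$-th error coordinate is
\[
\Delta(x)\alpha^j + G(x)\bigl(\alpha^j-\widehat{\alpha}^j\bigr).
\]

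Next, substitute the steady-state approximation~(\ref{eq_270320260707}) for the second term, writing $k(x):=\beta\|G(x)\|^2/(2+\beta\|G(x)\|^2)$. The error coordinate becomes approximately
\[
\bigl(1+k(x)\bigr)\Delta(x)\alpha^j + k(x)\,(\xi(x))_j .
\]
Because $G(x)$ is a probability vector, $\|G(x)\|^2\le 1$. Since $t\mapsto \beta t/(2+\beta t)$ is increasing, this gives $0\le k(x)\le \beta/(2+\beta)$ uniformly in $x$.

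Then apply $(a+b)^2\le(1+\rho)a^2+(1+\rho^{-1})b^2$ to each coordinate, and bound $(1+k(x))^2$ and $k(x)^2$ by their uniform upper bounds. For the posterior term, use Cauchy--Schwarz: $(\Delta(x)\alpha^j)^2\le\|\Delta(x)\|^2\|\alpha^j\|^2$. Summing over $j$ gives
\[
\sum_j\|\alpha^j\|^2=\sum_c\|w^c\|^2,
\qquad
\sum_j(\xi(x))_j^2=\|\xi(x)\|^2 .
\]
Finally, take $\mathbb{E}_{x\sim\mathbb{P}_x}$ and use $\mathbb{E}\|\Delta(x)\|^2=\sum_c\mathbb{E}|\Phi_c(x)-\mathbb{P}_{y\mid x}(y_c=1\mid x)|^2$, which yields~(\ref{eq_020420260755}).

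The main obstacle is the bookkeeping of what exactly is being compared. First, the steady-state relation is only an approximation, so the resulting inequality inherits the ``$\approx$'' of Proposition~\ref{proposition_prediction_error}, and this must be stated explicitly. Second, $\xi_{\mathrm{res}}$ must be accounted for consistently. I would first try to identify the target with the prototype mixture and place $\xi_{\mathrm{res}}$ inside $\xi$, as the noisy-teacher model does; this makes the noise contribution appear only through the factor $\beta/(2+\beta)$. If $\xi_{\mathrm{res}}$ is instead kept in the target $f_{x\mapsto z}$, an extra additive term $\mathbb{E}\|\xi_{\mathrm{res}}(x)\|^2$ appears and the bound as stated no longer follows, so that choice has to be fixed first.
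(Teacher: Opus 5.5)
Your proposal is correct and follows the paper's proof essentially step for step. The steps match in order: the coordinatewise error $G(x)(\alpha^j-\widehat{\alpha}^j)+\Delta(x)\alpha^j$, substitution of the steady-state relation from Proposition~\ref{proposition_prediction_error}, the bound $\|G(x)\|^2\le 1$, the $(1+\rho)$/$(1+\rho^{-1})$ splitting, Cauchy--Schwarz, and summation over $j$ using $\sum_j\|\alpha^j\|^2=\sum_c\|w^c\|^2$.

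Your bookkeeping caveats are also well placed. The paper's proof starts from $(f_{x\mapsto z}(x)-\widehat{f}_{x\mapsto z}(x))_j=G(x)\alpha^j-G(x)\widehat{\alpha}^j+\Delta(x)\alpha^j$, so it silently makes exactly your identification, dropping $\xi_{\mathrm{res}}$ from the target and absorbing it into $\xi$. It also treats the steady-state ``$\approx$'' as an equality.
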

\begin{proof}
Consider
\begin{IEEEeqnarray}{rCl}
\left(f_{x \mapsto z}(x) - \widehat{f}_{x \mapsto z}(x) \right)_j & = & G(x)\alpha^j - G(x)\widehat{\alpha}^j + \Delta (x) \alpha^j.
\end{IEEEeqnarray}
Using (\ref{eq_270320260707}), we have
\begin{IEEEeqnarray}{rCl}
\left(f_{x \mapsto z}(x) - \widehat{f}_{x \mapsto z}(x) \right)_j & = & \left(1 + \frac{\beta \| G(x)\|^2}{2+\beta \| G(x)\|^2} \right) \Delta (x) \alpha^j  + \frac{\beta \| G(x)\|^2}{2+\beta \| G(x)\|^2} (\xi(x))_j.
\end{IEEEeqnarray}
Using $(a+b)^2 \leq (1+\rho)a^2 + (1+\rho^{-1})b^2$, it follows that 
\begin{IEEEeqnarray}{rCl}
\nonumber \lefteqn{\left | \left(f_{x \mapsto z}(x) - \widehat{f}_{x \mapsto z}(x) \right)_j \right |^2} \\
& \leq &   (1+\rho)\left(1 + \frac{\beta \| G(x)\|^2}{2+\beta \| G(x)\|^2} \right)^2 |\Delta (x) \alpha^j|^2 + (1+\rho^{-1}) \left( \frac{\beta \| G(x)\|^2}{2+\beta \| G(x)\|^2} \right)^2 |(\xi(x))_j|^2
\end{IEEEeqnarray}
As all elements of vector $G(x)$ lie in $[0,1]$ and their sum equals unity, $\|G(x)\|^2 \leq 1$, and thus
\begin{IEEEeqnarray}{rCl}
\frac{\beta \| G(x)\|^2}{2+\beta \| G(x)\|^2} & \leq & \frac{\beta }{2+\beta }.
\end{IEEEeqnarray}
Also,
\begin{IEEEeqnarray}{rCCCl}
|\Delta G(x) \alpha^j|^2 & \leq & \left \|\alpha^j \right \|^2 \| \Delta (x) \|^2 & = & \left \|\alpha^j \right \|^2 \sum_{c = 1}^C \left|\mathbb{P}_{ y\mid x}(y_c = 1 \mid x) - \Phi_c(x) \right|^2.
\end{IEEEeqnarray}
Therefore,
\begin{IEEEeqnarray}{rCl}
\nonumber \lefteqn{\left | \left(f_{x \mapsto z}(x) - \widehat{f}_{x \mapsto z}(x) \right)_j \right |^2} \\
& \leq &   (1+\rho) \left(1+\frac{\beta }{2+\beta } \right)^2  \left \|\alpha^j \right \|^2 \sum_{c = 1}^C \left|  \Phi_c(x) - \mathbb{P}_{ y\mid x}(y_c = 1 \mid x)\right|^2 +  (1+\rho^{-1}) \left( \frac{\beta }{2+\beta } \right)^2 |(\xi(x))_j|^2.
\end{IEEEeqnarray}
That is,
\begin{IEEEeqnarray}{rCl}
\nonumber \lefteqn{\left \| f_{x \mapsto z}(x) - \widehat{f}_{x \mapsto z}(x) \right \|^2} \\
& \leq &   (1+\rho) \left(1+\frac{\beta }{2+\beta } \right)^2  \sum_{j=1}^p\left \|\alpha^j \right \|^2 \sum_{c = 1}^C \left|  \Phi_c(x) - \mathbb{P}_{ y\mid x}(y_c = 1 \mid x)\right|^2 +  (1+\rho^{-1}) \left( \frac{\beta }{2+\beta } \right)^2 \|\xi(x)\|^2 \\
& = &  (1+\rho) \left(1+\frac{\beta }{2+\beta } \right)^2  \left(\sum_{c=1}^C\left \|w^c \right \|^2\right) \sum_{c = 1}^C \left|  \Phi_c(x) - \mathbb{P}_{ y\mid x}(y_c = 1 \mid x)\right|^2 +  (1+\rho^{-1}) \left( \frac{\beta }{2+\beta } \right)^2 \|\xi(x)\|^2.
\end{IEEEeqnarray}
Thus, (\ref{eq_020420260755}) follows.
\end{proof}
\begin{theorem}[Encoding Error Bound]\label{theorem_02041025}
Given a data set $ \{(x^i,y^i)  \; \mid \;i \in \{1,2,\cdots,N\} \}   \sim  (\mathbb{P}_{x,y})^N$; under Assumption~\ref{ass:steady_state}, and assuming that (\ref{eq_310320262030}) holds for each $c \in \{1,\cdots,C\}$; we have with probability at least $1-\delta$ for any $\delta \in (0,1)$, $\eta > 0$, and $\rho > 0$;
\begin{IEEEeqnarray}{rCl}
\nonumber \lefteqn{ \mathop{\mathbb{E}}_{x \sim \mathbb{P}_x} \left [\left \| f_{x \mapsto z}(x) - \widehat{f}_{x \mapsto z}(x) \right \|^2 \right]} \\
\nonumber & \leq & (1+\rho) (1 + \eta)\left(1+\frac{\beta }{2+\beta } \right)^2  \left(\sum_{c=1}^C\left \|w^c \right \|^2\right) \left(\sum_{c=1}^C \left | 1- \frac{\sum_{i=1}^{N_c} \Phi_c(x^{\mathrm{I}_i^{c}})}{N \| \Phi_c \|_{L^2(\mathbb{R}^n,\mathbb{P}_{x})}^2}  \right |^2 \| \Phi_c \|_{L^2(\mathbb{R}^n,\mathbb{P}_{x})}^2 \right) \\
\nonumber & & {+}\: (1+\rho) (1+\eta^{-1}) \left(1+\frac{\beta }{2+\beta } \right)^2  \left(\sum_{c=1}^C\left \|w^c \right \|^2\right) \sum_{c=1}^C \min\left( \frac{1}{N} \sum_{i=1}^N |y_c^i - \widetilde{h}_{x \mapsto y_c}(x^i) |^2 +  \frac{4}{\sqrt{N}} +  \sqrt{\frac{\log(C/\delta)}{2N}}, \right. \\
\label{eq_020420261457}&& \left. \frac{1}{\left(N_c/N \right)^2} \left( \frac{3}{\sqrt{N}} + \sqrt{\frac{8  \log(C/\delta)}{N}} \right)  \right) +  (1+\rho^{-1}) \left( \frac{\beta }{2+\beta } \right)^2 \mathop{\mathbb{E}}_{x \sim \mathbb{P}_x} \left[ \|\xi(x)\|^2 \right ].
\end{IEEEeqnarray}
\end{theorem}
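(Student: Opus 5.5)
The argument combines Proposition~\ref{proposition_encoding_error} with Theorem~\ref{theorem_300320262124}, applied once per semantic cluster, and a union bound over the $C$ clusters.

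First, Proposition~\ref{proposition_encoding_error} already gives, under Assumption~\ref{ass:steady_state} and for any $\rho>0$, a deterministic bound. Its first term is
\[
(1+\rho)\Bigl(1+\tfrac{\beta}{2+\beta}\Bigr)^2\Bigl(\textstyle\sum_c\|w^c\|^2\Bigr)\textstyle\sum_{c=1}^C \mathbb{E}_{x}\bigl[|\Phi_c(x)-\mathbb{P}_{y\mid x}(y_c=1\mid x)|^2\bigr].
\]
Its second term is the teacher-noise term $(1+\rho^{-1})(\beta/(2+\beta))^2\,\mathbb{E}_x[\|\xi(x)\|^2]$. This second term already matches the final summand of (\ref{eq_020420261457}), so only the sum of posterior-approximation errors remains to be bounded.

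Second, for each fixed $c$, Assumption~\ref{ass:feature_map_concentartion} holds, so Theorem~\ref{theorem_300320262124} applies to $\mathbb{E}_x[|\Phi_c-\mathbb{P}_{y\mid x}(y_c=1\mid\cdot)|^2]$. We invoke it at confidence level $\delta/C$ instead of $\delta$. Each resulting event has probability at least $1-\delta/C$. By the union bound, all $C$ inequalities hold simultaneously with probability at least $1-\delta$. Replacing $\delta$ by $\delta/C$ converts $\log(1/\delta)$ into $\log(C/\delta)$ in both branches of the minimum, which is exactly the form appearing in the statement. The constant $\eta>0$ is chosen the same for all $c$, and the first, deterministic piece of each bound is unaffected by the change of confidence level.

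Third, on this joint event we sum the $C$ bounds and substitute into the bound from the first step. Multiplying by the nonnegative factor $(1+\rho)(1+\beta/(2+\beta))^2\sum_c\|w^c\|^2$ preserves the inequalities. Distributing this factor over the $(1+\eta)$ part and the $(1+\eta^{-1})$ part yields the first two lines of (\ref{eq_020420261457}).

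The only real subtlety is the probabilistic bookkeeping, and in particular the fact that the $\Phi_c$ are data-dependent. This is handled the same way as in Theorem~\ref{theorem_300320262124}, which is already stated for the data-dependent hypothesis $\widetilde h_{x\mapsto y_c}\in\mathcal{M}_c$. The union bound over $c$ therefore needs no further uniformity argument. The steady-state approximation enters only through Proposition~\ref{proposition_encoding_error}, so the final inequality inherits that approximate character from it.
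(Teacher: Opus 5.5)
Your proposal is correct and follows the paper's proof: apply Theorem~\ref{theorem_300320262124} to each cluster at confidence level $\delta/C$, take a union bound over the $C$ clusters, and substitute the simultaneous bounds into Proposition~\ref{proposition_encoding_error}. One minor wording point is that the first piece of each per-cluster bound depends on the data rather than being deterministic, but it does not involve $\delta$, which is all your argument needs.
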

\begin{proof}
For each $c \in \{1,\cdots,C \}$, Theorem~\ref{theorem_300320262124} is applied with confidence level $\delta/C$. So, we have with probability at least $1- \delta/C$, 
\begin{IEEEeqnarray}{rCl}
\nonumber \lefteqn{\mathop{\mathbb{E}}_{x \sim \mathbb{P}_x}\left[ \left| \Phi_c(x) - \mathbb{P}_{y | x}(y_c = 1 | x) \right|^2\right] } \\ \nonumber & \leq & (1+\eta) \left | 1- \frac{\sum_{i=1}^{N_c} \Phi_c(x^{\mathrm{I}_i^{c}})}{N \| \Phi_c \|_{L^2(\mathbb{R}^n,\mathbb{P}_{x})}^2}  \right |^2 \| \Phi_c \|_{L^2(\mathbb{R}^n,\mathbb{P}_{x})}^2  \\
\label{eq_020420261328}&& {+}\: (1+\eta^{-1}) \min\left( \frac{1}{N} \sum_{i=1}^N |y_c^i - \widetilde{h}_{x \mapsto y_c}(x^i) |^2 +  \frac{4}{\sqrt{N}} +  \sqrt{\frac{\log(C/\delta)}{2N}},  \frac{1}{\left(N_c/N \right)^2} \left( \frac{3}{\sqrt{N}} + \sqrt{\frac{8  \log(C/\delta)}{N}} \right)  \right).
\end{IEEEeqnarray}
Define $A^c : = \{(\ref{eq_020420261328}) \text{ holds}\}$ and $B^c : = \{(\ref{eq_020420261328}) \text{ does not hold}\}$, and apply a union bound over $c$:
\begin{IEEEeqnarray}{rCCCCCCCl}
\text{probability of } \mathop{\cap}_{c=1}^C A^c & = & 1 - \text{probability of } \mathop{\cup}_{c=1}^C B^c & \geq & 1- \sum_{c=1}^C \text{probability of } B^c & \geq & 1- \sum_{c = 1}^C \frac{\delta}{C}  & = & 1- \delta.
\end{IEEEeqnarray}
Hence, with probability at least $1- \delta$, (\ref{eq_020420261328}) holds simultaneously for all $c \in \{1,\cdots,C\}$. Substituting the simultaneous cluster-wise bounds (\ref{eq_020420261328}) into (\ref{eq_020420260755}) leads to the result.   
\end{proof}
\begin{remark}[Interpretation of Theorem~\ref{theorem_02041025}]
The upper bound in Theorem~\ref{theorem_02041025} provides an explicit decomposition of the encoding error into three components: a term associated with the approximation of the posterior probabilities, a finite-sample/generalization term inherited from the cluster-wise learning problem, and a term reflecting the influence of the teacher noise. In this decomposition, the step-size parameter $\beta$ appears through the factors $\left(1+\frac{\beta}{2+\beta}\right)^2$ and $\left(\frac{\beta}{2+\beta}\right)^2$. Since both quantities are increasing functions of $\beta$ for $\beta>0$, smaller admissible values of step-size $\beta$ lead to a tighter upper bound. In particular, the final term in (\ref{eq_020420261457}) makes explicit the noise-filtering property of the proposed KAHM--NLMS pipeline: the teacher-noise contribution enters the encoding bound only through the damping factor $\left(\frac{\beta}{2+\beta}\right)^2<1$. Hence, for every admissible $\beta>0$, the effect of teacher noise is attenuated in the final estimate rather than propagated without control. The parameters $\eta>0$ and $\rho>0$ act as free balancing parameters within the estimate. In particular, $\eta$ controls the trade-off between the posterior-approximation term and the finite-sample/generalization contribution, whereas $\rho$ balances these terms against the contribution of the teacher noise. Consequently, $\eta$ and $\rho$ may be selected so as to minimize the right-hand side of the bound (\ref{eq_020420261457}).
\end{remark}
\begin{remark}[Practical Significance of Theorem~\ref{theorem_02041025} for Austrian-Law Benchmark]
In the simulated distributed-law setting studied later, Theorem~\ref{theorem_02041025} has a direct operational reading. Each law-specific encoder is trained only on its own local supervision, so the quality of the final multi-law system depends on whether the correct semantic clusters remain geometrically identifiable from lexical features within each local model and whether the corresponding teacher embeddings admit stable prototype representatives. The theorem shows that, under these conditions, the final encoding error is controlled by three transparent factors only: posterior mismatch in the KAHM stage, finite-sample approximation in the local learning problem, and noise inherited from the teacher embeddings. This is precisely the form of guarantee that is relevant for the benchmark in Section~\ref{sec_experiments}, because it explains why a family of separate law-local encoders can be combined later by lightweight gating without requiring a centralized neural query encoder at deployment time.
\end{remark}
\begin{remark}[Scope of the Theoretical Guarantee]
Theorem~\ref{theorem_02041025} provides a conditional error decomposition for the proposed estimator rather than a fully assumptions-free finite-iteration guarantee. The posterior-estimation component is controlled through the RKHS approximation step, whereas the prototype-refinement component is analyzed under the steady-state approximation adopted for the NLMS recursion. Accordingly, the theorem identifies the terms that govern the final encoding error (posterior mismatch, finite-sample approximation, and teacher-noise effects) under the modeling and steady-state assumptions used in this paper. This interpretation is consistent with standard adaptive-filter analyses, where sufficiently small step sizes and sufficiently long runs are used to characterize steady-state behavior rather than exact transient identities \cite{sayed2008adaptive,haykin2014adaptive}.
\end{remark}

\section{Proof-of-Principle Retrieval Instantiation}\label{sec_experiments}
This section evaluates the proposed encoding method on an Austrian-law retrieval benchmark. The Austrian-law benchmark instantiates the fixed-teacher semantic encoding problem developed in Sections~\ref{sec_background}--\ref{sec_encoding_solution}. The central empirical question is whether the proposed KAHM-based encoder can serve, in this domain, as a compute-efficient substitute for online transformer query encoding by mapping inexpensive lexical query features into a high-quality semantic embedding space. To investigate this, we instantiate the semantic targets in Problem~\ref{problem_encoding} by teacher embeddings and train the proposed encoder to predict them from lexical features. The downstream task is law retrieval: given a natural-language query, the system retrieves legal passages and routes the query to the correct Austrian law.

The experimental comparison is intentionally evaluation-matched. All learned adapters considered in this section operate under the same lexical front end, the same fixed semantic teacher space, the same law-wise split, the same distance-gated serving contract, and the same frozen corpus index. The experiments isolate the contribution of KAHM geometry within a controlled query-adaptation regime. Section~\ref{sec_data_construction} describes the construction of the corpus and query benchmark, Section~\ref{sec_features_representation} introduces the semantic and lexical representations, Section~\ref{sec_encoding_model} describes the training of the KAHM-based encoding models, Section~\ref{sec_retrieval_protocol} defines the retrieval evaluation protocol and performance measures, Section~\ref{subsec:experimental-results} reports the experimental results, Section~\ref{subsec:reproducibility} documents reproducibility and artifact availability, and Section~\ref{sec_scope} discusses the scope of the current study.

\subsection{Data Construction}\label{sec_data_construction}
The law/domain universe contains 84 Austrian statutes. Table~\ref{tab:austrian-law-abbreviations} in Appendix C lists the abbreviations and English glosses used as benchmark labels. The retrieval corpus is constructed from consolidated German legal texts downloaded from the Austrian RIS portal \cite{ris_austria_portal}, with one PDF per statute. The PDFs are converted into retrieval units by a layout-aware extraction pipeline that reconstructs reading order from PDF layout, removes recurring headers and footers, segments text into paragraphs using legal section markers, indentation cues, and large vertical gaps, and merges paragraphs that continue across page breaks. The extractor also performs dehyphenation across line breaks while preserving genuine compound hyphens. In the reported configuration, extraction is performed in passage mode, which chunks the cleaned paragraph stream into overlapping passages with a target length of 260 tokens, an overlap of 60 tokens, a minimum retained length of 40 tokens, and a hard maximum of 420 tokens. The resulting aligned evaluation corpus contains 10,762 retrieval units.

We treat each law as a separate domain not only because it is the prediction target, but also because this organization matches the supervision structure studied in the paper. The training data are constructed law-wise, the serving contract is law-wise, and the later encoder family is trained law-wise as well. The benchmark does not execute a live federated protocol, but the law-wise organization is consistent with deployment settings in which supervision remains attached to specific legal areas or organizational units rather than being pooled into one undifferentiated cross-law training repository.

Queries are generated synthetically by first defining law-specific issue inventories, then expanding them with common legal facets such as prerequisites, deadlines, competence, application form, evidence, costs, remedies, service, limitation, enforcement, liability, exceptions, and sanctions. Here, a topic is not merely a law label. Rather, for each law, the generator constructs law-internal issue specifications from a law-specific issue inventory and its facet expansion, and assigns a unique topic identifier to each such issue. For each topic, the generator samples a law-conditioned context including actor, counterparty, city, time expression, amount, communication channel, evidence type, and authority, and renders the topic into several surface forms. To cover several practically relevant surface forms of legal-information needs, each underlying topic is rendered in seven surface forms: \texttt{authority} queries ask which institution is competent or whether a matter belongs before an authority or a court; \texttt{keyword} queries consist mainly of short content words or search-style phrases; \texttt{nl\_long} queries are longer natural-language questions built from fuller case descriptions; \texttt{nl\_short} queries are short natural-language questions; \texttt{procedural} queries focus on steps of the procedure, such as deadlines, applications, evidence, costs, remedies, or jurisdiction; \texttt{scenario} queries are phrased as short fact patterns followed by an explicit question; and \texttt{fragment} queries are deliberately incomplete or elliptical formulations intended to mimic terse user input. For each topic, the generator samples three different realizations of each style, so that one topic yields $7\times 3=21$ distinct queries. In the reported experiment, the query generator produces 40,000 training queries and 5,000 test queries. Both sets are stratified. Consistent with the supervised fixed semantic space deployment regime studied here, the default split is topic-overlapping by design: test queries are new surface realizations of law-specific topics represented in training for the same law, with no exact query-text overlap across the two splits.

This synthetic construction is broader than a narrow template benchmark, because the generator varies issue inventories, legal facets, sampled contexts, and user-facing formulations rather than instantiating one fixed query pattern. We therefore use it as a controlled benchmark for relative method comparison and for robustness to heterogeneous in-domain query formulations within Austrian-law retrieval. The topic-overlapping protocol is therefore deliberate rather than incidental: it tests the target deployment setting in which relevant law-internal issue types are represented during training, but users may express them through diverse paraphrases, fact-pattern descriptions, procedural framings, and elliptical formulations. The benchmark consequently measures in-domain formulation robustness and sampled contextual variation around represented legal issues, not cold-start discovery of genuinely unseen issues.

\subsection{Semantic and Lexical Representations}\label{sec_features_representation}
We instantiate the semantic target space with dense embeddings produced by the SentenceTransformer model \texttt{mixedbread-ai/deepset-mxbai-embed-de-large-v1}~\cite{mxbai_model_card,sentence_bert}. 
This model is an open-source German/English retrieval encoder developed jointly by Mixedbread and deepset, built on top of \texttt{intfloat/multilingual-e5-large}. The official model card lists it as a model of approximately 0.5B parameters, while the underlying multilingual-E5-large architecture has 24 transformer layers and 1024-dimensional embeddings \cite{mxbai_model_card,multilingual_e5_model_card,multilingual_e5_techreport}. According to the official vendor documentation, the model was initialized from \texttt{multilingual-e5-large}, fine-tuned on more than 30 million high-quality German training pairs, trained with the AnglE loss, and optimized for compression using a mixture of full fine-tuning and LoRA, while also supporting binary quantization and Matryoshka representation learning \cite{mxbai_blog,mxbai_model_card,angle_paper,matryoshka_paper}. We use this same encoder both to embed the law-passage corpus and to precompute the semantic target vectors for the training and test queries, storing the query targets in separate NPZ files so that the subsequent KAHM-based encoder learning requires no further online transformer inference. In our implementation, the SentenceTransformers interface is instantiated with \texttt{truncate\_dim=1024}, thereby fixing the output dimensionality for both passage and query embeddings. Query embeddings are $L^2$-normalized and in the final retrieval evaluation all query and corpus vectors are $L^2$-normalized again and indexed with FAISS~\cite{faiss_library} \texttt{IndexFlatIP}, so that inner-product ranking is equivalent to cosine-similarity retrieval on normalized vectors.

The lexical baseline index is constructed with a fixed IDF--SVD pipeline. Concretely, each retrieval unit is first preprocessed by normalizing case, canonicalizing legal section markers, and handling standalone numbers with a configurable replacement scheme. The resulting text is then mapped to a sparse TF--IDF representation \cite{salton1988term}, where the word-level feature space is not induced from the corpus alone but fixed to the full tokenizer vocabulary of the semantic teacher model \texttt{mixedbread-ai/deepset-mxbai-embed-de-large-v1}. The tokenization is performed through the Hugging Face \texttt{transformers} interface so that the lexical features remain compatible with the teacher model's subword inventory \cite{wolf2020transformers}. The sparse TF--IDF matrix is subsequently compressed with truncated singular value decomposition, i.e.\ latent semantic analysis (LSA) \cite{deerwester1990indexing}, using the randomized low-rank approximation strategy implemented in modern SVD solvers \cite{halko2011finding}. In our setup, this yields a 512-dimensional dense lexical vector for each retrieval unit. The fitted extraction pipeline is serialized and then applied to the complete set of retrieval units, whose identifiers and dense lexical vectors are stored in a NPZ file for subsequent retrieval and adapter-training experiments.

\subsection{Development of the Law-Specific Encoding Models}\label{sec_encoding_model}
To instantiate Problem~\ref{problem_encoding} in the Austrian-law retrieval benchmark, we constructed paired lexical-semantic training data from the synthetic query collection. Each query is associated with a query text, a query identifier, and a consensus law. Let $q \in \{1,\cdots,Q\}$ denote the law/domain index. For each query text, the lexical representation was obtained by the fixed IDF–SVD model, whereas the semantic representation was given by the corresponding Mixedbread query embedding. Both the IDF–SVD query vectors and the Mixedbread target embeddings are $L^2-$normalized in the training pipeline. Hence, for each law domain $q$, the available data form a law-specific sample set $\{ (x^i,v^i) \; \mid \; q^i = q,\; i \in \{1,\cdots,N\}\}$, where $x^i$ plays the role of the lexical feature vector and $v^i$ is the semantic vector generated by the noisy teacher. This law-wise decomposition is deliberate. It matches the distributed-law scenario motivating the paper: if training supervision remains attached to individual laws, then each law can contribute a local encoder without requiring that all query data be merged into a single cross-law training repository. The decomposition is also aligned with prior KAHM-based collaborative and federated constructions, where prediction can be assembled from local geometric models via distance-type summaries rather than repeated exchange of gradient updates \cite{KAHM,kumar2024geometricallyinspiredkernelmachines,kumar2025operatortheoreticframeworkgradientfreefederated}. For each law-specific data set, we first applied a random holdout split into a core training part and a validation part. The validation size was chosen as nearly 5\% of the total samples. The KAHM-based feature-map and cluster prototypes were learned exclusively from the core training subset. 

The semantic clusters are not manually annotated. Instead, they are obtained directly from the noisy-teacher training samples by $K-$means clustering, i.e., the set $\{ v^i \; \mid \; q^i = q,\; i \in \{1,\cdots,N\}\}$ is partitioned into $C$ different clusters. In the main experiments we use $C=300$ semantic clusters per law. This value is adopted as a practical operating point that balances teacher-space reconstruction quality against training cost. To document the sensitivity of the method to this design choice, Table~\ref{tab:ablation-nclusters} reports an ablation over $C \in \{100,200,300,400\}$ under the same law-wise training pipeline and evaluated query benchmark.
\begin{table}[t]
\centering
\renewcommand{\arraystretch}{1.1}
\setlength{\tabcolsep}{8pt}
\caption{Sensitivity of the law-specific KAHM encoders to the number of semantic clusters $C$. Reported values summarize teacher-space reconstruction and total training time on the evaluated query benchmark. The table documents the practical operating range of $C$ and the diminishing returns beyond $C=300$.}
\label{tab:ablation-nclusters}
\begin{tabular}{@{}ccccc@{}}
\toprule
$C$ & Cosine mean & MSE & Overall $R^2$ & Total train time (s) \\
\midrule
100 & 0.938682 & 0.000117 & 0.879808 & 436.81 \\
200 & 0.949674 & 0.000096 & 0.902101 & 856.31 \\
300 & 0.953579 & 0.000088 & 0.909525 & 1343.09 \\
400 & 0.954752 & 0.000086 & 0.911746 & 1687.04 \\
\bottomrule
\end{tabular}
\end{table}
\begin{remark}[Sensitivity to the Number of Semantic Clusters]
Table~\ref{tab:ablation-nclusters} shows that increasing the number of semantic clusters yields a monotone
improvement in teacher-space reconstruction, which is consistent with the view that a finer partition of the
teacher space supports a more accurate prototype-mixture approximation. At the same time, the marginal gains
decay rapidly: the increase in combined cosine mean drops from $0.010992$ when moving from $C=100$ to
$C=200$, to $0.003905$ from $C=200$ to $C=300$, and to only $0.001173$ from $C=300$ to $C=400$; the
corresponding gains in overall $R^2$ are $0.022293$, $0.007424$, and $0.002221$. Since the improvement
beyond $C=300$ is small relative to the added computational cost, $C=300$ is a reasonable operating point
for the main experiments.
\end{remark}
Since the downstream cluster-specific KAHM-based modeling of lexical features requires at least two samples per cluster, singleton clusters are handled explicitly by adding one auxiliary training point to each singleton cluster. Concretely, for each singleton cluster with unique sample $(x^s,v^s)$, the algorithm first computes nearest neighbors in the lexical input space by Euclidean distance. Let $x^n$ denote the nearest non-self neighbor of $x^s$ in the training set. The auxiliary lexical point is then constructed as $x^{aux} = 0.9x^s+0.1x^n$. After interpolation, $x^{aux}$ is rescaled to have the same Euclidean norm as $x^s$. On the semantic side, the target is copied exactly, i.e. $v^{aux} = v^s$. Hence, each singleton cluster is converted into the two-point set $\{(x^s,v^s), (x^{aux},v^{aux}) \}$, preserving the number of semantic clusters while satisfying the minimum-sample requirement of the KAHM-based modeling of a cluster. The KAHM-based modeling of a cluster is required to evaluate the space folding measure (Definition~\ref{definition_150220251240}) and thus the feature-map (Definition~\ref{def_030420261238}). The semantic prototypes are initialized as empirical centroids of the output-space clusters of the noisy-teacher embeddings. These cluster centroids serve as the starting point of the prototype vectors. The validation subset was then used to select the soft assignment hyperparameters $(\omega, K)$ by minimizing validation mean squared error in predicting Mixedbread embeddings over a grid of sharpening parameters $\omega$ and truncation levels $K$, namely
\begin{IEEEeqnarray}{rCl}
\omega & \in & \{5,8,10,11,12,13,14,15,16,17,18,19,20\},\\ 
K & \in & \{2,5,8,10,11,12,13,14,15,16,17,18,19,20,25,50,75,100,125,150,175,200\}.
\end{IEEEeqnarray}
After model selection, the semantic prototypes are refined with the NLMS update in~(\ref{eq_260320261120}) for \(20\) epochs with step size \(\beta=0.1\), using the union of the core-training and validation samples. A separate serialized encoder is then stored for each law. In the final multi-law setting, these law-specific encoders are not merged into a single cross-law model; instead, prediction is performed by distance-gated model selection. More precisely, for a query represented by lexical feature vector \(x\), we compute for each law-specific encoder the minimum clusterwise space-folding score \(\min_c \mathcal{T}_{\mathbf{X}^c}(x)\) and select the prediction produced by the encoder attaining the smallest such score. Conceptually, this selection rule is the semantic-encoding analogue of the minimum-distance aggregation used in prior KAHM federated classifiers: the local law-wise models remain separate, while the final decision is assembled from lightweight geometric scores rather than from a jointly optimized cross-law parameter vector \cite{KAHM,kumar2024geometricallyinspiredkernelmachines,kumar2025operatortheoreticframeworkgradientfreefederated}.
\begin{remark}[Relation to the i.i.d.\ sampling assumption] The i.i.d.\ assumption in Section~\ref{sec_130420262100} concerns the generation of the finite training samples $\mathcal{S}$ used in the approximation and generalization analysis. After the relevant samples have been drawn, the NLMS recursion in~(\ref{eq_260320261120}) is run for multiple passes over that fixed empirical sample in order to approximate the steady-state regime invoked in Proposition~\ref{proposition_prediction_error} through Assumption~\ref{ass:steady_state}. Accordingly, the reported $20$ epochs should be interpreted as an implementation choice for stabilizing the prototype refinement on the observed sample towards steady-state regime.
\end{remark}
\begin{remark}[Empirical Diagnostic for Assumption~\ref{ass:feature_map_concentartion}]
Because the theoretical analysis in Section~\ref{sec_encoding_solution} relies on Assumption~\ref{ass:feature_map_concentartion}, we additionally evaluated a sample analogue of the cluster-wise $L_2$-mass condition in~(\ref{eq_310320262030}) on the realized law-wise KAHM-induced feature-maps. Concretely, for each law $q$ and semantic cluster $c \in \mathcal{Q}^q$ (covered by law $q$), we computed
\[
\widehat{\|\Phi_c\|^2_{L_2}}
=
\frac{1}{|\mathcal{J}^q|}\sum_{i \in \mathcal{J}^q}^{|\mathcal{J}^q|}\Phi_c(x_i)^2,\quad c \in \mathcal{Q}^q,
\]
and compared it with $|\mathcal{J}^{q,c}|/|\mathcal{J}^q|$. Across the $84$ laws and $25{,}200$ evaluated clusters (i.e. $Q = 84$ and $C = 25200$), the worst-case margin $\widehat{\|\Phi_c\|^2_{L_2}} - |\mathcal{J}^{q,c}|/|\mathcal{J}^q|$ was $-2.898\times 10^{-3}$, while the median and mean margins were $-3.16\times 10^{-5}$ and $-7.19\times 10^{-5}$, respectively. Thus, although the exact finite-sample evaluation is slightly negative for most clusters, the deviations are uniformly very small in absolute value, in particular, the sample analogue of~(\ref{eq_310320262030}) is satisfied for all evaluated clusters up to an absolute tolerance of $3\times 10^{-3}$. We therefore interpret Assumption~\ref{ass:feature_map_concentartion} as empirically well supported, up to a very small approximation error, in the law-wise deployment regime studied here.
\end{remark}
\subsection{Retrieval Evaluation Protocol and Performance Measures}\label{sec_retrieval_protocol}
We evaluated the proposed encoder in the downstream Austrian-law retrieval task using a three-system primary
comparison together with a broader extension to additional evaluation-matched learned adapters. This design is
intentionally matched to isolate the query-adaptation problem under the fixed serving contract studied here. The
three primary retrieval pipelines are:
\begin{enumerate}
\item IDF--SVD, which embeds both queries and corpus passages in the lexical IDF--SVD space;
\item Mixedbread, which uses the transformer teacher for both query and corpus embeddings; and
\item KAHM (query$\rightarrow$MB corpus), in which the query is first embedded in the lexical IDF--SVD space and then mapped by the proposed KAHM-based adapter into the Mixedbread embedding space, while the corpus index remains fixed in that semantic space. 
\end{enumerate}
The evaluation was carried out on the synthetic test set described in Section~\ref{sec_data_construction}. In the reported run, the corpus consisted of 10{,}762 aligned retrieval units from 84 laws, and the evaluated test set contained 5{,}000 queries. The query generator produced 40,000 training queries and 5,000 test queries, sampled seven query styles, and ensured that no exact query text appears in both splits; as described in Section~\ref{sec_data_construction}, the default split is topic-overlapping by design in order to evaluate text-disjoint reformulations of represented law-specific issues.

For evaluation, the corpus parquet file and the embedding bundles were first aligned by the common set of \texttt{sentence\_id} values so that every retrieved index could be mapped back to a unique law label. All query and corpus embeddings were then \(L_2\)-normalized and retrieval was performed with FAISS~\cite{faiss_library} \texttt{IndexFlatIP}. For each query, the system returned the top-\(k\) ranked retrieval units for \(k \in \{3,5,10,15,20\}\). Let \(g_i\) denote the consensus law of query \(i\), and let
\[
L_i^{(k)}=(\ell_{i,1},\ldots,\ell_{i,k})
\]
be the sequence of law labels attached to the top-\(k\) retrieved passages for that query. All performance measures were first computed separately for each query and only then averaged.

The first metric was \emph{Hit@$k$}, defined by
\[
\mathrm{Hit@}k(i) = \mathbf{1}\!\left[\exists\, j \in \{1,\ldots,k\} : \ell_{i,j} = g_i \right],
\]
where $\mathbf{1}[\cdot]$ denotes the indicator function, which equals $1$ when the condition in brackets is true and $0$ otherwise. This metric checks whether the correct law appears at least once among the top-$k$ retrieved passages, irrespective of its exact rank.

The second metric was \emph{Top-1 Accuracy}, defined by
\[
\mathrm{Top1}(i) = \mathbf{1}\!\left[\ell_{i,1} = g_i \right],
\]
which measures whether the very first retrieved passage belongs to the correct law. Since this quantity depends only on rank $1$, it is invariant with respect to the chosen cutoff $k$.

The third metric was mean reciprocal rank at cutoff $k$ (\emph{MRR@$k$}) over unique laws. Because a single law can contribute many passages to the corpus, the ranked passage list $L_i^{(k)}$ was first collapsed to an order-preserving list of distinct laws,
\[
\widetilde{L}_i^{(k)} = (\widetilde{\ell}_{i,1}, \widetilde{\ell}_{i,2}, \ldots),
\]
obtained by scanning $L_i^{(k)}$ from top to bottom and keeping only the first occurrence of each law. Let
\[
r_i^{(k)} = \min\{r \geq 1 \mid \widetilde{\ell}_{i,r} = g_i\},
\]
with $r_i^{(k)} = \infty$ if the consensus law does not occur in $\widetilde{L}_i^{(k)}$. The per-query reciprocal-rank score was then defined as
\[
\mathrm{MRR@}k(i) =
\begin{cases}
1 / r_i^{(k)}, & r_i^{(k)} < \infty,\\[4pt]
0, & r_i^{(k)} = \infty.
\end{cases}
\]
This metric evaluates the rank of the correct law rather than the rank of an arbitrary individual passage and is therefore robust against repeated passages from the same statute in the top-$k$ list. The reported dataset-level MRR@$k$ is the average of $\mathrm{MRR@}k(i)$ over all queries.

In addition to these ranking-oriented measures, we reported three consensus-oriented diagnostics derived from the label composition of the top-\(k\) neighborhood. Let
\[
n_i^{(k)}(a)=\sum_{j=1}^{k}\mathbf{1}\!\left[\ell_{i,j}=a\right]
\]
be the number of retrieved passages belonging to law \(a\), and let
\[
m_i^{(k)}=\frac{1}{k}\max_a n_i^{(k)}(a)
\]
be the plurality fraction in the top-\(k\) set. With the predominance threshold fixed to \(\tau=0.10\), the \emph{majority-accuracy} metric was defined as
\[
\mathrm{MajAcc}_\tau(i)=
\mathbf{1}\!\left[\widehat{g}_i^{(k)}=g_i\right]\,
\mathbf{1}\!\left[m_i^{(k)}\geq \tau\right],
\]
where \(\widehat{g}_i^{(k)}\) denotes the plurality law in the top-\(k\) list. Hence, a query contributes \(1\) only if the top-\(k\) neighborhood yields a sufficiently predominant vote and that vote is correct; abstentions, i.e.\ cases with \(m_i^{(k)}<\tau\), contribute \(0\). The mean consensus fraction was defined as
\[
\mathrm{ConsFrac}_i^{(k)}=\frac{1}{k}\sum_{j=1}^{k}\mathbf{1}\!\left[\ell_{i,j}=g_i\right],
\]
that is, the fraction of the top-\(k\) retrieved passages that belong to the consensus law. Finally, the \emph{mean lift (prior)} normalized this consensus fraction by the corpus prior of the gold law,
\[
\mathrm{Lift}_i^{(k)}=
\frac{\mathrm{ConsFrac}_i^{(k)}}{\mathbb{P}_0(g_i)},
\qquad
\mathbb{P}_0(q)=\frac{N_{\mathrm{corpus},q}}{N_{\mathrm{corpus}}},
\]
where \(N_{\mathrm{corpus},q}\) is the number of aligned corpus passages labeled with law \(q\). This normalization discounts the trivial advantage of frequent laws and measures enrichment relative to chance under the empirical corpus distribution. 

The primary reported aggregates are micro-averages over queries. For a generic per-query score \(s_i^{(k)}\), the micro-average is
\[
\overline{s}^{(k)}=\frac{1}{N_{\mathrm{test}}}\sum_{i=1}^{N_{\mathrm{test}}} s_i^{(k)}.
\]
As a robustness check against label-frequency skew, we also computed macro-averages over laws: for each law \(q\), the metric was first averaged over all test queries with \(g_i=q\), and these law-wise means were then averaged uniformly over the 84 laws. Thus, macro-averaging gives every law equal weight, irrespective of how many test queries it contributes. 

Uncertainty was quantified by a paired nonparametric bootstrap with \(5{,}000\) resamples and random seed \(0\). For a given metric, each bootstrap replicate resampled queries with replacement and recomputed the same micro-averaged score; the \(2.5\%\) and \(97.5\%\) empirical percentiles of the resulting bootstrap distribution formed the reported \(95\%\) confidence interval. Pairwise method comparisons, such as \(\Delta(\mathrm{KAHM}-\mathrm{IDF\mbox{--}SVD})\), were computed from per-query paired differences and bootstrapped in the same paired manner so that the covariance induced by evaluating all methods on the same query set was preserved. For macro-level robustness analyses, the resampling unit was the law rather than the query. In addition to the main retrieval metrics above, the report also includes an auxiliary coverage--precision sweep for majority-vote routing thresholds $\tau'$. Any threshold selected from that sweep is used only as a descriptive post-hoc routing diagnostic for the evaluated run, whereas the six measures defined above are the principal criteria used for the experimental comparison in this paper. 

\begin{remark}[Statistical Interpretation]
We use paired nonparametric bootstrap intervals because all retrieval systems are evaluated on the same query set and the resulting per-query effectiveness values are neither independent across systems nor naturally justified by Gaussian assumptions. In information-retrieval evaluation, paired bootstrap procedures are a standard way to quantify uncertainty and compare systems while preserving the dependence induced by shared test queries \cite{sakai2006bootstrap}.
\end{remark}
Among the reported measures, the rank-sensitive retrieval criteria are the primary basis for the paper's architectural claim. This is because the deployment objective studied here is not merely approximate teacher-space reconstruction, but accurate law identification near the top of the ranking under a lightweight serving-time path. The consensus-oriented diagnostics remain useful supplementary evidence, especially for understanding neighborhood purity and abstention-aware routing behavior, but they are secondary to the top-of-ranking retrieval measures when interpreting the main empirical result.

\subsection{Experimental Results}
\label{subsec:experimental-results}
The central empirical question of this study is whether the proposed KAHM-based query encoder can replace online transformer query inference with a substantially lighter deployment-time procedure while retaining strong downstream retrieval behavior in the controlled Austrian-law benchmark studied here. The empirical results are organized around three claims. First, if the proposed geometry is effective as a lexical-to-semantic adapter, then it should achieve strong reconstruction of the fixed teacher space from inexpensive lexical query features. Second, if that reconstruction is decision-relevant, then it should translate into strong rank-sensitive law retrieval under the shared serving contract. Third, if the method is practically attractive as a deployment-time substitute, then those quality results should be accompanied by a materially lighter online query path than direct transformer query encoding.

Tables~\ref{tab:micro-rank-metrics} and~\ref{tab:micro-consensus-metrics} report the primary micro-averaged three-system comparison on the 5,000-query Austrian-law test benchmark with 84 candidate laws and 10,762 aligned retrieval units. The three primary systems are: (i) \emph{IDF--SVD}, a purely lexical baseline in which both queries and corpus passages are represented directly in the lexical IDF--SVD space; (ii) \emph{Mixedbread}, the direct transformer baseline in which both queries and corpus passages are embedded by the frozen teacher model; and (iii) \emph{KAHM (query$\rightarrow$MB corpus)}, the proposed adapter in which a query is first represented by lexical IDF--SVD features and then mapped into the frozen Mixedbread embedding space, while the corpus index remains fixed in that same semantic space. Thus, in the table labels, ``query$\rightarrow$MB'' always means that the query is predicted in the Mixedbread teacher space and searched against the fixed Mixedbread corpus index.
\begin{table*}
\centering
\renewcommand{\arraystretch}{1.1}
\setlength{\tabcolsep}{6pt}
\caption{Micro-averaged ranking metrics (mean with paired-bootstrap \(95\%\) confidence intervals). Best values in each row are highlighted in bold.}
\label{tab:micro-rank-metrics}

\textbf{Panel A: MRR@\(\,k\) (unique laws)}
\medskip

\begin{tabular}{@{}c|ccc@{}}
\toprule
\(\,k\) & IDF--SVD & KAHM & Mixedbread \\
\midrule
3  & \(0.334\,[0.322, 0.347]\) & \(\mathbf{0.466}\,[0.453, 0.479]\) & \(0.436\,[0.423, 0.449]\) \\
5  & \(0.349\,[0.337, 0.361]\) & \(\mathbf{0.481}\,[0.469, 0.494]\) & \(0.454\,[0.441, 0.466]\) \\
10 & \(0.364\,[0.351, 0.376]\) & \(\mathbf{0.496}\,[0.484, 0.508]\) & \(0.468\,[0.456, 0.480]\) \\
15 & \(0.371\,[0.359, 0.383]\) & \(\mathbf{0.501}\,[0.489, 0.513]\) & \(0.474\,[0.462, 0.486]\) \\
20 & \(0.376\,[0.364, 0.387]\) & \(\mathbf{0.504}\,[0.492, 0.516]\) & \(0.478\,[0.466, 0.489]\) \\
\bottomrule
\end{tabular}

\bigskip

\textbf{Panel B: Hit@\(\,k\)}
\medskip

\begin{tabular}{@{}c|ccc@{}}
\toprule
\(\,k\) & IDF--SVD & KAHM & Mixedbread \\
\midrule
3  & \(0.389\,[0.375, 0.402]\) & \(\mathbf{0.529}\,[0.516, 0.543]\) & \(0.503\,[0.488, 0.516]\) \\
5  & \(0.433\,[0.420, 0.447]\) & \(\mathbf{0.579}\,[0.566, 0.593]\) & \(0.560\,[0.546, 0.573]\) \\
10 & \(0.497\,[0.483, 0.511]\) & \(\mathbf{0.643}\,[0.629, 0.656]\) & \(0.625\,[0.612, 0.639]\) \\
15 & \(0.539\,[0.525, 0.553]\) & \(\mathbf{0.673}\,[0.659, 0.685]\) & \(0.662\,[0.649, 0.676]\) \\
20 & \(0.572\,[0.558, 0.586]\) & \(\mathbf{0.694}\,[0.681, 0.707]\) & \(0.688\,[0.675, 0.701]\) \\
\bottomrule
\end{tabular}

\bigskip

\textbf{Panel C: Top-1 accuracy}
\medskip

\begin{tabular}{@{}c|ccc@{}}
\toprule
\(\,k\) & IDF--SVD & KAHM & Mixedbread \\
\midrule
3  & \(0.287\,[0.274, 0.299]\) & \(\mathbf{0.411}\,[0.397, 0.424]\) & \(0.378\,[0.365, 0.392]\) \\
5  & \(0.287\,[0.274, 0.299]\) & \(\mathbf{0.411}\,[0.397, 0.424]\) & \(0.378\,[0.365, 0.392]\) \\
10 & \(0.287\,[0.275, 0.299]\) & \(\mathbf{0.411}\,[0.397, 0.424]\) & \(0.378\,[0.364, 0.392]\) \\
15 & \(0.287\,[0.274, 0.299]\) & \(\mathbf{0.411}\,[0.398, 0.424]\) & \(0.378\,[0.365, 0.392]\) \\
20 & \(0.287\,[0.274, 0.299]\) & \(\mathbf{0.411}\,[0.397, 0.424]\) & \(0.378\,[0.364, 0.392]\) \\
\bottomrule
\end{tabular}
\end{table*}
\begin{table*}
\centering
\renewcommand{\arraystretch}{1.1}
\setlength{\tabcolsep}{6pt}
\caption{Micro-averaged consensus-oriented metrics (mean with paired-bootstrap \(95\%\) confidence intervals). Best values in each row are highlighted in bold.}
\label{tab:micro-consensus-metrics}

\textbf{Panel A: Majority-accuracy (\(\tau=0.10\))}
\medskip

\begin{tabular}{@{}c|ccc@{}}
\toprule
\(\,k\) & IDF--SVD & KAHM & Mixedbread \\
\midrule
3  & \(0.298\,[0.286, 0.311]\) & \(\mathbf{0.422}\,[0.408, 0.435]\) & \(0.379\,[0.366, 0.393]\) \\
5  & \(0.302\,[0.289, 0.315]\) & \(\mathbf{0.430}\,[0.417, 0.444]\) & \(0.392\,[0.378, 0.405]\) \\
10 & \(0.305\,[0.292, 0.318]\) & \(\mathbf{0.436}\,[0.422, 0.450]\) & \(0.402\,[0.388, 0.416]\) \\
15 & \(0.311\,[0.298, 0.324]\) & \(\mathbf{0.427}\,[0.414, 0.440]\) & \(0.400\,[0.387, 0.414]\) \\
20 & \(0.305\,[0.292, 0.318]\) & \(\mathbf{0.424}\,[0.411, 0.438]\) & \(0.397\,[0.384, 0.411]\) \\
\bottomrule
\end{tabular}

\bigskip

\textbf{Panel B: Mean consensus fraction}
\medskip

\begin{tabular}{@{}c|ccc@{}}
\toprule
\(\,k\) & IDF--SVD & KAHM & Mixedbread \\
\midrule
3  & \(0.279\,[0.268, 0.290]\) & \(\mathbf{0.378}\,[0.367, 0.390]\) & \(0.345\,[0.334, 0.356]\) \\
5  & \(0.271\,[0.261, 0.282]\) & \(\mathbf{0.362}\,[0.352, 0.373]\) & \(0.330\,[0.320, 0.340]\) \\
10 & \(0.256\,[0.246, 0.266]\) & \(\mathbf{0.338}\,[0.328, 0.348]\) & \(0.303\,[0.293, 0.312]\) \\
15 & \(0.247\,[0.238, 0.256]\) & \(\mathbf{0.319}\,[0.310, 0.328]\) & \(0.285\,[0.277, 0.294]\) \\
20 & \(0.238\,[0.230, 0.248]\) & \(\mathbf{0.305}\,[0.296, 0.314]\) & \(0.270\,[0.262, 0.279]\) \\
\bottomrule
\end{tabular}

\bigskip

\textbf{Panel C: Mean lift (prior)}
\medskip

\begin{tabular}{@{}c|ccc@{}}
\toprule
\(\,k\) & IDF--SVD & KAHM & Mixedbread \\
\midrule
3  & \(42.062\,[39.219, 45.081]\) & \(\mathbf{67.163}\,[63.525, 70.954]\) & \(60.549\,[56.883, 64.307]\) \\
5  & \(38.900\,[36.585, 41.255]\) & \(\mathbf{62.404}\,[59.138, 65.685]\) & \(57.062\,[53.795, 60.438]\) \\
10 & \(35.715\,[33.796, 37.715]\) & \(\mathbf{55.824}\,[53.157, 58.595]\) & \(49.011\,[46.555, 51.580]\) \\
15 & \(33.889\,[32.142, 35.688]\) & \(\mathbf{50.812}\,[48.591, 53.038]\) & \(44.308\,[42.225, 46.400]\) \\
20 & \(32.062\,[30.501, 33.693]\) & \(\mathbf{47.025}\,[45.047, 49.044]\) & \(40.339\,[38.558, 42.143]\) \\
\bottomrule
\end{tabular}
\end{table*}

Table~\ref{tab:stronger-baselines} then broadens this comparison within the same deployment contract. To make the table labels explicit, ``law-wise'' means that one separate local model is trained per law. The row \emph{Ridge (law-wise, query$\rightarrow$MB)} is the linear direct lexical-to-semantic regressor, \emph{MLP-regressor (law-wise, query$\rightarrow$MB)} is its compact nonlinear counterpart, and \emph{Retrieval-distilled student (law-wise, query$\rightarrow$MB)} is a compact student model trained by a retrieval-distillation objective but still operating under the same lexical-input-to-Mixedbread-output contract. The remaining two rows are prototype-mixture controls built on the same teacher-space decomposition as the proposed method: \emph{Logistic-proto (law-wise, query$\rightarrow$MB)} uses $k$-means prototypes together with multinomial logistic weighting over prototypes, whereas \emph{MLP-proto (law-wise, query$\rightarrow$MB)} replaces that logistic weighting by a compact law-wise MLP classifier. Finally, \emph{KAHM (query$\rightarrow$MB corpus)} is the proposed prototype-mixture adapter in which the posterior weights are estimated by KAHM geometry rather than by logistic or neural classification. All five added baselines therefore preserve the same overall serving architecture as the proposed method---cheap lexical query features in, predicted query embedding in the Mixedbread space out, law-wise distance gating at serving time, and retrieval against the same frozen Mixedbread corpus index. This makes the comparison in Table~\ref{tab:stronger-baselines} a comparison of \emph{query-side adapters under a fixed teacher space and fixed serving contract}, rather than a comparison of unrelated retrieval architectures.
\begin{table*}[t]
\centering
\small
\renewcommand{\arraystretch}{1.08}
\setlength{\tabcolsep}{5pt}
\caption{Additional evaluation-matched baselines under the same law-wise split and deployment contract, now including a compact retrieval-distilled neural student trained on law-local teacher-side retrieval behavior. Panel A reports teacher-space query-embedding reconstruction on the 5{,}000 test queries for the learned adapters; the direct Mixedbread query representation is omitted there because it is the teacher target itself. Panels B1 and B2 report micro-averaged retrieval at \(k=20\) (mean with paired-bootstrap \(95\%\) confidence intervals). In the method labels, ``law-wise'' denotes one local model per law, and ``query$\rightarrow$MB'' denotes adapters that predict a query embedding in the frozen Mixedbread space while keeping the corpus index fixed in that same space.}
\label{tab:stronger-baselines}

\textbf{Panel A: Query-embedding reconstruction against Mixedbread teacher queries}
\medskip

\begin{tabular}{@{}lcccc@{}}
\toprule
Method & MSE & \(R^2\) & Cos mean & Cos p50 \\
\midrule
Ridge (law-wise, query$\rightarrow$MB) & \(0.000104\) & \(0.8932\) & \(0.9466\) & \(\mathbf{0.9616}\) \\
Retrieval-distilled student (law-wise, query$\rightarrow$MB) & \(0.000169\) & \(0.8272\) & \(0.9137\) & \(0.9010\) \\
MLP-regressor (law-wise, query$\rightarrow$MB) & \(0.000237\) & \(0.7573\) & \(0.8788\) & \(0.8845\) \\
Logistic-proto (law-wise, query$\rightarrow$MB) & \(0.000240\) & \(0.7543\) & \(0.8772\) & \(0.8708\) \\
MLP-proto (law-wise, query$\rightarrow$MB) & \(0.000215\) & \(0.7799\) & \(0.8900\) & \(0.8883\) \\
KAHM (query$\rightarrow$MB corpus) & \(\mathbf{0.000091}\) & \(\mathbf{0.9071}\) & \(\mathbf{0.9536}\) & \(0.9606\) \\
\bottomrule
\end{tabular}

\bigskip

\textbf{Panel B1: Ranking metrics at \(k=20\)}
\medskip

\begin{tabular}{@{}lccc@{}}
\toprule
Method & MRR@20 & Hit@20 & Top-1 \\
\midrule
IDF--SVD & \(0.376\,[0.364, 0.388]\) & \(0.572\,[0.558, 0.586]\) & \(0.287\,[0.274, 0.299]\) \\
Mixedbread & \(0.478\,[0.465, 0.489]\) & \(0.688\,[0.675, 0.701]\) & \(0.378\,[0.365, 0.391]\) \\
Ridge (law-wise, query$\rightarrow$MB) & \(0.404\,[0.392, 0.416]\) & \(0.566\,[0.553, 0.580]\) & \(0.328\,[0.315, 0.341]\) \\
Retrieval-distilled student (law-wise, query$\rightarrow$MB) & \(0.225\,[0.214, 0.235]\) & \(0.361\,[0.348, 0.375]\) & \(0.168\,[0.158, 0.179]\) \\
MLP-regressor (law-wise, query$\rightarrow$MB) & \(0.456\,[0.444, 0.469]\) & \(0.575\,[0.562, 0.589]\) & \(0.385\,[0.372, 0.398]\) \\
Logistic-proto (law-wise, query$\rightarrow$MB) & \(0.222\,[0.211, 0.232]\) & \(0.306\,[0.293, 0.318]\) & \(0.180\,[0.170, 0.191]\) \\
MLP-proto (law-wise, query$\rightarrow$MB) & \(0.246\,[0.235, 0.257]\) & \(0.365\,[0.352, 0.379]\) & \(0.193\,[0.182, 0.204]\) \\
KAHM (query$\rightarrow$MB corpus) & \(\mathbf{0.504\,[0.492, 0.516]}\) & \(\mathbf{0.694\,[0.681, 0.707]}\) & \(\mathbf{0.411\,[0.397, 0.424]}\) \\
\bottomrule
\end{tabular}

\bigskip

\textbf{Panel B2: Consensus-oriented metrics at \(k=20\)}
\medskip

\begin{tabular}{@{}lccc@{}}
\toprule
Method & MajAcc & ConsFrac & Lift \\
\midrule
IDF--SVD & \(0.305\,[0.292, 0.318]\) & \(0.238\,[0.229, 0.248]\) & \(32.062\,[30.484, 33.665]\) \\
Mixedbread & \(0.397\,[0.383, 0.411]\) & \(0.270\,[0.262, 0.279]\) & \(40.339\,[38.552, 42.073]\) \\
Ridge (law-wise, query$\rightarrow$MB) & \(0.338\,[0.325, 0.351]\) & \(0.242\,[0.233, 0.250]\) & \(37.938\,[36.140, 39.846]\) \\
Retrieval-distilled student (law-wise, query$\rightarrow$MB) & \(0.174\,[0.163, 0.184]\) & \(0.124\,[0.117, 0.130]\) & \(22.408\,[20.845, 24.007]\) \\
MLP-regressor (law-wise, query$\rightarrow$MB) & \(\mathbf{0.436\,[0.422, 0.450]}\) & \(\mathbf{0.315\,[0.306, 0.325]}\) & \(\mathbf{51.117\,[49.048, 53.252]}\) \\
Logistic-proto (law-wise, query$\rightarrow$MB) & \(0.194\,[0.183, 0.205]\) & \(0.143\,[0.136, 0.151]\) & \(23.272\,[21.671, 24.906]\) \\
MLP-proto (law-wise, query$\rightarrow$MB) & \(0.204\,[0.193, 0.215]\) & \(0.147\,[0.139, 0.154]\) & \(22.260\,[20.791, 23.807]\) \\
KAHM (query$\rightarrow$MB corpus) & \(0.424\,[0.410, 0.438]\) & \(0.305\,[0.296, 0.315]\) & \(47.025\,[45.073, 49.023]\) \\
\bottomrule
\end{tabular}
\end{table*}

\begin{remark}[On the Baselines]
The baseline set is chosen to test a more specific question than broad architecture ranking. The lexical IDF--SVD system provides a non-neural low-complexity reference; direct Mixedbread query encoding provides the frozen teacher-space reference; ridge and the compact MLP regressor provide direct lexical$\rightarrow$semantic mapping baselines, one linear and one nonlinear \cite{hoerl1970ridge,rumelhart1986learning}; the retrieval-distilled student provides a compact neural baseline trained to mimic teacher-side retrieval behavior over the frozen law-local corpus rather than only pointwise teacher embeddings \cite{hofstaetter2021efficiently,zeng2022curriculum,tao2024adam}; and the logistic-prototype and MLP-prototype systems provide two structure-matched prototype-mixture controls built on the same \(k\)-means teacher-space decomposition, differing only in whether the prototype posterior weights are estimated by multinomial logistic regression or by a compact MLP classifier \cite{bishop2006prml,macqueen1967some,rumelhart1986learning}. Accordingly, the central empirical question is whether, once a frozen semantic corpus space and fixed supervision budget have been specified, KAHM geometry adds value beyond simpler evaluation-matched adapters under the same serving contract. The results indicate that it does, but in a more nuanced sense. Among the learned adapters considered here, KAHM yields the best teacher-space reconstruction and the strongest rank-sensitive law-level retrieval, whereas the compact MLP regressor is strongest on some neighborhood-consensus diagnostics and the retrieval-distilled student remains clearly weaker under the same law-wise serving contract.
\end{remark}

Confidence intervals are paired nonparametric bootstrap \(95\%\) intervals with \(5{,}000\) resamples and seed \(0\). The majority-accuracy metric uses the predominance threshold \(\tau=0.10\). In the reported run, the exact-text overlap between train and test is zero, all test topics are drawn from the topic pool already present in training, and \(15.6\%\) of test queries mention the gold law label verbatim. These details are relevant when interpreting the absolute values of all methods. Accordingly, the primary effectiveness question in this study is the relative delta between direct transformer query encoding and the proposed KAHM adapter under the same frozen corpus space and serving contract, rather than the absolute value of any single Hit@$k$ score taken in isolation. Taken together, the main empirical pattern is not merely that the proposed encoder reduces deployment-time query cost relative to direct transformer inference, but that it does so while remaining the strongest evaluation-matched adapter on the principal rank-sensitive retrieval measures. Additional diagnostics in Remark~\ref{remark_150420261^202} and Table~\ref{tab:diag-label-style} show that this relative advantage is not plausibly driven by explicit law-label cues and remains robust across six of the seven evaluated query styles. For the use case studied here, those measures are the most decision-relevant because they capture whether the correct law remains near the top of the ranking under the fixed serving contract. The stronger teacher-space reconstruction is consistent with the view that KAHM preserves more of the teacher-induced retrieval geometry in the frozen corpus space. At the same time, the broadened baseline comparison shows that the advantage is not uniform across all diagnostics: the compact law-wise MLP regressor attains slightly stronger neighborhood-consensus values on some measures. We therefore interpret the main result as stronger rank-sensitive lexical-to-semantic adaptation under the fixed-teacher and fixed-serving-contract regime studied here. In the reported CPU setting, the KAHM adapter reduces online per-query time from 800.663\,ms for direct Mixedbread query encoding to 93.834\,ms, corresponding to an 8.53$\times$ speedup, while still outperforming the compact learned adapter baselines on the main rank-sensitive law-retrieval metrics under the same fixed-teacher and fixed-serving-contract regime. The resulting compute--quality positioning is summarized in Fig.~\ref{fig:compute-quality-tradeoff}.
\begin{figure*}
    \centering
    \includegraphics[width=0.5\textwidth]{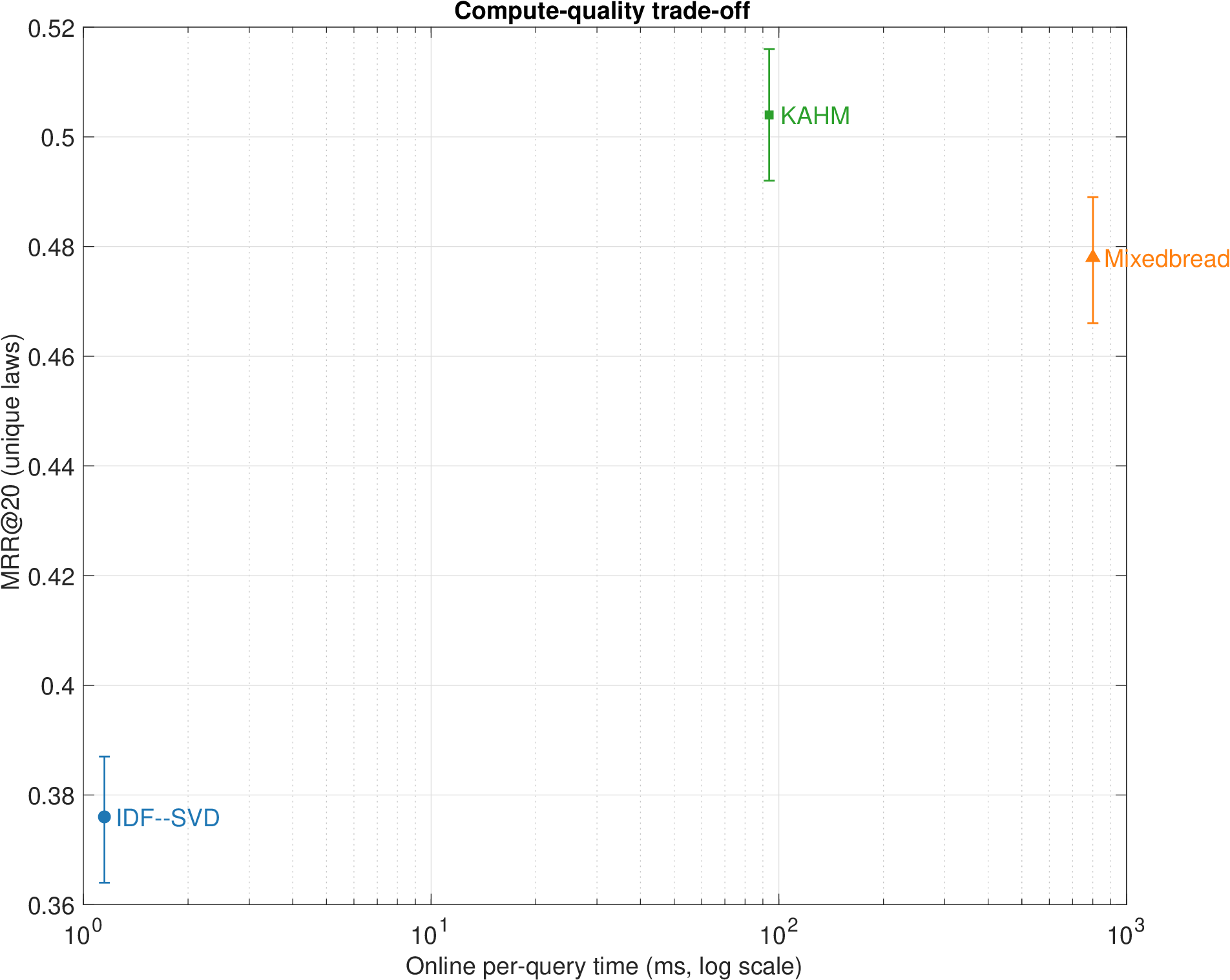}
    \caption{Compute-quality trade-off based on online per-query time and MRR@20. The proposed KAHM-based encoder occupies the middle ground between the very fast lexical baseline and the expensive online transformer baseline, while attaining the strongest retrieval quality among the three methods in this run. Error bars show paired-bootstrap 95\% confidence intervals for MRR@20.}
    \label{fig:compute-quality-tradeoff}
\end{figure*}

Across all cutoffs, the proposed KAHM-based encoder substantially outperforms the lexical IDF--SVD baseline on every reported metric. At \(k=20\), the proposed method reaches \(\mathrm{MRR}@20=0.504\), \(\mathrm{Hit}@20=0.694\), \(\mathrm{Top}\mbox{-}1=0.411\), majority-accuracy \(=0.424\), mean consensus fraction \(=0.305\), and mean lift \(=47.025\), compared with \(0.376\), \(0.572\), \(0.287\), \(0.305\), \(0.238\), and \(32.062\), respectively, for IDF--SVD. The corresponding paired deltas (KAHM minus IDF--SVD) at \(k=20\) are
\[
\Delta \mathrm{Hit}@20 = +0.122\;[0.109,0.136],\quad
\Delta \mathrm{MRR}@20 = +0.128\;[0.117,0.139],
\]
\[
\Delta \mathrm{Top}\mbox{-}1 = +0.124\;[0.110,0.137],\quad
\Delta \mathrm{MajAcc} = +0.119\;[0.106,0.132],
\]
\[
\Delta \mathrm{ConsFrac} = +0.067\;[0.060,0.074],\quad
\Delta \mathrm{Lift} = +14.963\;[13.196,16.715].
\]
All confidence intervals exclude zero, which indicates a stable advantage of the proposed encoder over the lexical baseline on this benchmark. The learned-baseline comparison yields a more differentiated and informative picture than the earlier two-baseline control alone. Among the evaluation-matched adapters, KAHM gives the strongest teacher-space reconstruction overall, with $\mathrm{MSE}=0.000091$ and $R^2=0.9071$, compared with $0.000104/0.8932$ for ridge, $0.000169/0.8272$ for the retrieval-distilled student, $0.000240/0.7543$ for logistic-proto, $0.000237/0.7573$ for the MLP regressor, and $0.000215/0.7799$ for the MLP prototype adapter. Further, the law-wise retrieval-distilled student remains well below KAHM on every reported measure and at every evaluated cutoff. At \(k=20\), it attains $\mathrm{MRR}@20=0.225$, $\mathrm{Hit}@20=0.361$, $\mathrm{Top\mbox{-}1}=0.168$, majority-accuracy \(=0.174\), mean consensus fraction \(=0.124\), and mean lift \(=22.408\), all substantially below the corresponding KAHM values. Under the same law-wise split, frozen Mixedbread corpus space, and distance-gated serving contract, replacing pointwise embedding regression by a retrieval-distillation objective alone is therefore not sufficient to recover the rank-sensitive advantages of the proposed KAHM geometry.

Table~\ref{tab:stronger-baselines} further shows that KAHM remains the strongest learned adapter on the three rank-sensitive law-retrieval measures across all evaluated cutoffs. At \(k=20\), it reaches $\mathrm{MRR}@20=0.504$, $\mathrm{Hit}@20=0.694$, and $\mathrm{Top\mbox{-}1}=0.411$, exceeding not only ridge, the retrieval-distilled student, and the two prototype-mixture controls, but also the strongest learned comparator on these metrics, the compact law-wise MLP regressor, which attains $0.456$, $0.575$, and $0.385$, respectively. This is the most important empirical point for the present paper, because these are the metrics most directly tied to correct law identification near the top of the ranking.

At the same time, the broadened neural comparison shows that the advantage of KAHM is not uniform across all diagnostics. The compact law-wise MLP regressor is stronger on the consensus-oriented metrics at \(k=20\), reaching majority-accuracy \(0.436\), mean consensus fraction \(0.315\), and mean lift \(51.117\), whereas KAHM attains \(0.424\), \(0.305\), and \(47.025\). A natural interpretation is that the compact neural direct adapter can produce slightly purer local top-\(k\) neighborhoods, while KAHM remains superior at reconstructing the teacher space and converting that reconstruction into stronger law-level ranking performance. By contrast, the retrieval-distilled student and both prototype-mixture controls remain clearly weaker than KAHM. This matters because it shows that neither the prototype-mixture form by itself nor a more retrieval-aware compact neural student under the same serving contract is sufficient to recover the KAHM result. The evidence is therefore most consistent with a more specific claim: under the fixed deployment contract studied here, KAHM geometry yields the best overall rank-sensitive lexical$\rightarrow$semantic adaptation, even though a compact MLP regressor can be preferable on some consensus-style diagnostics.

A second notable empirical finding is that the KAHM adapter also surpasses the direct transformer-query baseline on most quality measures. At \(k=20\), the paired deltas (KAHM minus Mixedbread) are
\[
\Delta \mathrm{MRR}@20 = +0.026\;[0.018,0.035],\quad
\Delta \mathrm{Top}\mbox{-}1 = +0.033\;[0.022,0.043],
\]
\[
\Delta \mathrm{MajAcc} = +0.027\;[0.018,0.037],\quad
\Delta \mathrm{ConsFrac} = +0.035\;[0.031,0.039],
\]
\[
\Delta \mathrm{Lift} = +6.687\;[5.625,7.776].
\]
For \(\mathrm{Hit}@20\), the corresponding delta is \(+0.006\;[-0.003,0.016]\), so the difference at that cutoff is statistically inconclusive under the paired bootstrap criterion. Thus, the proposed encoder is not merely a cheap approximation of the transformer query encoder in this setting; it yields systematically better rank-sensitive and neighborhood-purity measures while keeping the strong semantic corpus index fixed.

\begin{remark}[KAHM Encoder Can Outperform the Teacher Query Encoder]
The comparison against the direct Mixedbread query baseline is best viewed as an in-domain query-adaptation result within the teacher’s frozen semantic space. Both methods operate in the same frozen corpus space, but the proposed encoder is trained specifically on the Austrian-law query distribution considered here. It can therefore act as a task-specific query-side adapter that re-expresses lexical evidence in a form better aligned with the downstream law-retrieval objective and with the fixed corpus index. Under this reading, the observed gain is most naturally interpreted as in-domain calibration of query representations within a fixed teacher space, not as a general claim that the analytical encoder dominates the underlying transformer.
\end{remark}

Several qualitative trends are visible across the cutoffs; see Fig.~\ref{fig:quality-vs-k} for the rank-sensitive retrieval measures and Fig.~\ref{fig:consensus-vs-k} for the consensus- and routing-oriented diagnostics. First, \(\mathrm{Hit}@k\) and \(\mathrm{MRR}@k\) increase with \(k\) for all systems, as expected. Second, the mean consensus fraction and mean lift decrease as \(k\) grows, reflecting the fact that a larger retrieval neighborhood dilutes the local purity around the correct law. Third, majority-accuracy peaks at an intermediate cutoff and then slightly decreases: for the proposed method it rises from \(0.422\) at \(k=3\) to \(0.436\) at \(k=10\), then decreases to \(0.424\) at \(k=20\). This pattern suggests that the correct law is concentrated early in the ranked list, whereas larger neighborhoods admit increasing contamination from semantically adjacent statutes. Finally, \(\mathrm{Top}\mbox{-}1\) is constant across all values of \(k\), which is consistent with its definition as a rank-1 measure.
\begin{figure*}
    \centering
    \includegraphics[width=\textwidth]{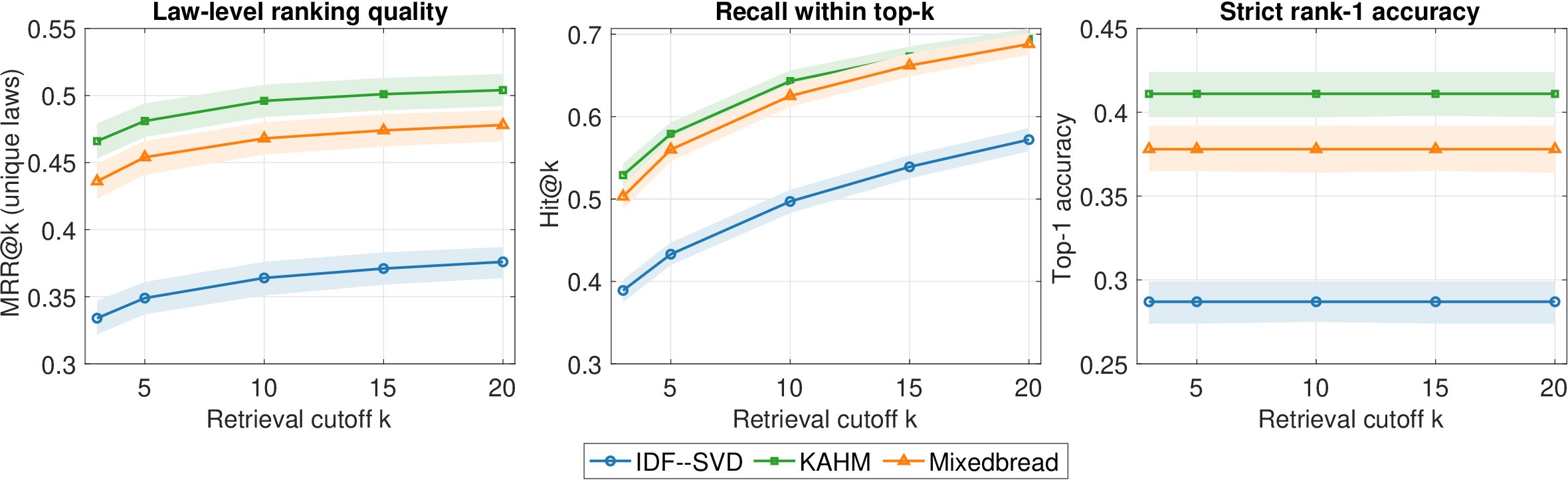}
    \caption{Main retrieval-quality metrics across cutoffs. The proposed KAHM (query$\rightarrow$MB corpus) encoder consistently dominates the lexical IDF--SVD baseline and remains above the direct transformer-query baseline on the principal rank-sensitive measures. Shaded regions indicate paired-bootstrap 95\% confidence intervals.}
    \label{fig:quality-vs-k}
\end{figure*}
\begin{figure*}
    \centering
    \includegraphics[width=\textwidth]{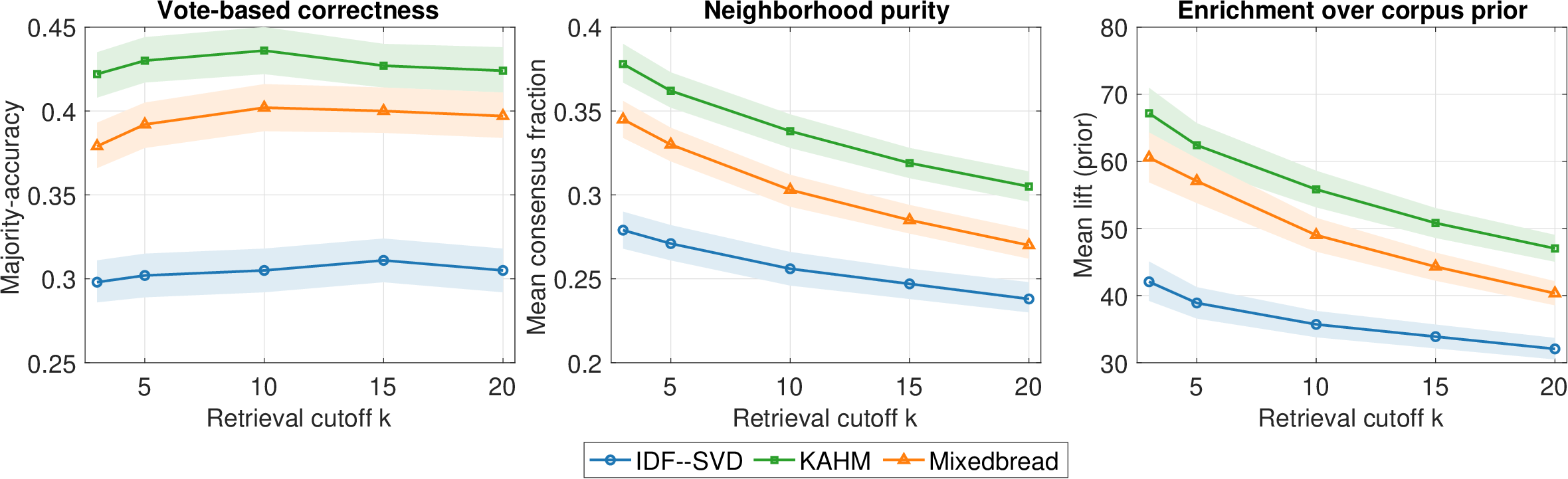}
    \caption{Consensus- and routing-sensitive metrics across cutoffs for the three systems shown (IDF--SVD, Mixedbread, and KAHM). Within this three-way comparison, the KAHM-based encoder yields purer retrieval neighborhoods around the gold law, as reflected in majority-accuracy, mean consensus fraction, and mean lift. Shaded regions indicate paired-bootstrap 95\% confidence intervals.}
    \label{fig:consensus-vs-k}
\end{figure*}

The report also includes a macro-averaged robustness analysis in which each law receives equal weight irrespective of query frequency. The macro-level point estimates reproduce the same ordering as the micro-level results, while the confidence intervals are wider because resampling occurs at the law level. At \(k=20\), the macro scores of the proposed system are \(\mathrm{MRR}=0.504\,[0.446,0.562]\), \(\mathrm{Hit}=0.694\,[0.625,0.756]\), \(\mathrm{Top}\mbox{-}1=0.411\,[0.354,0.468]\), and majority-accuracy \(=0.424\,[0.361,0.491]\). The corresponding macro deltas against IDF--SVD remain strictly positive, for example \(\Delta \mathrm{MRR}=+0.128\,[0.100,0.157]\) and \(\Delta \mathrm{Hit}=+0.122\,[0.086,0.159]\), indicating that the gains of the proposed method are not an artifact of a few high-frequency laws. For completeness, Tables~\ref{tab:full-micro-deltas} and~\ref{tab:full-macro-deltas} report the paired delta results across all evaluated cutoffs, both for the primary micro-averaged analysis and for the macro-averaged robustness analysis in which each law receives equal weight. These tables make explicit that, relative to the lexical IDF--SVD baseline, the proposed method yields strictly positive paired deltas on all six reported measures for every evaluated cutoff. Relative to the direct transformer-query baseline, the proposed method retains positive paired deltas for MRR, Top-1 accuracy, majority-accuracy, mean consensus fraction, and mean lift at every evaluated cutoff, whereas Hit@\(\,k\) becomes statistically inconclusive only at \(k=20\). Table~\ref{tab:timing-breakdown} further reports the component-level wall-clock timings and memory footprints underlying the compact online profile in Table~\ref{tab:routing-and-speed}, confirming that in the semantic retrieval paths the dominant online cost lies in query embedding rather than in FAISS search.
\begin{table*}
\centering
\renewcommand{\arraystretch}{1.1}
\setlength{\tabcolsep}{6pt}
\caption{Full micro-averaged paired deltas across all evaluated cutoffs. Positive values favor KAHM. Confidence intervals are paired-bootstrap \(95\%\) intervals.}
\label{tab:full-micro-deltas}

\textbf{Panel A: KAHM minus IDF--SVD (ranking metrics)}
\medskip

\begin{tabular}{@{}c|ccc@{}}
\toprule
\(\,k\) & \(\Delta\)Hit@\(\,k\) & \(\Delta\)MRR@\(\,k\) & \(\Delta\)Top-1 \\
\midrule
3  & \(+0.141\,[0.127, 0.155]\) & \(+0.132\,[0.119, 0.144]\) & \(+0.124\,[0.110, 0.137]\) \\
5  & \(+0.146\,[0.132, 0.159]\) & \(+0.133\,[0.121, 0.144]\) & \(+0.124\,[0.110, 0.138]\) \\
10 & \(+0.145\,[0.132, 0.159]\) & \(+0.132\,[0.121, 0.143]\) & \(+0.124\,[0.110, 0.137]\) \\
15 & \(+0.134\,[0.120, 0.148]\) & \(+0.130\,[0.119, 0.141]\) & \(+0.124\,[0.110, 0.137]\) \\
20 & \(+0.122\,[0.109, 0.136]\) & \(+0.128\,[0.117, 0.139]\) & \(+0.124\,[0.110, 0.137]\) \\
\bottomrule
\end{tabular}

\bigskip

\textbf{Panel B: KAHM minus IDF--SVD (consensus- and routing-sensitive metrics)}
\medskip

\begin{tabular}{@{}c|ccc@{}}
\toprule
\(\,k\) & \(\Delta\)MajAcc & \(\Delta\)ConsFrac & \(\Delta\)Lift \\
\midrule
3  & \(+0.123\,[0.110, 0.137]\) & \(+0.100\,[0.090, 0.110]\) & \(+25.101\,[21.323, 28.982]\) \\
5  & \(+0.128\,[0.115, 0.141]\) & \(+0.091\,[0.082, 0.100]\) & \(+23.504\,[20.436, 26.677]\) \\
10 & \(+0.131\,[0.117, 0.144]\) & \(+0.081\,[0.073, 0.089]\) & \(+20.109\,[17.773, 22.528]\) \\
15 & \(+0.116\,[0.103, 0.129]\) & \(+0.072\,[0.065, 0.079]\) & \(+16.922\,[14.899, 18.916]\) \\
20 & \(+0.119\,[0.106, 0.132]\) & \(+0.067\,[0.060, 0.074]\) & \(+14.963\,[13.196, 16.715]\) \\
\bottomrule
\end{tabular}

\bigskip

\textbf{Panel C: KAHM minus Mixedbread (ranking metrics)}
\medskip

\begin{tabular}{@{}c|ccc@{}}
\toprule
\(\,k\) & \(\Delta\)Hit@\(\,k\) & \(\Delta\)MRR@\(\,k\) & \(\Delta\)Top-1 \\
\midrule
3  & \(+0.027\,[0.016, 0.037]\) & \(+0.030\,[0.021, 0.040]\) & \(+0.033\,[0.021, 0.044]\) \\
5  & \(+0.019\,[0.009, 0.030]\) & \(+0.028\,[0.019, 0.037]\) & \(+0.033\,[0.022, 0.043]\) \\
10 & \(+0.017\,[0.007, 0.028]\) & \(+0.028\,[0.019, 0.036]\) & \(+0.033\,[0.022, 0.044]\) \\
15 & \(+0.010\,[0.001, 0.020]\) & \(+0.027\,[0.019, 0.035]\) & \(+0.033\,[0.021, 0.043]\) \\
20 & \(+0.006\,[-0.003, 0.016]\) & \(+0.026\,[0.018, 0.035]\) & \(+0.033\,[0.022, 0.043]\) \\
\bottomrule
\end{tabular}

\bigskip

\textbf{Panel D: KAHM minus Mixedbread (consensus- and routing-sensitive metrics)}
\medskip

\begin{tabular}{@{}c|ccc@{}}
\toprule
\(\,k\) & \(\Delta\)MajAcc & \(\Delta\)ConsFrac & \(\Delta\)Lift \\
\midrule
3  & \(+0.042\,[0.031, 0.053]\) & \(+0.033\,[0.026, 0.041]\) & \(+6.614\,[3.911, 9.331]\) \\
5  & \(+0.038\,[0.028, 0.049]\) & \(+0.032\,[0.026, 0.039]\) & \(+5.341\,[3.146, 7.481]\) \\
10 & \(+0.034\,[0.023, 0.044]\) & \(+0.035\,[0.029, 0.040]\) & \(+6.813\,[5.177, 8.336]\) \\
15 & \(+0.028\,[0.018, 0.038]\) & \(+0.034\,[0.029, 0.039]\) & \(+6.503\,[5.246, 7.718]\) \\
20 & \(+0.027\,[0.018, 0.037]\) & \(+0.035\,[0.031, 0.039]\) & \(+6.687\,[5.625, 7.776]\) \\
\bottomrule
\end{tabular}
\end{table*}
\begin{table*}
\centering
\renewcommand{\arraystretch}{1.1}
\setlength{\tabcolsep}{6pt}
\caption{Full macro-averaged paired deltas across all evaluated cutoffs. Macro-averaging first averages within each law and then averages uniformly across laws. Positive values favor KAHM. Confidence intervals are law-bootstrap \(95\%\) intervals.}
\label{tab:full-macro-deltas}

\textbf{Panel A: KAHM minus IDF--SVD (ranking metrics)}
\medskip

\begin{tabular}{@{}c|ccc@{}}
\toprule
\(\,k\) & \(\Delta\)Hit@\(\,k\) & \(\Delta\)MRR@\(\,k\) & \(\Delta\)Top-1 \\
\midrule
3  & \(+0.141\,[0.109, 0.172]\) & \(+0.132\,[0.103, 0.162]\) & \(+0.124\,[0.093, 0.156]\) \\
5  & \(+0.146\,[0.114, 0.178]\) & \(+0.133\,[0.104, 0.162]\) & \(+0.124\,[0.092, 0.155]\) \\
10 & \(+0.145\,[0.110, 0.182]\) & \(+0.132\,[0.104, 0.162]\) & \(+0.124\,[0.092, 0.156]\) \\
15 & \(+0.134\,[0.098, 0.171]\) & \(+0.130\,[0.103, 0.160]\) & \(+0.124\,[0.093, 0.155]\) \\
20 & \(+0.122\,[0.086, 0.159]\) & \(+0.128\,[0.100, 0.157]\) & \(+0.124\,[0.093, 0.156]\) \\
\bottomrule
\end{tabular}

\bigskip

\textbf{Panel B: KAHM minus IDF--SVD (consensus- and routing-sensitive metrics)}
\medskip

\begin{tabular}{@{}c|ccc@{}}
\toprule
\(\,k\) & \(\Delta\)MajAcc & \(\Delta\)ConsFrac & \(\Delta\)Lift \\
\midrule
3  & \(+0.123\,[0.091, 0.156]\) & \(+0.100\,[0.073, 0.129]\) & \(+25.136\,[12.537, 38.554]\) \\
5  & \(+0.128\,[0.096, 0.160]\) & \(+0.091\,[0.065, 0.118]\) & \(+23.541\,[12.724, 35.273]\) \\
10 & \(+0.131\,[0.099, 0.165]\) & \(+0.081\,[0.057, 0.106]\) & \(+20.141\,[11.253, 29.602]\) \\
15 & \(+0.117\,[0.086, 0.148]\) & \(+0.072\,[0.048, 0.096]\) & \(+16.945\,[9.248, 25.482]\) \\
20 & \(+0.119\,[0.089, 0.152]\) & \(+0.067\,[0.045, 0.090]\) & \(+14.980\,[7.698, 22.644]\) \\
\bottomrule
\end{tabular}

\bigskip

\textbf{Panel C: KAHM minus Mixedbread (ranking metrics)}
\medskip

\begin{tabular}{@{}c|ccc@{}}
\toprule
\(\,k\) & \(\Delta\)Hit@\(\,k\) & \(\Delta\)MRR@\(\,k\) & \(\Delta\)Top-1 \\
\midrule
3  & \(+0.027\,[0.013, 0.041]\) & \(+0.030\,[0.019, 0.042]\) & \(+0.033\,[0.020, 0.046]\) \\
5  & \(+0.019\,[0.008, 0.031]\) & \(+0.028\,[0.017, 0.039]\) & \(+0.033\,[0.020, 0.046]\) \\
10 & \(+0.017\,[0.006, 0.028]\) & \(+0.028\,[0.017, 0.038]\) & \(+0.033\,[0.019, 0.046]\) \\
15 & \(+0.010\,[-0.002, 0.023]\) & \(+0.027\,[0.017, 0.037]\) & \(+0.033\,[0.020, 0.046]\) \\
20 & \(+0.006\,[-0.006, 0.019]\) & \(+0.026\,[0.016, 0.036]\) & \(+0.033\,[0.020, 0.046]\) \\
\bottomrule
\end{tabular}

\bigskip

\textbf{Panel D: KAHM minus Mixedbread (consensus- and routing-sensitive metrics)}
\medskip

\begin{tabular}{@{}c|ccc@{}}
\toprule
\(\,k\) & \(\Delta\)MajAcc & \(\Delta\)ConsFrac & \(\Delta\)Lift \\
\midrule
3  & \(+0.042\,[0.028, 0.059]\) & \(+0.033\,[0.023, 0.044]\) & \(+6.625\,[3.169, 10.335]\) \\
5  & \(+0.038\,[0.025, 0.052]\) & \(+0.032\,[0.023, 0.041]\) & \(+5.346\,[2.044, 8.660]\) \\
10 & \(+0.034\,[0.020, 0.047]\) & \(+0.035\,[0.027, 0.042]\) & \(+6.821\,[4.254, 9.603]\) \\
15 & \(+0.028\,[0.016, 0.040]\) & \(+0.034\,[0.027, 0.041]\) & \(+6.509\,[4.378, 8.704]\) \\
20 & \(+0.027\,[0.014, 0.042]\) & \(+0.035\,[0.028, 0.042]\) & \(+6.692\,[4.704, 8.841]\) \\
\bottomrule
\end{tabular}
\end{table*}
\begin{table*}
\centering
\small
\caption{Component-level wall-clock timings and memory footprints for the reported run. Warm-up times are excluded from the online per-query totals in Table~\ref{tab:routing-and-speed}.}
\label{tab:timing-breakdown}
\begin{tabular}{@{}lcc@{}}
\toprule
Component & Wall time & Per-query equivalent \\
\midrule
IDF--SVD query pipeline init (cold-start) & 2.634 s & 0.527 ms \\
IDF--SVD query embedding (batch) & 4.220 s & 0.844 ms \\
KAHM query model init (cold-start) & 5.126 s & 1.025 ms \\
KAHM query warm-up (excluded from online total) & 5.665 s & n/a \\
KAHM query embedding (batch) & 466.249 s & 93.250 ms \\
Mixedbread model init (cold-start) & 7.606 s & 1.521 ms \\
Mixedbread query warm-up (excluded from online total) & 1.721 s & n/a \\
Mixedbread query embedding (batch) & 4000.415 s & 800.083 ms \\
FAISS build (IDF corpus index) & 0.316 s & n/a \\
FAISS search (IDF path) & 1.526 s & 0.305 ms \\
FAISS build (MB corpus index) & 0.127 s & n/a \\
FAISS search (Mixedbread path) & 2.898 s & 0.580 ms \\
FAISS search (KAHM\(\rightarrow\)MB path) & 2.919 s & 0.584 ms \\
Corpus embedding memory (IDF matrix) & 22{,}040{,}576 bytes & n/a \\
Corpus embedding memory (MB matrix) & 44{,}081{,}152 bytes & n/a \\
\bottomrule
\end{tabular}
\end{table*}
\begin{table}
\centering
\small
\caption{Routing-oriented diagnostics and online computational profile.}
\label{tab:routing-and-speed}
\begin{tabular}{lccc}
\toprule
Method / Path & Coverage & Majority-acc & Precision \\
\midrule
IDF--SVD & 0.538 & 0.251 & 0.468 \\
Mixedbread & 0.525 & 0.308 & 0.587 \\
KAHM (query$\rightarrow$MB corpus) & 0.562 & 0.346 & 0.615 \\
\midrule
& \multicolumn{3}{c}{Per-query online profile} \\
\midrule
Path & Embed / q & Search / q & Total / q \\
\midrule
IDF--SVD & 0.844 ms & 0.305 ms & 1.149 ms \\
KAHM (query$\rightarrow$MB corpus) & 93.250 ms & 0.584 ms & 93.834 ms \\
Mixedbread & 800.083 ms & 0.580 ms & 800.663 ms \\
\bottomrule
\end{tabular}
\end{table}

From an operational perspective, the majority-vote routing sweep in Table~\ref{tab:routing-and-speed} shows that the KAHM-based system also provides the strongest confident-decision profile. For the auxiliary abstention-aware routing analysis, we report the operating point $\tau' = 0.41$, obtained by an exhaustive sweep over $\tau' \in \{0.00,0.01,\ldots,1.00\}$. The threshold $\tau'$ was selected to maximize conditional majority precision subject to a minimum coverage constraint of $0.50$ on the evaluated query set. At this operating point, the proposed method achieves coverage $0.562$, majority-accuracy $0.346$, and conditional precision $0.615$. This exceeds both the lexical baseline $(0.538/0.251/0.468)$ and the transformer-query baseline $(0.525/0.308/0.587)$, suggesting that the top-ranked local neighborhoods produced by the KAHM adapter are not only more accurate but also more reliable under this abstention-aware routing analysis. Because $\tau'$ was selected on the evaluated query set itself, this result should be read as a descriptive post-hoc routing diagnostic rather than as an independently validated deployment threshold.

Table~\ref{tab:routing-and-speed} further reports the measured query-time computational profile. The main quantity of interest is the online per-query time, defined as query embedding plus FAISS retrieval. IDF--SVD requires \(1.149\) ms per query, the KAHM adapter requires \(93.834\) ms, and online Mixedbread inference requires \(800.663\) ms. Hence, the proposed adapter is slower than the lightweight lexical baseline but still \(8.53\times\) faster than online transformer query encoding in this run. The dominant cost of both semantic methods lies in query embedding rather than search: FAISS search itself contributes only \(0.584\) ms per query on the KAHM$\rightarrow$MB path and \(0.580\) ms on the direct Mixedbread path. The aligned corpus matrices occupy \(22{,}040{,}576\) bytes in the IDF--SVD space and \(44{,}081{,}152\) bytes in the Mixedbread space. Thus, the proposed method substantially reduces online semantic-query cost relative to transformer inference, while remaining slower than the purely lexical baseline. 

The experiments were executed on an x86\_64 Apple desktop running macOS \(26.3.1\), with \(8\) logical CPU cores and \(8.00\) GiB RAM. The software stack consisted of Python \(3.11.14\), PyTorch \(2.2.2\), \texttt{faiss-cpu} \(1.13.2\), \texttt{sentence-transformers} \(5.2.0\), scikit-learn \(1.8.0\)~\cite{pedregosa2011scikit}, NumPy \(1.26.4\), pandas \(2.3.3\), and joblib \(1.5.3\). Although Apple Metal (MPS) support was available in the runtime, the evaluation reported here was explicitly executed on the CPU with thread cap \(1\). Table~\ref{tab:hardware-config} summarizes the hardware and runtime configuration.
\begin{table}
\centering
\small
\caption{Hardware and runtime configuration used for the reported experiment.}
\label{tab:hardware-config}
\begin{tabular}{ll}
\toprule
Field & Value \\
\midrule
Operating system & macOS 26.3.1 \\
Architecture & x86\_64 \\
Logical CPU cores & 8 \\
RAM & 8.00 GiB \\
Python & 3.11.14 \\
PyTorch runtime & 2.2.2 \\
FAISS & faiss-cpu 1.13.2 \\
SentenceTransformers & 5.2.0 \\
scikit-learn & 1.8.0 \\
NumPy / pandas & 1.26.4 / 2.3.3 \\
joblib & 1.5.3 \\
Requested device & CPU \\
Thread cap & 1 \\
\bottomrule
\end{tabular}
\end{table}

\begin{remark}[Diagnostic Robustness to Law-Label Cues and Query Style]\label{remark_150420261^202}
A natural concern for a synthetic law-retrieval benchmark is that performance might be inflated by explicit statute-name mentions or by concentration on only a narrow query surface form. We therefore performed two additional diagnostics on the same retrieval outputs: a label-cue analysis and a style-stratified analysis. The first result is reassuring. Of the 5{,}000 test queries, only $778$ ($15.6\%$) mention the gold law abbreviation and only $799$ ($16.0\%$) mention any law abbreviation. More importantly, the principal KAHM result remains essentially unchanged when those explicit label cues are removed. On the full test set, KAHM attains $\mathrm{MRR}@20=0.504$, $\mathrm{Hit}@20=0.694$, and Top-1 Accuracy $=0.411$. On the gold-label-free subset ($n=4222$), the corresponding values remain $\mathrm{MRR}@20=0.504$, $\mathrm{Hit}@20=0.694$, and Top-1 Accuracy $=0.411$, and on the stricter any-law-free subset ($n=4201$) they remain $\mathrm{MRR}@20=0.503$, $\mathrm{Hit}@20=0.694$, and Top-1 Accuracy $=0.411$. These diagnostics indicate that the observed retrieval gains are not primarily driven by explicit statute-name cues. The style-stratified analysis likewise supports the intended interpretation of the benchmark as a test of robustness to heterogeneous in-domain formulations. At $\mathrm{MRR}@20$, KAHM is strongest on six of the seven query styles considered here: \texttt{nl\_long} ($0.424$), \texttt{nl\_short} ($0.576$), \texttt{fragment} ($0.594$), \texttt{scenario} ($0.442$), \texttt{keyword} ($0.472$), and \texttt{authority} ($0.549$). The direct Mixedbread query encoder remains strongest on the \texttt{procedural} style ($0.508$ versus $0.470$ for KAHM). The largest KAHM advantages appear on the more compressed lexical styles, especially \texttt{keyword} and \texttt{fragment}, whereas the teacher retains an advantage on the more procedure-focused style. We therefore interpret the method as robust across a broad range of query formulations, but not uniformly dominant over the teacher oracle for every style. Table~\ref{tab:diag-label-style} summarizes these additional diagnostics.
\begin{table*}
\centering
\small
\caption{Additional diagnostic evaluation on label-free subsets and query-style strata. The table reports $\mathrm{MRR}@20$ and Top-1 Accuracy for the lexical baseline, the direct Mixedbread query encoder, and the proposed KAHM encoder. The label-free rows show that the principal KAHM result is essentially unchanged when explicit statute-name mentions are removed, while the style rows show that KAHM is strongest on six of the seven query styles considered here by $\mathrm{MRR}@20$.}
\label{tab:diag-label-style}
\begin{tabular}{lcccccc}
\toprule
& \multicolumn{3}{c}{$\mathrm{MRR}@20$} & \multicolumn{3}{c}{Top-1 Accuracy} \\
\cmidrule(lr){2-4}\cmidrule(lr){5-7}
Group & IDF--SVD & Mixedbread & KAHM & IDF--SVD & Mixedbread & KAHM \\
\midrule
Full test set & 0.376 & 0.478 & \textbf{0.504} & 0.287 & 0.378 & \textbf{0.411} \\
Gold-label-free ($n=4222$) & 0.371 & 0.479 & \textbf{0.504} & 0.285 & 0.383 & \textbf{0.411} \\
Any-law-free ($n=4201$) & 0.372 & 0.479 & \textbf{0.503} & 0.286 & 0.383 & \textbf{0.411} \\
\midrule
\texttt{nl\_long} ($n=717$) & 0.305 & 0.413 & \textbf{0.424} & 0.215 & 0.308 & \textbf{0.340} \\
\texttt{nl\_short} ($n=717$) & 0.420 & 0.565 & \textbf{0.576} & 0.332 & 0.470 & \textbf{0.478} \\
\texttt{procedural} ($n=717$) & 0.366 & \textbf{0.508} & 0.470 & 0.273 & \textbf{0.407} & 0.377 \\
\texttt{scenario} ($n=706$) & 0.302 & 0.416 & \textbf{0.442} & 0.225 & 0.319 & \textbf{0.358} \\
\texttt{keyword} ($n=719$) & 0.370 & 0.379 & \textbf{0.472} & 0.274 & 0.274 & \textbf{0.370} \\
\texttt{authority} ($n=723$) & 0.416 & 0.503 & \textbf{0.549} & 0.332 & 0.401 & \textbf{0.451} \\
\texttt{fragment} ($n=701$) & 0.451 & 0.560 & \textbf{0.594} & 0.357 & 0.468 & \textbf{0.501} \\
\bottomrule
\end{tabular}
\end{table*}
\end{remark}

\begin{remark}[Interpretation of the Experimental Findings]\label{rem_experimental_results}
Taken together, the reported experiments support four main inferences.
\begin{enumerate}
\item  The proposed KAHM-based encoder is not merely competitive with a lexical retrieval baseline, but consistently and substantially superior to it across all evaluated cutoffs and across all reported measures considered here, including both rank-sensitive and consensus-oriented criteria.
\item Relative to direct transformer query encoding in the same frozen Mixedbread corpus space, KAHM remains favorable on the main rank-sensitive law-retrieval measures while being substantially faster at query time.
\item The learned-baseline comparison sharpens the architectural interpretation of the result. Among the evaluation-matched adapters considered here, KAHM yields the strongest teacher-space reconstruction and the strongest rank-sensitive law-level retrieval, including MRR, Hit, and Top-1 accuracy across all evaluated cutoffs. Even against the compact law-wise retrieval-distilled student, KAHM retains a clear advantage. At the same time, the strongest non-KAHM comparator on the consensus-oriented diagnostics at \(k=20\) remains the compact law-wise MLP regressor, not the retrieval-distilled student. The evidence therefore supports a more specific conclusion than uniform dominance: under the fixed lexical$\to$semantic deployment contract studied here, KAHM is especially effective when the objective is accurate law identification near the top of the ranking, whereas a compact neural direct adapter can produce slightly purer local top-\(k\) neighborhoods on some consensus-style diagnostics.

\item The retrieval-distilled student and the prototype-mixture controls all remain clearly weaker than KAHM. This matters because it suggests that neither the prototype-mixture form by itself nor a more retrieval-aware compact neural student under the same serving contract is sufficient to recover the observed result. Rather, the evidence is more consistent with the view that the KAHM-specific geometric posterior estimator contributes materially to the quality of the lexical$\to$semantic adaptation under the deployment regime studied here. This comparison is important for the interpretation of the method. Because the logistic-prototype and MLP-prototype controls share the same prototype-mixture output form but replace the KAHM posterior estimator by standard discriminative classifiers, their weaker performance indicates that the gain cannot be attributed to the prototype-mixture architecture alone. Rather, the evidence points to the KAHM geometry itself as the main source of improvement, in particular to the space-folding-based estimation of posterior weights. In that sense, the experiments support the theoretical emphasis placed earlier on the KAHM space-folding measure as a more effective mechanism for posterior estimation in this lexical-to-semantic adaptation setting than a standard neural or logistic classifier operating on the same prototype decomposition.
\end{enumerate}
Overall, these findings support the proof-of-principle claim that, under the fixed-teacher deployment contract studied here, KAHM geometry can provide an analytically explicit lexical-to-semantic adapter that preserves the principal decision-relevant behavior of the teacher space while reducing online query-encoding cost.
\end{remark}

\subsection{Reproducibility and Artifact Availability}
\label{subsec:reproducibility}
To support reproducibility, the code and principal experimental artifacts used in this study are publicly available at
\url{https://github.com/MohitKumarRostock/Austrian_law_assistant}. The public repository contains the scripts for corpus construction, query-set generation, lexical and semantic embedding-index construction, training of the law-specific KAHM encoders, retrieval evaluation, and figure generation. In particular, it includes the end-to-end evaluation script \texttt{evaluate\_three\_embeddings\_storylines.py}, the baseline comparison scripts, the KAHM training scripts, the lexical-index construction script, and the figure-generation script used to derive the graphical summaries of the reported results. The repository also provides the principal stored artifacts required for rerunning the reported experiments, including the aligned corpus file, the train/test query files, the precomputed embedding bundles, the trained KAHM encoders, and the generated evaluation reports for baseline comparison.

The main numerical results reported in Sections~\ref{subsec:experimental-results} can be reproduced directly from the included artifacts by rerunning the evaluation pipeline on the public repository. The repository additionally provides installation instructions and environment specifications through a root \texttt{README.md}, a \texttt{requirements.txt} file, and a Conda \texttt{environment.yml} file. To facilitate stable paper-linked reruns, the artifact package is versioned through the tagged snapshot \texttt{v1.0.0}, which can be cited as the fixed public code state corresponding to the experiments reported in this paper. Unless otherwise noted, the source code in the repository is distributed under the Apache License 2.0.

In addition to the public artifact package, the present paper reports the random seeds, hyperparameter grids, bootstrap protocol, hardware profile, thread settings, and software environment used in the experiments. Together, these materials provide a complete basis for reproducing the reported retrieval tables and figures, as well as for rerunning the training and evaluation pipeline under the same experimental protocol.

\subsection{Threats to Validity, Scope, and Interpretation}\label{sec_scope}
The empirical results should be interpreted as evidence on a controlled benchmark rather than on fully naturalistic legal search traffic. The test queries are synthetically generated from law-specific issue inventories, so the present evaluation supports conclusions about retrieval behavior under a structured in-domain benchmark, not directly about performance on organically occurring user queries.

The train and test queries are text-disjoint, but the default split is intentionally topic-overlapping within each law. This choice follows from the supervised fixed-space deployment regime studied in the paper: the adapter is not intended to discover previously unseen legal issues, but to serve a known in-domain distribution in which the relevant law-internal issue types are represented during training. The serving-time challenge is therefore not topic-level extrapolation, but robust lexical-to-semantic encoding of new formulations of represented issues, including paraphrases, query styles, sampled factual contexts, procedural framings, and elliptical user input. The reported results should accordingly be interpreted as evidence for in-domain formulation robustness under a fixed teacher space, frozen corpus index, and matched serving contract, rather than as evidence for cold-start generalization to genuinely unseen issues within a law or transfer across laws.

A minority of test queries mention the gold law label explicitly. This can inflate absolute retrieval scores, especially for lexical front ends, and the absolute values of all methods should therefore be read in light of the benchmark construction. At the same time, the additional label-free diagnostics indicate that the main relative-comparison result is not plausibly explained by explicit law-label cues alone.

These limitations concern the empirical instantiation rather than the mathematical formulation of the fixed-teacher encoding problem. The architectural claim is deliberately narrow. The comparison is evaluation-matched rather than architecture-exhaustive: it holds fixed the lexical front end, semantic teacher space, frozen corpus index, supervision granularity, and serving contract in order to isolate the query-adaptation problem. Accordingly, the validated empirical claim is not that the proposed method dominates the broader space of efficient retrieval systems, but that within this fixed-space deployment regime it provides strong evidence that an analytically explicit KAHM encoder can preserve decision-relevant behavior while substantially reducing online query-time cost.

A final statistical caveat is that multiple test queries may instantiate alternative surface realizations of the same underlying law-topic specification. Query-level bootstrap resampling may therefore understate uncertainty relative to grouped resampling at the topic level, a concern consistent with the broader literature on validation under structured dependence~\cite{roberts2017crossvalidation}.

\section{Concluding Remarks}\label{sec_conclusion}
This paper studied deployment-time semantic encoding as the problem of estimating a query's semantic teacher representation from inexpensive lexical features when the teacher space is already fixed, with the aim of avoiding repeated online neural query encoding under a fixed serving contract. We cast this problem as conditional-mean estimation, derive a KAHM-based posterior estimator in an explicit RKHS hypothesis space, combine it with an explicit NLMS refinement of semantic prototypes learned from noisy teacher embeddings, and obtain an end-to-end error decomposition into posterior-approximation, finite-sample/generalization, and teacher-noise terms. The proposed encoder is backpropagation-free and neural-network-free in both its posterior-estimation mechanism and its deployment path.

Empirically, the Austrian-law benchmark serves as a controlled proof-of-principle test of that fixed-space query-adaptation problem. Under a comparison protocol that keeps the lexical front end, semantic target space, frozen corpus index, supervision granularity, and serving rule fixed, KAHM achieves the strongest teacher-space reconstruction and the strongest principal rank-sensitive retrieval among the learned adapters considered here while substantially reducing online query time relative to direct transformer query encoding. The evidence remains benchmark-specific, but it supports the broader methodological point: frozen neural representation spaces can, in some deployment regimes, be served by lightweight geometric estimators whose runtime path and approximation behavior remain explicit. Future work should test this principle under topic-disjoint splits, naturally occurring queries, alternative teacher encoders, additional compact adapter families, and non-retrieval fixed-representation tasks.

\section*{Acknowledgments}
The research reported in this paper has been supported by the Austrian Ministry for Transport, Innovation and Technology, the Federal Ministry for Digital and Economic Affairs, and the State of Upper Austria in the frame of the SCCH competence center INTEGRATE [(FFG grant no. 892418)] part of the FFG COMET Competence Centers for Excellent Technologies Programme. 

\appendix

\section*{Appendix A. KAHM Definitions Restated for Self-Containedness}
For self-containedness, this appendix restates the KAHM quantities used in the main text. The construction follows prior KAHM work~\cite{kumar2025operatortheoreticframeworkgradientfreefederated}. With reference to the KAHM expression (\ref{eq_220420251843}), the following definitions are provided:
\begin{itemize}
\item $\mathbf{P}_{\mathbf{X}} \in \mathbb{R}^{\underline{n} \times n}\: (\underline{n} \in \{1,2,\cdots,n \})$ is an encoding matrix such that product $\mathbf{P}_{\mathbf{X}}x$ is a lower-dimensional (i.e. $\underline{n}-$dimensional) encoding for $x$. The encoding matrix is computed from the data samples $\mathbf{X}$ using Algorithm~\ref{algorithm_encoding_matrix}. The lower-range cutoff $10^{-3}$ in Algorithm~\ref{algorithm_encoding_matrix} is used as a numerical safeguard: projected coordinates whose empirical spread is nearly collapsed are removed. This prevents near-degenerate directions from dominating the covariance-based scaling in the Gaussian kernel of~(\ref{eq_090420261651}) and ensures that the retained encoding dimensions preserve a minimal amount of geometric variation.   
\begin{algorithm}
\caption{Determination of Encoding Matrix $\mathbf{P}_{\mathbf{X}}$}
\begin{algorithmic}[1]
\Require Matrix $\mathbf{X} \in \mathbb{R}^{N \times n}$, equivalently represented as dataset $\{x^i \in \mathbb{R}^n\}_{i=1}^{N}$.
\State $\underline{n} \gets \min(20,n,N-1)$.
\State  Define $\mathbf{P}_{\mathbf{X}} \in \mathbb{R}^{\underline{n} \times n}$ such that the $i^{th}$ row of $\mathbf{P}_{\mathbf{X}}$ is equal to transpose of eigenvector corresponding to $i-$th largest eigenvalue of sample covariance matrix of samples $\{x^1,\cdots,x^N \}$. 
\While{$\mathop{\min}_{1\leq j \leq \underline{n}}\left( \mathop{\max}_{1 \leq i \leq N} (\mathbf{P}_{\mathbf{X}}x^i)_j  - \mathop{\min}_{1 \leq i \leq N} (\mathbf{P}_{\mathbf{X}}x^i)_j \right) < 1\mathrm{e}{-3}$}
\State  $\underline{n} \gets \underline{n}-1$.
\State  Define $\mathbf{P}_{\mathbf{X}} \in \mathbb{R}^{\underline{n} \times n}$ such that the $i^{th}$ row of $\mathbf{P}_{\mathbf{X}}$ is equal to transpose of eigenvector corresponding to $i^{th}$ largest eigenvalue of sample covariance matrix of dataset $\{x^1,\cdots,x^N \}$. 
\EndWhile
\State \Return $\mathbf{P}_{\mathbf{X}}$.
\end{algorithmic}  
\label{algorithm_encoding_matrix}
\end{algorithm}
\item We have
   \begin{IEEEeqnarray}{rCl} 
\underline{\mathcal{X}} & := & \{ \mathbf{P}_{\mathbf{X}} x \; \mid \; x \in \mathbb{R}^n \},
   \end{IEEEeqnarray} 
and a positive-definite real-valued kernel, $k_{\mathbf{X}}: \underline{\mathcal{X}} \times \underline{\mathcal{X}} \rightarrow \mathbb{R}$ on $\underline{\mathcal{X}}$ with a corresponding reproducing kernel Hilbert space $\mathcal{H}_{k_{\mathbf{X}}}(\underline{\mathcal{X}})$, as 
   \begin{IEEEeqnarray}{rCl}
\label{eq_090420261651}k_{\mathbf{X}}(\underline{x}^i,\underline{x}^j) & := & \exp\left(-\frac{1}{2\underline{n}}(\underline{x}^i-\underline{x}^j)^T\theta_{\mathbf{X}}^{-1}(\underline{x}^i-\underline{x}^j)\right), 
  \end{IEEEeqnarray} 
where $\underline{x}^i,\underline{x}^j \in \underline{\mathcal{X}}$ and $\theta_{\mathbf{X}}  \succ 0$ is sample covariance matrix of dataset $\{\mathbf{P}_{\mathbf{X}}x^1,\cdots,\mathbf{P}_{\mathbf{X}}x^N \}$.
\item The function $h_{\mathbf{X}}^i: \underline{\mathcal{X}} \rightarrow \mathbb{R}$, such that $h_{\mathbf{X}}^i \in \mathcal{H}_{k_{\mathbf{X}}}(\underline{\mathcal{X}})$, approximates the indicator function $\mathbbm{1}_{\{\mathbf{P}_{\mathbf{X}}x^i\}}: \underline{\mathcal{X}} \rightarrow \{0,1 \}$ as the solution of following kernel regularized least squares problem:  
  \begin{IEEEeqnarray}{rCl}
h_{\mathbf{X}}^i & = & \arg \; \min_{f \in \mathcal{H}_{k_{\mathbf{X}}}(\underline{\mathcal{X}})} \; \left( \sum_{j=1}^N \left |\mathbbm{1}_{\{\mathbf{P}_{\mathbf{X}}x^i\}}(\mathbf{P_{\mathbf{X}}}x^j) - f(\mathbf{P}_{\mathbf{X}}x^j) \right |^2 + \lambda_{\mathbf{X}}^* \left \| f \right \|^2_{\mathcal{H}_{k_{\mathbf{X}}}(\underline{\mathcal{X}})} \right), \IEEEeqnarraynumspace
  \end{IEEEeqnarray}
where the regularization parameter $\lambda_{\mathbf{X}}^* \in \mathbb{R}_+$ is given as
  \begin{IEEEeqnarray}{rCl}
\label{eq_020920251811} \lambda_{\mathbf{X}}^* & = &  \hat{e} + \frac{2}{nN}\|\mathbf{X} \|^2_F, 
     \end{IEEEeqnarray}   
where $\hat{e}$ is the unique fixed point of the function $r$ such that
  \begin{IEEEeqnarray}{rCl}
\label{eq_090120230831}\hat{e} & = & r(\hat{e},\frac{2}{nN}\|\mathbf{X} \|^2_F),
     \end{IEEEeqnarray}   
with $r: \mathbb{R}_{+} \times \mathbb{R}_{+} \rightarrow \mathbb{R}_{+}$ defined as
   \begin{IEEEeqnarray}{rCl}
r(e,\tau)& := &   \frac{1}{nN} \sum_{j=1}^n \|(\mathbf{X})_{:,j} - \mathbf{K}_{\mathbf{X}} \left(\mathbf{K}_{\mathbf{X}} + (e+\tau) \mathbf{I}_N \right)^{-1} (\mathbf{X})_{:,j}\|^2,
     \end{IEEEeqnarray} 
where $(\mathbf{I}_N)_{i,:}$ denotes the $i-$th row of identity matrix of size $N$ and $\mathbf{K}_{\mathbf{X}}$ is $N \times N$ kernel matrix with its $(i,j)-$th element defined as
   \begin{IEEEeqnarray}{rCl}
\label{eq_020920251810} (\mathbf{K}_{\mathbf{X}})_{ij}& := & k_{\mathbf{X}}(\mathbf{P}_{\mathbf{X}}x^i,\mathbf{P}_{\mathbf{X}}x^j).
  \end{IEEEeqnarray} 
The following iterations
 \begin{IEEEeqnarray}{rCl}
\label{eq_090420261740}e|_{it+1} & = & r(e|_{it},\frac{2}{nN}\|\mathbf{X} \|^2_F),\; it \in \{0,1,\cdots \}  \\
\label{eq_090420261741}e|_0 & \in & (0,\frac{1}{nN} \|\mathbf{X} \|^2_F)  
  \end{IEEEeqnarray}
converge to $\hat{e}$. By Theorem~1 in \cite{KAHM}, the map $r(\cdot,\tfrac{2}{nN}\|\mathbf{X}\|_F^2)$ is a contraction, so (\ref{eq_090420261740})--(\ref{eq_090420261741}) are the associated fixed-point iterations converging to the unique solution $\hat{e}$. The solution of the kernel regularized least squares problem follows as
  \begin{IEEEeqnarray}{rCl}
\label{eq_010920251549} h_{\mathbf{X}}^i(\cdot) & = & (\mathbf{I}_N)_{i,:} \left(\mathbf{K}_{\mathbf{X}} + \lambda_{\mathbf{X}}^* \mathbf{I}_N \right)^{-1}  \left[\begin{IEEEeqnarraybox*}[][c]{,c/c/c,} k_{\mathbf{X}}(\cdot,\mathbf{P}_{\mathbf{X}}x^1) & \cdots & k_{\mathbf{X}}(\cdot,\mathbf{P}_{\mathbf{X}}x^N)\end{IEEEeqnarraybox*} \right]^T.
  \end{IEEEeqnarray} 
The value $ h_{\mathbf{X}}^i(\mathbf{P}_{\mathbf{X}}x)$ represents the kernel-smoothed membership of point $\mathbf{P}_{\mathbf{X}}x$ to the set $\{ \mathbf{P}_{\mathbf{X}}x^i\}$. Equation~(\ref{eq_010920251549}) is also the point at which the compute-efficient serving path becomes explicit. Once the quantities that depend on the reference set $\mathbf{X}$, including the linear-algebra terms associated with $K_{\mathbf{X}}$, have been precomputed offline, evaluating $h_{\mathbf{X}}^i(\mathbf{P}_{\mathbf{X}}x)$ for a new query requires only the projection $\mathbf{P}_{\mathbf{X}}x$, the corresponding kernel evaluations, and the combination with those precomputed coefficients. Thus, the expensive matrix solve/factorization tied to $\mathbf{K}_{\mathbf{X}}$ is incurred once during model construction rather than at deployment time, which is consistent with the paper's objective of replacing repeated online neural query encoding by a lighter analytical estimator.
\item The image of $\mathcal{A}_{\mathbf{X}}$ defines a region in the affine hull of $\{x^1,\cdots,x^N\}$. That is,
 \begin{IEEEeqnarray}{rCCCl}
 \mathcal{A}_{\mathbf{X}}[\mathbb{R}^n]& := & \{ \mathcal{A}_{\mathbf{X}}(x) \; \mid \; x \in \mathbb{R}^n  \}  & \subset & \mathrm{aff}(\{x^1,\cdots,x^N \}).  
  \end{IEEEeqnarray}   
\end{itemize} 

\section*{Appendix B. Proof of Conditional Expectation as the Solution of Least-Squares Estimation Problem}
Applying Jensen's inequality for conditional expectation in (\ref{eq_170320261347}) gives
\begin{IEEEeqnarray}{rCCCl}
\| f_{x \mapsto z}(x) \|^2 & = & \|\mathop{\mathbb{E}}_{z \sim \mathbb{P}_{z \mid x}}[z \mid x] \|^2 & \leq & \mathop{\mathbb{E}}_{z \sim \mathbb{P}_{z \mid x}} [ \| z\|^2 \mid x] \qquad\text{a.s.}
\end{IEEEeqnarray}
Taking expectations yields
\begin{IEEEeqnarray}{rCCCl}
\mathop{\mathbb{E}}_{x \sim \mathbb{P}_{x }} [\| f_{x \mapsto z}(x) \|^2  ] & \leq & \mathop{\mathbb{E}}_{z \sim \mathbb{P}_{z }}[ \| z\|^2 ] & < & \infty, 
\end{IEEEeqnarray}
so indeed $f_{x \mapsto z} \in \mathbf{F}_{x \mapsto z}$. Let $f \in \mathbf{F}_{x \mapsto z}$ be arbitrary and its risk is given as
\begin{IEEEeqnarray}{rCl}
z-f(x) & = & (z-f_{x \mapsto z}(x) ) + (f_{x \mapsto z}(x) -f(x)).
\end{IEEEeqnarray}
Expanding the squared norm, we obtain
\begin{IEEEeqnarray}{rCl}
\mathop{\mathbb{E}}_{(x,z) \sim \mathbb{P}_{x,z }} [\| z-f(x) \|^2  ] & = & \mathop{\mathbb{E}}_{(x,z) \sim \mathbb{P}_{x,z }} [\| z-f_{x \mapsto z}(x) \|^2 ] + \mathop{\mathbb{E}}_{x \sim \mathbb{P}_{x }} [ \| f_{x \mapsto z}(x) -f(x) \|^2 ]\\
&& {+}\: 2 \mathop{\mathbb{E}}_{(x,z) \sim \mathbb{P}_{x,z}} [(f_{x \mapsto z}(x) -f(x))^T (z-f_{x \mapsto z}(x))  ].
\end{IEEEeqnarray}
Indeed, $f_{x \mapsto z}(x) -f(x)$ is $\sigma(x)$-measurable, and therefore 
\begin{IEEEeqnarray}{rCl}
\mathop{\mathbb{E}}_{(x,z) \sim \mathbb{P}_{x,z}} [ (f_{x \mapsto z}(x) -f(x))^T (z-f_{x \mapsto z}(x))  ] & = & \mathop{\mathbb{E}}_{x \sim \mathbb{P}_{x }}\left[ (f_{x \mapsto z}(x) -f(x))^T \mathop{\mathbb{E}}_{z \sim \mathbb{P}_{z \mid x}} \left[  (z-f_{x \mapsto z}(x)) \mid x  \right ]  \right ].
\end{IEEEeqnarray}
Since $f_{x \mapsto z}(x)=\mathop{\mathbb{E}}_{z \sim \mathbb{P}_{z \mid x}}[z \mid x]$ almost surely and $f_{x \mapsto z}(x)$ is $\sigma(x)$-measurable,
\begin{IEEEeqnarray}{rCCCl}
\mathop{\mathbb{E}}_{z \sim \mathbb{P}_{z \mid x}} \left[  (z-f_{x \mapsto z}(x)) \mid x \right ] & = & \mathop{\mathbb{E}}_{z \sim \mathbb{P}_{z \mid x}} \left[  z \mid x \right ] - f_{x \mapsto z}(x) & = & 0 \qquad\text{a.s.}
\end{IEEEeqnarray}
Hence $\mathop{\mathbb{E}}_{(x,z) \sim \mathbb{P}_{x,z }} [(z-f_{x \mapsto z}(x))^T (f_{x \mapsto z}(x) -f(x))  ] = 0$, and thus
\begin{IEEEeqnarray}{rCl}
\mathop{\mathbb{E}}_{(x,z) \sim \mathbb{P}_{x,z}} [\| z-f(x) \|^2  ] & = & \mathop{\mathbb{E}}_{(x,z) \sim \mathbb{P}_{x,z }} [\| z-f_{x \mapsto z}(x) \|^2 ] + \mathop{\mathbb{E}}_{x \sim \mathbb{P}_{x }} [ \| f_{x \mapsto z}(x) -f(x) \|^2 ].
\end{IEEEeqnarray}
It follows that
\begin{IEEEeqnarray}{rCl}
\mathop{\mathbb{E}}_{(x,z) \sim \mathbb{P}_{x,z }} [\| z-f(x) \|^2  ] & \geq & \mathop{\mathbb{E}}_{(x,z) \sim \mathbb{P}_{x,z }} [\| z-f_{x \mapsto z}(x) \|^2 ] \qquad\text{for all } f \in \mathbf{F}_{x \mapsto z}.
\end{IEEEeqnarray}
Therefore, (\ref{eq_170320261338}) follows.
\section*{Appendix C. List of Austrian Statute Abbreviations and English Glosses}
\begin{longtable}{p{0.12\textwidth}p{0.43\textwidth}p{0.35\textwidth}}
\caption{Austrian law abbreviations used in the retrieval benchmark. The English titles are descriptive glosses rather than official translations.}
\label{tab:austrian-law-abbreviations}\\
\hline
\textbf{Abbrev.} & \textbf{German short title} & \textbf{English gloss} \\
\hline
\endfirsthead

\hline
\textbf{Abbrev.} & \textbf{German short title} & \textbf{English gloss} \\
\hline
\endhead

\hline
\endfoot

ABGB   & Allgemeines bürgerliches Gesetzbuch & General Civil Code \\
APG    & Allgemeines Pensionsgesetz & General Pensions Act \\
ASVG   & Allgemeines Sozialversicherungsgesetz & General Social Insurance Act \\
AVG    & Allgemeines Verwaltungsverfahrensgesetz & General Administrative Procedure Act \\
AVRAG  & Arbeitsvertragsrechts-Anpassungsgesetz & Employment Contract Law Adaptation Act \\
AWG    & Abfallwirtschaftsgesetz & Waste Management Act \\
AZG    & Arbeitszeitgesetz & Working Hours Act \\
AktG   & Aktiengesetz & Stock Corporation Act \\
AlVG   & Arbeitslosenversicherungsgesetz & Unemployment Insurance Act \\
AngG   & Angestelltengesetz & Salaried Employees Act \\
ArbVG  & Arbeitsverfassungsgesetz & Labour Constitution Act \\
AsylG  & Asylgesetz & Asylum Act \\
AußStrG & Außerstreitgesetz & Non-Contentious Proceedings Act \\
AÜG    & Arbeitskräfteüberlassungsgesetz & Temporary Agency Work Act \\
B-VG   & Bundes-Verfassungsgesetz & Federal Constitutional Law \\
BAO    & Bundesabgabenordnung & Federal Fiscal Code \\
BEinstG & Behinderteneinstellungsgesetz & Disabled Persons Employment Act \\
BWG    & Bankwesengesetz & Banking Act \\
DSG    & Datenschutzgesetz & Data Protection Act \\
DSGVO  & Datenschutz-Grundverordnung & General Data Protection Regulation (GDPR) \\
ECG    & E-Commerce-Gesetz & E-Commerce Act \\
EKHG   & Eisenbahn- und Kraftfahrzeughaftpflichtgesetz & Railway and Motor Vehicle Liability Act \\
EO     & Exekutionsordnung & Enforcement Code \\
EStG   & Einkommensteuergesetz & Income Tax Act \\
EheG   & Ehegesetz & Marriage Act \\
ElWOG  & Elektrizitätswirtschafts- und -organisationsgesetz & Electricity Industry and Organisation Act \\
FAGG   & Fern- und Auswärtsgeschäfte-Gesetz & Distance and Off-Premises Contracts Act \\
FPG    & Fremdenpolizeigesetz & Aliens Police Act \\
FSG    & Führerscheingesetz & Driving Licence Act \\
FinStrG & Finanzstrafgesetz & Fiscal Penal Act \\
GSVG   & Gewerbliches Sozialversicherungsgesetz & Social Insurance Act for the Self-Employed \\
GewO   & Gewerbeordnung & Trade Regulation Act \\
GlBG   & Gleichbehandlungsgesetz & Equal Treatment Act \\
GmbHG  & GmbH-Gesetz & Limited Liability Companies Act \\
GrEStG & Grunderwerbsteuergesetz & Real Estate Transfer Tax Act \\
IG-L   & Immissionsschutzgesetz-Luft & Air Pollution Control Act \\
IO     & Insolvenzordnung & Insolvency Code \\
KAKuG  & Krankenanstalten- und Kuranstaltengesetz & Hospitals and Health Resorts Act \\
KBGG   & Kinderbetreuungsgeldgesetz & Childcare Allowance Act \\
KFG    & Kraftfahrgesetz & Motor Vehicles Act \\
KJBG   & Kinder- und Jugendlichen-Beschäftigungsgesetz & Children and Young Persons Employment Act \\
KSchG  & Konsumentenschutzgesetz & Consumer Protection Act \\
KStG   & Körperschaftsteuergesetz & Corporate Income Tax Act \\
KartG  & Kartellgesetz & Cartel Act \\
KommStG & Kommunalsteuergesetz & Municipal Tax Act \\
MRG    & Mietrechtsgesetz & Tenancy Act \\
MSchG  & Markenschutzgesetz & Trademark Act \\
MeldeG & Meldegesetz & Residence Registration Act \\
NAG    & Niederlassungs- und Aufenthaltsgesetz & Settlement and Residence Act \\
NoVAG  & Normverbrauchsabgabegesetz & Standard Consumption Tax Act \\
PSG    & Privatstiftungsgesetz & Private Foundations Act \\
PatG   & Patentgesetz & Patent Act \\
SH-GG  & Sozialhilfe-Grundsatzgesetz & Social Assistance Basic Principles Act \\
SMG    & Suchtmittelgesetz & Narcotic Drugs Act \\
SPG    & Sicherheitspolizeigesetz & Security Police Act \\
SchUG  & Schulunterrichtsgesetz & School Education Act \\
StGB   & Strafgesetzbuch & Criminal Code \\
StPO   & Strafprozessordnung & Code of Criminal Procedure \\
StVO   & Straßenverkehrsordnung & Road Traffic Code \\
StbG   & Staatsbürgerschaftsgesetz & Citizenship Act \\
TKG    & Telekommunikationsgesetz & Telecommunications Act \\
TSchG  & Tierschutzgesetz & Animal Protection Act \\
UG     & Universitätsgesetz & Universities Act \\
UGB    & Unternehmensgesetzbuch & Enterprise Code \\
UStG   & Umsatzsteuergesetz & Value Added Tax Act \\
UVP-G  & Umweltverträglichkeitsprüfungsgesetz & Environmental Impact Assessment Act \\
UWG    & Gesetz gegen den unlauteren Wettbewerb & Unfair Competition Act \\
UrhG   & Urheberrechtsgesetz & Copyright Act \\
UrlG   & Urlaubsgesetz & Annual Leave Act \\
VKG    & Väter-Karenzgesetz & Fathers' Leave Act \\
VStG   & Verwaltungsstrafgesetz & Administrative Penal Act \\
VbVG   & Verbandsverantwortlichkeitsgesetz & Corporate Criminal Liability Act \\
VerG   & Vereinsgesetz & Associations Act \\
VersG  & Versammlungsgesetz & Assembly Act \\
VersVG & Versicherungsvertragsgesetz & Insurance Contract Act \\
VfGG   & Verfassungsgerichtshofgesetz & Constitutional Court Act \\
VwGG   & Verwaltungsgerichtshofgesetz & Administrative Court Act \\
VwGVG  & Verwaltungsgerichtsverfahrensgesetz & Administrative Courts Procedure Act \\
WEG    & Wohnungseigentumsgesetz & Condominium Act \\
WGG    & Wohnungsgemeinnützigkeitsgesetz & Limited-Profit Housing Act \\
WRG    & Wasserrechtsgesetz & Water Rights Act \\
WaffG  & Waffengesetz & Weapons Act \\
ZPO    & Zivilprozessordnung & Code of Civil Procedure \\
ZustG  & Zustellgesetz & Service of Documents Act \\
\end{longtable}

\bibliographystyle{ACM-Reference-Format}  
\bibliography{sample-base.bib}

@String{Computing = "Computing" }

@String{Computer = "{IEEE} Computer" }

@String{Springer = "Springer-Verlag" }

@Article{kumar2024geometricallyinspiredkernelmachines,
author = {Kumar, Mohit and Valentinitsch, Alexander and Fuchs, Magdalena and Brucker, Mathias and Bowles, Juliana and Husakovic, Adnan and Abbas, Ali and Moser, Bernhard A.},
title = {Geometrically Inspired Kernel Machines for Collaborative Learning Beyond Gradient Descent},
year = {2025},
issue_date = {Aug 2025},
publisher = {AI Access Foundation},
address = {El Segundo, CA, USA},
volume = {83},
url = {https://doi.org/10.1613/jair.1.16821},
doi = {10.1613/jair.1.16821},
journal = {Journal of Artificial Intelligence Research},
month = jul,
numpages = {35}
}

@misc{kumar2025operatortheoreticframeworkgradientfreefederated,
      title={Operator-Theoretic Framework for Gradient-Free Federated Learning}, 
      author={Mohit Kumar and Mathias Brucker and Alexander Valentinitsch and Adnan Husakovic and Ali Abbas and Manuela Geiß and Bernhard A. Moser},
      year={2025},
      url={https://arxiv.org/abs/2512.01025}
}

@article{Hassibi1996HinfLMS,
  author  = {Hassibi, Babak and Sayed, Ali H. and Kailath, Thomas},
  title   = {{$H^\infty$ Optimality of the {LMS} Algorithm}},
  journal = {IEEE Transactions on Signal Processing},
  volume  = {44},
  number  = {2},
  pages   = {267--280},
  year    = {1996},
  doi     = {10.1109/78.485923}
}

@article{KAHM,
  author  = {Kumar, Mohit and Moser, Bernhard A. and Fischer, Lukas},
  title   = {On Mitigating the Utility-Loss in Differentially Private Learning: A New Perspective by a Geometrically Inspired Kernel Approach},
  journal = {Journal of Artificial Intelligence Research},
  volume  = {79},
  pages   = {515--567},
  year    = {2024},
  doi     = {10.1613/jair.1.15071}
}

@inproceedings{khattab2020colbert,
  author    = {Khattab, Omar and Zaharia, Matei},
  title     = {ColBERT: Efficient and Effective Passage Search via Contextualized Late Interaction over BERT},
  booktitle = {Proceedings of the 43rd International ACM SIGIR Conference on Research and Development in Information Retrieval},
  year      = {2020},
  pages     = {39--48},
  publisher = {Association for Computing Machinery},
  address   = {New York, NY, USA},
  doi       = {10.1145/3397271.3401075}
}

@inproceedings{formal2021splade,
  author    = {Formal, Thibault and Piwowarski, Benjamin and Clinchant, St{\'e}phane},
  title     = {SPLADE: Sparse Lexical and Expansion Model for First Stage Ranking},
  booktitle = {Proceedings of the 44th International ACM SIGIR Conference on Research and Development in Information Retrieval},
  year      = {2021},
  pages     = {2288--2292},
  publisher = {Association for Computing Machinery},
  address   = {New York, NY, USA},
  doi       = {10.1145/3404835.3463098}
}

@article{sanh2019distilbert,
  author  = {Sanh, Victor and Debut, Lysandre and Chaumond, Julien and Wolf, Thomas},
  title   = {DistilBERT, a Distilled Version of BERT: Smaller, Faster, Cheaper and Lighter},
  journal = {arXiv preprint arXiv:1910.01108},
  year    = {2019},
  doi     = {10.48550/arXiv.1910.01108}
}

@inproceedings{jiao2020tinybert,
  author    = {Jiao, Xiaoqi and Yin, Yichun and Shang, Lifeng and Jiang, Xin and Chen, Xiao and Li, Linlin and Wang, Fang and Liu, Qun},
  title     = {TinyBERT: Distilling BERT for Natural Language Understanding},
  booktitle = {Findings of the Association for Computational Linguistics: EMNLP 2020},
  year      = {2020},
  pages     = {4163--4174},
  publisher = {Association for Computational Linguistics},
  address   = {Online},
  doi       = {10.18653/v1/2020.findings-emnlp.372}
}

@online{ris_austria_portal,
  author       = {{Republic of Austria}},
  title        = {Legal Information System of the Republic of Austria (RIS)},
  year         = {2026},
  url          = {https://www.ris.bka.gv.at/},
  urldate      = {2026-04-04},
  note         = {Official portal for Austrian legal information}
}

@online{mxbai_model_card,
  author       = {{Mixedbread} and {deepset}},
  title        = {mixedbread-ai/deepset-mxbai-embed-de-large-v1},
  year         = {2024},
  url          = {https://huggingface.co/mixedbread-ai/deepset-mxbai-embed-de-large-v1},
  urldate      = {2026-04-04},
  note         = {Hugging Face model card}
}

@online{mxbai_blog,
  author       = {Lee, Sean and Shakir, Aamir and Lipp, Julius and Koenig, Darius},
  title        = {Open Source Gets DE-licious: Mixedbread x deepset German/English Embeddings},
  year         = {2024},
  url          = {https://www.mixedbread.com/blog/deepset-mxbai-embed-de-large-v1},
  urldate      = {2026-04-04},
  note         = {Mixedbread blog post}
}

@online{multilingual_e5_model_card,
  author       = {{intfloat}},
  title        = {multilingual-e5-large},
  year         = {2024},
  url          = {https://huggingface.co/intfloat/multilingual-e5-large},
  urldate      = {2026-04-04},
  note         = {Hugging Face model card}
}

@article{multilingual_e5_techreport,
  author       = {Wang, Liang and Yang, Nan and Huang, Xiaolong and Yang, Linjun and Majumder, Rangan and Wei, Furu},
  title        = {Multilingual E5 Text Embeddings: A Technical Report},
  journal      = {arXiv preprint arXiv:2402.05672},
  year         = {2024},
  doi          = {10.48550/arXiv.2402.05672},
  url          = {https://arxiv.org/abs/2402.05672}
}

@article{angle_paper,
  author       = {Li, Xianming and Li, Jing},
  title        = {AnglE-optimized Text Embeddings},
  journal      = {arXiv preprint arXiv:2309.12871},
  year         = {2023},
  doi          = {10.48550/arXiv.2309.12871},
  url          = {https://arxiv.org/abs/2309.12871}
}

@inproceedings{matryoshka_paper,
  author       = {Kusupati, Aditya and Bhatt, Gantavya and Rege, Aniket and Wallingford, Matthew and Sinha, Aditya and Ramanujan, Vivek and Howard-Snyder, William and Chen, Kaifeng and Kakade, Sham and Jain, Prateek and Farhadi, Ali},
  title        = {Matryoshka Representation Learning},
  booktitle    = {Advances in Neural Information Processing Systems},
  year         = {2022},
  url          = {https://arxiv.org/abs/2205.13147}
}

@article{faiss_library,
  author       = {Douze, Matthijs and Guzhva, Alexandr and Deng, Chengqi and Johnson, Jeff and Szilvasy, Gergely and Mazar{\'e}, Pierre-Emmanuel and Lomeli, Maria and Hosseini, Lucas and J{\'e}gou, Herv{\'e}},
  title        = {The Faiss Library},
  journal      = {arXiv preprint arXiv:2401.08281},
  year         = {2024},
  doi          = {10.48550/arXiv.2401.08281},
  url          = {https://arxiv.org/abs/2401.08281}
}

@article{salton1988term,
  author  = {Salton, Gerard and Buckley, Christopher},
  title   = {Term-weighting approaches in automatic text retrieval},
  journal = {Information Processing \& Management},
  volume  = {24},
  number  = {5},
  pages   = {513--523},
  year    = {1988},
  doi     = {10.1016/0306-4573(88)90021-0}
}

@article{halko2011finding,
  author  = {Halko, Nathan and Martinsson, Per-Gunnar and Tropp, Joel A.},
  title   = {Finding Structure with Randomness: Probabilistic Algorithms for Constructing Approximate Matrix Decompositions},
  journal = {SIAM Review},
  volume  = {53},
  number  = {2},
  pages   = {217--288},
  year    = {2011},
  doi     = {10.1137/090771806}
}

@inproceedings{wolf2020transformers,
  author    = {Wolf, Thomas and Debut, Lysandre and Sanh, Victor and Chaumond, Julien and Delangue, Clement and Moi, Anthony and Cistac, Pierric and Rault, Tim and Louf, Remi and Funtowicz, Morgan and Davison, Joe and Shleifer, Sam and von Platen, Patrick and Ma, Clara and Jernite, Yacine and Plu, Julien and Xu, Canwen and Le Scao, Teven and Gugger, Sylvain and Drame, Mariama and Lhoest, Quentin and Rush, Alexander},
  title     = {Transformers: State-of-the-Art Natural Language Processing},
  booktitle = {Proceedings of the 2020 Conference on Empirical Methods in Natural Language Processing: System Demonstrations},
  pages     = {38--45},
  year      = {2020},
  address   = {Online},
  publisher = {Association for Computational Linguistics},
  doi       = {10.18653/v1/2020.emnlp-demos.6},
  url       = {https://aclanthology.org/2020.emnlp-demos.6/}
}

@inproceedings{vaswani2017attention,
  author    = {Vaswani, Ashish and Shazeer, Noam and Parmar, Niki and Uszkoreit, Jakob and Jones, Llion and Gomez, Aidan N. and Kaiser, {\L}ukasz and Polosukhin, Illia},
  title     = {Attention Is All You Need},
  booktitle = {Advances in Neural Information Processing Systems},
  volume    = {30},
  year      = {2017}
}

@inproceedings{devlin2019bert,
  author    = {Devlin, Jacob and Chang, Ming-Wei and Lee, Kenton and Toutanova, Kristina},
  title     = {{BERT}: Pre-training of Deep Bidirectional Transformers for Language Understanding},
  booktitle = {Proceedings of the 2019 Conference of the North American Chapter of the Association for Computational Linguistics: Human Language Technologies, Volume 1 (Long and Short Papers)},
  pages     = {4171--4186},
  year      = {2019},
  doi       = {10.18653/v1/N19-1423}
}

@inproceedings{sentence_bert,
  author    = {Reimers, Nils and Gurevych, Iryna},
  title     = {Sentence-{BERT}: Sentence Embeddings using Siamese {BERT}-Networks},
  booktitle = {Proceedings of the 2019 Conference on Empirical Methods in Natural Language Processing and the 9th International Joint Conference on Natural Language Processing (EMNLP-IJCNLP)},
  pages     = {3982--3992},
  year      = {2019},
  doi       = {10.18653/v1/D19-1410}
}

@inproceedings{karpukhin2020dpr,
  author    = {Karpukhin, Vladimir and Oguz, Barlas and Min, Sewon and Lewis, Patrick and Wu, Ledell and Edunov, Sergey and Chen, Danqi and Yih, Wen-tau},
  title     = {Dense Passage Retrieval for Open-Domain Question Answering},
  booktitle = {Proceedings of the 2020 Conference on Empirical Methods in Natural Language Processing (EMNLP)},
  pages     = {6769--6781},
  year      = {2020},
  doi       = {10.18653/v1/2020.emnlp-main.550}
}

@article{deerwester1990indexing,
  author  = {Deerwester, Scott and Dumais, Susan T. and Furnas, George W. and Landauer, Thomas K. and Harshman, Richard},
  title   = {Indexing by Latent Semantic Analysis},
  journal = {Journal of the American Society for Information Science},
  volume  = {41},
  number  = {6},
  pages   = {391--407},
  year    = {1990},
  doi     = {10.1002/(SICI)1097-4571(199009)41:6<391::AID-ASI1>3.0.CO;2-9}
}

@misc{hinton2015distilling,
  author       = {Hinton, Geoffrey and Vinyals, Oriol and Dean, Jeff},
  title        = {Distilling the Knowledge in a Neural Network},
  year         = {2015},
  eprint       = {1503.02531},
  archivePrefix= {arXiv},
  primaryClass = {cs.LG}
}

@article{aronszajn1950theory,
  author  = {Aronszajn, Nachman},
  title   = {Theory of Reproducing Kernels},
  journal = {Transactions of the American Mathematical Society},
  volume  = {68},
  number  = {3},
  pages   = {337--404},
  year    = {1950},
  doi     = {10.1090/S0002-9947-1950-0051437-7}
}

@book{scholkopf2002learning,
  author    = {Sch{\"o}lkopf, Bernhard and Smola, Alexander J.},
  title     = {Learning with Kernels: Support Vector Machines, Regularization, Optimization, and Beyond},
  publisher = {MIT Press},
  address   = {Cambridge, MA},
  year      = {2002}
}

@article{hofmann2008kernel,
  author  = {Hofmann, Thomas and Sch{\"o}lkopf, Bernhard and Smola, Alexander J.},
  title   = {Kernel Methods in Machine Learning},
  journal = {The Annals of Statistics},
  volume  = {36},
  number  = {3},
  pages   = {1171--1220},
  year    = {2008},
  doi     = {10.1214/009053607000000677}
}

@book{bishop2006prml,
  author    = {Bishop, Christopher M.},
  title     = {Pattern Recognition and Machine Learning},
  publisher = {Springer},
  address   = {New York},
  year      = {2006}
}

@article{thakur2021beir,
  author  = {Thakur, Nandan and Reimers, Nils and R{\"u}ckl{\'e}, Andreas and Srivastava, Abhishek and Gurevych, Iryna},
  title   = {{BEIR}: A Heterogenous Benchmark for Zero-shot Evaluation of Information Retrieval Models},
  journal = {arXiv preprint arXiv:2104.08663},
  year    = {2021},
  doi     = {10.48550/arXiv.2104.08663},
  url     = {https://arxiv.org/abs/2104.08663}
}

@inproceedings{muennighoff2023mteb,
  author    = {Muennighoff, Niklas and Tazi, Nouamane and Magne, Loic and Reimers, Nils},
  title     = {{MTEB}: Massive Text Embedding Benchmark},
  booktitle = {Proceedings of the 17th Conference of the European Chapter of the Association for Computational Linguistics},
  year      = {2023},
  pages     = {2014--2037},
  address   = {Dubrovnik, Croatia},
  publisher = {Association for Computational Linguistics},
  doi       = {10.18653/v1/2023.eacl-main.148},
  url       = {https://aclanthology.org/2023.eacl-main.148/}
}

@article{wang2022e5,
  author  = {Wang, Liang and Yang, Nan and Huang, Xiaolong and Jiao, Binxing and Yang, Linjun and Jiang, Daxin and Majumder, Rangan and Wei, Furu},
  title   = {Text Embeddings by Weakly-Supervised Contrastive Pre-training},
  journal = {arXiv preprint arXiv:2212.03533},
  year    = {2022},
  doi     = {10.48550/arXiv.2212.03533},
  url     = {https://arxiv.org/abs/2212.03533}
}

@article{su2022instructor,
  author  = {Su, Hongjin and Shi, Weijia and Kasai, Jungo and Wang, Yizhong and Hu, Yushi and Ostendorf, Mari and Yih, Wen-tau and Smith, Noah A. and Zettlemoyer, Luke and Yu, Tao},
  title   = {One Embedder, Any Task: Instruction-Finetuned Text Embeddings},
  journal = {arXiv preprint arXiv:2212.09741},
  year    = {2022},
  doi     = {10.48550/arXiv.2212.09741},
  url     = {https://arxiv.org/abs/2212.09741}
}

@inproceedings{chen2024m3,
  author    = {Chen, Jianlyu and Xiao, Shitao and Zhang, Peitian and Luo, Kun and Lian, Defu and Liu, Zheng},
  title     = {{M}3-Embedding: Multi-Linguality, Multi-Functionality, Multi-Granularity Text Embeddings Through Self-Knowledge Distillation},
  booktitle = {Findings of the Association for Computational Linguistics: ACL 2024},
  year      = {2024},
  pages     = {2318--2335},
  address   = {Bangkok, Thailand},
  publisher = {Association for Computational Linguistics},
  doi       = {10.18653/v1/2024.findings-acl.137},
  url       = {https://aclanthology.org/2024.findings-acl.137/}
}

@inproceedings{sakai2006bootstrap,
  author    = {Sakai, Tetsuya},
  title     = {Evaluating Evaluation Metrics Based on the Bootstrap},
  booktitle = {Proceedings of the 29th Annual International ACM SIGIR Conference on Research and Development in Information Retrieval},
  year      = {2006},
  pages     = {525--532},
  publisher = {Association for Computing Machinery},
  address   = {New York, NY, USA},
  doi       = {10.1145/1148170.1148261}
}

@book{sayed2008adaptive,
  author    = {Sayed, Ali H.},
  title     = {Adaptive Filters},
  publisher = {Wiley},
  address   = {Hoboken, NJ},
  year      = {2008}
}

@book{haykin2014adaptive,
  author    = {Haykin, Simon},
  title     = {Adaptive Filter Theory},
  edition   = {5},
  publisher = {Pearson},
  year      = {2014}
}

@article{roberts2017crossvalidation,
  author  = {Roberts, David R. and Bahn, Volker and Ciuti, Simone and Boyce, Mark S. and Elith, Jane and Guillera-Arroita, Gurutzeta and Hauenstein, Severin and Lahoz-Monfort, Jos{\'e} J. and Schr{\"o}der, Boris and Thuiller, Wilfried and Warton, David I. and Wintle, Brendan A. and Hartig, Florian and Dormann, Carsten F.},
  title   = {Cross-validation strategies for data with temporal, spatial, hierarchical, or phylogenetic structure},
  journal = {Ecography},
  volume  = {40},
  number  = {8},
  pages   = {913--929},
  year    = {2017},
  doi     = {10.1111/ecog.02881}
}

@inproceedings{hofstaetter2021efficiently,
  author    = {Hofst{\"a}tter, Sebastian and Lin, Sheng-Chieh and Yang, Jheng-Hong and Lin, Jimmy and Hanbury, Allan},
  title     = {Efficiently Teaching an Effective Dense Retriever with Balanced Topic Aware Sampling},
  booktitle = {Proceedings of the 44th International ACM SIGIR Conference on Research and Development in Information Retrieval},
  pages     = {113--122},
  year      = {2021},
  publisher = {Association for Computing Machinery},
  address   = {New York, NY, USA},
  doi       = {10.1145/3404835.3462891}
}

@article{hoerl1970ridge,
  author  = {Hoerl, Arthur E. and Kennard, Robert W.},
  title   = {Ridge Regression: Biased Estimation for Nonorthogonal Problems},
  journal = {Technometrics},
  volume  = {12},
  number  = {1},
  pages   = {55--67},
  year    = {1970},
  doi     = {10.1080/00401706.1970.10488634}
}

@inproceedings{macqueen1967some,
  author    = {MacQueen, J.},
  title     = {Some Methods for Classification and Analysis of Multivariate Observations},
  booktitle = {Proceedings of the Fifth Berkeley Symposium on Mathematical Statistics and Probability, Volume 1: Statistics},
  pages     = {281--297},
  year      = {1967},
  address   = {Berkeley, CA},
  publisher = {University of California Press},
  url       = {https://projecteuclid.org/euclid.bsmsp/1200512992}
}

@article{rumelhart1986learning,
  author  = {Rumelhart, David E. and Hinton, Geoffrey E. and Williams, Ronald J.},
  title   = {Learning Representations by Back-Propagating Errors},
  journal = {Nature},
  volume  = {323},
  number  = {6088},
  pages   = {533--536},
  year    = {1986},
  doi     = {10.1038/323533a0}
}

@article{pedregosa2011scikit,
  author  = {Pedregosa, Fabian and Varoquaux, Ga{\"e}l and Gramfort, Alexandre and Michel, Vincent and Thirion, Bertrand and Grisel, Olivier and Blondel, Mathieu and Prettenhofer, Peter and Weiss, Ron and Dubourg, Vincent and VanderPlas, Jake and Passos, Alexandre and Cournapeau, David and Brucher, Matthieu and Perrot, Matthieu and Duchesnay, {\'E}douard},
  title   = {Scikit-learn: Machine Learning in Python},
  journal = {Journal of Machine Learning Research},
  volume  = {12},
  pages   = {2825--2830},
  year    = {2011}
}

@article{izacard2022unsupervised,
  title   = {Unsupervised Dense Information Retrieval with Contrastive Learning},
  author  = {Izacard, Gautier and Caron, Mathilde and Hosseini, Lucas and Riedel, Sebastian and Bojanowski, Piotr and Joulin, Armand and Grave, Edouard},
  journal = {Transactions on Machine Learning Research},
  year    = {2022},
  url     = {https://openreview.net/forum?id=jKN1pXi7b0}
}

@inproceedings{ni2022large,
  title     = {Large Dual Encoders Are Generalizable Retrievers},
  author    = {Ni, Jianmo and Qu, Chen and Lu, Jing and Dai, Zhuyun and Hern{\'a}ndez {\'A}brego, Gustavo and Zhao, Vincent Y. and Luan, Yi and Hall, Keith B. and Chang, Ming-Wei and Yang, Yinfei},
  booktitle = {Proceedings of the 2022 Conference on Empirical Methods in Natural Language Processing},
  pages     = {9844--9855},
  year      = {2022},
  address   = {Abu Dhabi, United Arab Emirates},
  publisher = {Association for Computational Linguistics},
  doi       = {10.18653/v1/2022.emnlp-main.669},
  url       = {https://aclanthology.org/2022.emnlp-main.669/}
}

@inproceedings{santhanam2022colbertv2,
  title     = {{ColBERT}v2: Effective and Efficient Retrieval via Lightweight Late Interaction},
  author    = {Santhanam, Keshav and Khattab, Omar and Saad-Falcon, Jon and Potts, Christopher and Zaharia, Matei},
  booktitle = {Proceedings of the 2022 Conference of the North American Chapter of the Association for Computational Linguistics: Human Language Technologies},
  pages     = {3715--3734},
  year      = {2022},
  address   = {Seattle, United States},
  publisher = {Association for Computational Linguistics},
  doi       = {10.18653/v1/2022.naacl-main.272},
  url       = {https://aclanthology.org/2022.naacl-main.272/}
}

@inproceedings{lee2025nvembed,
  title     = {{NV-Embed}: Improved Techniques for Training {LLM}s as Generalist Embedding Models},
  author    = {Lee, Chankyu and Roy, Rajarshi and Xu, Mengyao and Raiman, Jonathan and Shoeybi, Mohammad and Catanzaro, Bryan and Ping, Wei},
  booktitle = {The Thirteenth International Conference on Learning Representations},
  year      = {2025},
  note      = {Spotlight},
  url       = {https://openreview.net/forum?id=lgsyLSsDRe}
}

@inproceedings{feng2024legal,
  title     = {Legal Case Retrieval: A Survey of the State of the Art},
  author    = {Feng, Yi and Li, Chuanyi and Ng, Vincent},
  booktitle = {Proceedings of the 62nd Annual Meeting of the Association for Computational Linguistics (Volume 1: Long Papers)},
  pages     = {6472--6485},
  year      = {2024},
  address   = {Bangkok, Thailand},
  publisher = {Association for Computational Linguistics},
  doi       = {10.18653/v1/2024.acl-long.350},
  url       = {https://aclanthology.org/2024.acl-long.350/}
}

@inproceedings{deng2024learning,
  title     = {Learning Interpretable Legal Case Retrieval via Knowledge-Guided Case Reformulation},
  author    = {Deng, Chenlong and Mao, Kelong and Dou, Zhicheng},
  booktitle = {Proceedings of the 2024 Conference on Empirical Methods in Natural Language Processing},
  pages     = {1253--1265},
  year      = {2024},
  address   = {Miami, Florida, USA},
  publisher = {Association for Computational Linguistics},
  doi       = {10.18653/v1/2024.emnlp-main.73},
  url       = {https://aclanthology.org/2024.emnlp-main.73/}
}

@inproceedings{gao2024enhancinglegal,
  title     = {Enhancing Legal Case Retrieval via Scaling High-quality Synthetic Query-Candidate Pairs},
  author    = {Gao, Cheng and Xiao, Chaojun and Liu, Zhenghao and Chen, Huimin and Liu, Zhiyuan and Sun, Maosong},
  booktitle = {Proceedings of the 2024 Conference on Empirical Methods in Natural Language Processing},
  year      = {2024},
  publisher = {Association for Computational Linguistics},
  doi       = {10.18653/v1/2024.emnlp-main.402},
  url       = {https://aclanthology.org/2024.emnlp-main.402/}
}

@inproceedings{zheng2025reasoning,
  title     = {A Reasoning-Focused Legal Retrieval Benchmark},
  author    = {Zheng, Lucia and Guha, Neel and Arifov, Javokhir and Zhang, Sarah and Skreta, Michal and Manning, Christopher D. and Henderson, Peter and Ho, Daniel E.},
  booktitle = {Proceedings of the 4th ACM Symposium on Computer Science and Law},
  year      = {2025},
  doi       = {10.1145/3709025.3712219}
}

@inproceedings{zeng2022curriculum,
  author    = {Zeng, Hansi and Zamani, Hamed and Vinay, Vishwa},
  title     = {Curriculum Learning for Dense Retrieval Distillation},
  booktitle = {Proceedings of the 45th International ACM SIGIR Conference on Research and Development in Information Retrieval},
  pages     = {1979--1983},
  year      = {2022},
  publisher = {Association for Computing Machinery},
  address   = {New York, NY, USA},
  doi       = {10.1145/3477495.3531791}
}

@inproceedings{tao2024adam,
  author    = {Tao, Chongyang and Liu, Chang and Shen, Tao and Xu, Can and Geng, Xiubo and Jiao, Binxing and Jiang, Daxin},
  title     = {{ADAM}: Dense Retrieval Distillation with Adaptive Dark Examples},
  booktitle = {Findings of the Association for Computational Linguistics: ACL 2024},
  pages     = {11639--11651},
  year      = {2024},
  address   = {Bangkok, Thailand},
  publisher = {Association for Computational Linguistics},
  doi       = {10.18653/v1/2024.findings-acl.692},
  url       = {https://aclanthology.org/2024.findings-acl.692/}
}

@article{song2013kernelembeddings,
author  = {Song, Le and Fukumizu, Kenji and Gretton, Arthur},
title   = {Kernel Embeddings of Conditional Distributions: A Unified Kernel Framework for Nonparametric Inference in Graphical Models},
journal = {IEEE Signal Processing Magazine},
volume  = {30},
number  = {4},
pages   = {98--111},
year    = {2013},
doi     = {10.1109/MSP.2013.2252713}
}

@article{fukumizu2013kernelbayes,
author  = {Fukumizu, Kenji and Song, Le and Gretton, Arthur},
title   = {Kernel {Bayes}' Rule: Bayesian Inference with Positive Definite Kernels},
journal = {Journal of Machine Learning Research},
volume  = {14},
pages   = {3753--3783},
year    = {2013}
}

\end{document}